\documentclass{article} % For LaTeX2e
\usepackage{iclr2027_conference,times}

\usepackage{amsmath,amsfonts,bm}

\def\eqref#1{equation~\ref{#1}}
\def\1{\bm{1}}

\DeclareMathAlphabet{\mathsfit}{\encodingdefault}{\sfdefault}{m}{sl}
\SetMathAlphabet{\mathsfit}{bold}{\encodingdefault}{\sfdefault}{bx}{n}

\DeclareMathOperator*{\argmax}{arg\,max}
\DeclareMathOperator*{\argmin}{arg\,min}

\newcommand{\complex}{\mathcal{K}}
\newcommand{\vertexSet}{V}
\newcommand{\simplex}{\sigma}
\newcommand{\simplexT}{\tau}
\newcommand{\vertex}{v}
\newcommand{\range}{\mathbb{R}}
\newcommand{\isovalue}{a}
\newcommand{\isovalueT}{{a'}}
\newcommand{\datasetSize}{N}
\newcommand{\representation}{H}
\newcommand{\representationSample}{h}

\newcommand{\diagram}{\mathcal{D}}
\newcommand{\wasserstein}[1]{W_#1}
\newcommand{\projection}{\Delta}
\newcommand{\pointMetric}{d}

\newcommand{\wassersteinLoss}{\mathcal{L}}
\newcommand{\wassersteinEnergy}{\mathcal{E}}
\newcommand{\persistenceMap}{\mathcal{P}}
\newcommand{\filtrationMap}{\mathcal{F}}

\newcommand{\lt}{<}

\usepackage[dvipsnames]{xcolor}

\newcommand{\todo}[1]{\begingroup\color{red}#1\endgroup}

\newcommand{\myParagraph}[1]{\paragraph{#1}}
\renewcommand{\myParagraph}[1]{\textbf{#1}~}

\newcommand{\acronym}{TCR}
\newcommand{\acronymLongSing}{Topological Concept Representation}
\newcommand{\acronymLong}{\acronymLongSing{}s}

\newcommand{\subappref}[1]{\hyperref[#1]{Appendix~\ref*{#1}}}

\usepackage{hyperref}
\usepackage{multirow}
\usepackage{url}
\usepackage{booktabs}
\usepackage{graphicx}
\usepackage{afterpage}
\usepackage{siunitx}
\usepackage{float}
\usepackage{subcaption}
\usepackage{adjustbox}
\usepackage{stackengine}
\usepackage{wrapstuff}

\usepackage{amsthm}
\newtheorem{theorem}{Theorem}
\newtheorem{corollary}{Corollary}

\newtheorem{proposition}{Proposition}
\theoremstyle{remark}
\newtheorem{remark}{Remark}

\title{Operational Abstractions of Neural Network Concepts via Topological Representations}

\author{Mathieu Pont \& Christoph Garth 
\\
RPTU Kaiserslautern-Landau \\
Kaiserslautern, Germany \\
\texttt{\{mathieu.pont,garth\}@rptu.de} \\
}

\iclrfinalcopy % Uncomment for camera-ready version, but NOT for submission.
\begin{document}

\maketitle

\lhead{Under review as a conference paper at ICLR 2027}
\vspace{-0.45cm}

\begin{figure}[h]
\begin{center}
% \rule{\textwidth}{5.8cm}
\includegraphics[width=\linewidth]{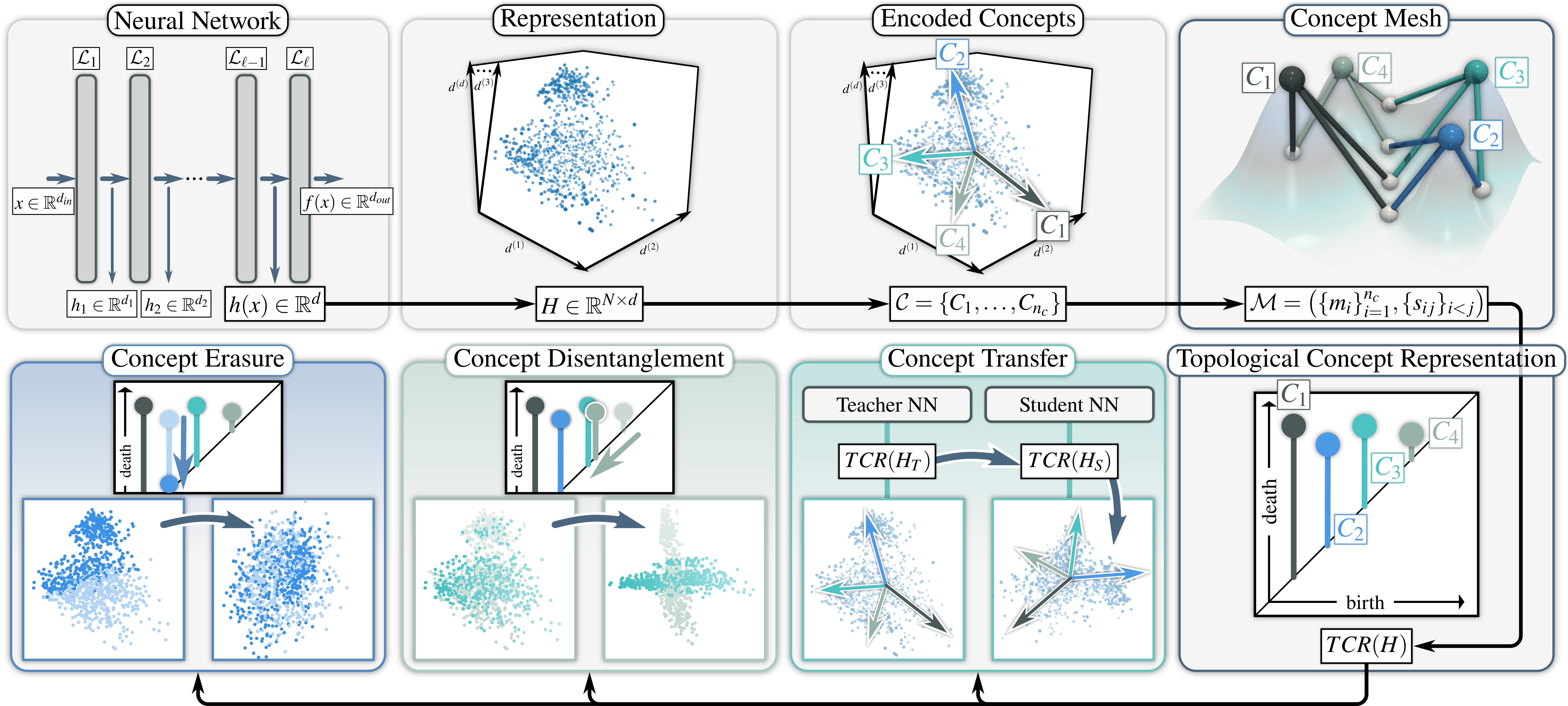}
\end{center}
\caption{\acronym{} provides a common post-hoc abstraction for concept organization in learned representations. It maps encoded concepts and their interactions to a concept mesh and summarizes its topological structure. Desired changes are specified on this abstraction and propagated back to the representation, enabling diverse interventions such as concept erasure, disentanglement and transfer.
}
\end{figure}

\begin{abstract}
Concept-based methods provide a semantic level for interpreting and manipulating learned representations, but existing editing approaches are typically specialized to particular interventions and do not provide a common and editable representation of concept organization. To achieve this, we introduce \emph{\acronymLong{}} (\acronym{}), a post-hoc operational abstraction that jointly characterizes the concepts encoded in a learned representation and their relationships. \acronym{} constructs an intermediate concept space from concept recoverability and interaction scores, and compactly encodes its organization through topology. Interventions are expressed through modifications of this abstraction that are propagated back to the underlying learned representation. This allows different concept-level operations to share the same optimization framework and separates the desired concept organization from the mechanism to achieve it. We establish stability and reparameterization-invariance properties of \acronym{} and its connections to existing concept-editing formulations. We use \acronym{} to disentangle concepts as a preprocessing step for existing erasure methods, improving worst-group accuracy by $21.89$ on average at comparable concept leakage. We further use \acronym{} to transfer concepts from teacher to student models, improving concept recoverability by up to $5.54$ while also improving or maintaining competitive test top-1 accuracy.
\end{abstract}

% TODO emphasize somewhere on "unwanted concepts, such as demographic or social attributes, leading to biases and potentially unfair predictions" and in LLMs as in Obliviator (and Vision models)

\section{Introduction}

%%%%%%%%%%%%%%%%%%%%%%%%%%%%%%%%%%%%%%%
%%%%%%%%%%%%%%%%%%%%%%%%%%%%%%%%%%%%%%%

As computational resources and learning methods continue to advance, increasingly large and complex models have become trainable \citep{llmScalingLaws}, giving rise to rich and complex learned representations \citep{ansuiniRepresentation}.
% Their complexity makes it difficult to understand and manipulate them at a level that corresponds to semantically meaningful properties we wish to reason about.
Their internal organization is often not directly interpretable, making it difficult to understand and manipulate them 
%at a 
% semantically meaningful level
%level that corresponds to 
in terms of 
semantically meaningful properties we wish to reason about.
A natural way to address this is to seek higher-level abstractions of these representations that would capture such properties in a compact way.
This notably raises the question of which \emph{operational abstractions} can be defined for such representations, that, in addition of capturing these properties, would offer a principled and unified interface to manipulate them.

The concept-based literature provides a natural semantic level for reasoning about learned representations by identifying human-interpretable properties encoded within them \citep{networkDissection, tcav, unifiedConceptExtraction}. 
Beyond interpretation, such concepts can also be manipulated, notably by erasing unwanted information from a representation \citep{inlp, rlace, leace}, preserving downstream task information during erasure \citep{jse, splince, obliviator} or transferring concept-related information across models \citep{conceptDistillation}. 
However, these methods are typically developed as independent frameworks tailored to a particular intervention and optimization algorithm. 
Consequently, changing the intended operation or introducing additional requirements often requires deriving a new formulation and optimization procedure rather than simply specifying a different desired outcome. 
Specifically, they are not formulated within a common framework in which concepts and their interactions are jointly characterized and different concept-level objectives can be expressed, composed and optimized. 
% Moreover, effects on concepts not explicitly included in the objective are generally not represented or controlled during the intervention, and are often assessed only afterward. 
% Moreover, the way concept information is entangled in the representation is typically treated as fixed: methods optimize an intervention within this existing organization, rather than reshaping it to alter the collateral effects of the intervention.
% Moreover, the effects of modifying one concept on others are generally not explicitly modeled during the intervention, and are often assessed only afterward.
% Moreover, the impact of modifying one concept on others is generally not explicitly considered during the intervention, and often assessed only afterward.
% Moreover, existing methods do not generally adjust the existing relationships between concepts to influence the outcome of an intervention.
% Moreover, existing methods generally do not treat the relationships between concepts as quantities that can be adjusted to influence the outcome of an intervention.
Moreover, existing methods generally do not treat concept relationships as quantities that can be adjusted to influence the outcome of an intervention.
% Moreover, the relationships between concepts are generally not treated as quantities that can be adjusted to influence the outcome of an intervention.
This fragmentation motivates a common post-hoc representation in which concept organization itself can be described, compared and manipulated in a unified way.

Topological Data Analysis (TDA) provides natural tools for constructing such abstractions \citep{edelsbrunner09, zomorodianBook}. The resulting objects compactly characterize the global organization of their input, while their multiscale nature accounts for relationships of varying strengths without committing to a single arbitrary threshold. They can also be compared in a principled manner and, beyond their descriptive role, can serve as \emph{operational proxies}: when coupled with appropriate optimization procedures, modifications expressed on the abstraction can be translated into controlled modifications of the underlying input \citep{carriere21}.

% This raises the question of whether such an operational abstraction can be defined for learned representations themselves.

% What topology adds is a way to summarize how connectivity and grouping emerge and persist across scales, without committing to one threshold.
% The object of interest is a structured system of concepts and relationships, and we want a compact representation of how these relationships organize globally and across scales.
% Topology also has a second advantage: it gives you discrete structural features that can be targeted by edit
% Topological summaries are attractive here because they encode multiscale connectivity in a compact form and expose explicit structural features that can be manipulated.

%%%%%%%%%%%%%%%%%%%%%%%%%%%%%%%%%%%%%%%
%%%%%%%%%%%%%%%%%%%%%%%%%%%%%%%%%%%%%%%

% This paper adresses these limitations by introducting the notion of \emph{\acronymLong{}} (\acronym{}), a unified and general way to represent concepts, their interactions at multiple scales and principled methods for their modifications.
% This notably allows
% Interestingly, we show that most concept erasure methods studied in this paper can be formulated under this framework, revealing unexpected relations between them.

We build on these properties to introduce \emph{\acronymLong{}} (\acronym{}), a post-hoc operational abstraction of the concept organization encoded in learned representations. 
\acronym{} first constructs an intermediate \emph{concept space} in which individual concepts are represented together with their relationships, characterized by scalar scores derived from the learned representation.
% This space is then compactly encoded through a topological abstraction, acting as an interface for the manipulation of this concept space and, hence, of the original representation. 
% An intervention can then be expressed by modifying the resulting topological abstraction to define a target \acronym{}, and optimizing a transformation of the underlying learned representation to realize this target. 
This space is then compactly encoded through a topological abstraction, which provides an interface for acting on the concept organization and, through it, on the underlying learned representation.
Concretely, an intervention is specified by modifying this abstraction to define a target \acronym{}, and optimizing a transformation of the 
%underlying 
learned representation to realize it.
In this way, different operations are expressed through different target topologies 
% rather than different editing algorithms, 
within a shared optimization framework,
thereby separating \emph{what} concept organization is desired from \emph{how} the representation is modified to achieve it. %, allowing for different objectives to be expressed within the same framework. % while encompassing a broad range of existing concept-erasure criteria.

We evaluate \acronym{} on complementary concept editing and transfer tasks. For concept erasure, we consider both direct erasure with \acronym{} and its use as a preprocessing step that disentangles unwanted concepts from task-relevant information before applying existing erasure methods; across datasets and baselines, this improves test WGA by $21.89$ on average at comparable levels of concept leakage. For concept transfer, we use the \acronym{} of a teacher representation as a target for a student and show that the transfer generalizes beyond the concepts explicitly included in the optimization, improving concept recoverability by up to $5.54$ while improving or maintaining competitive test top-1 accuracy.

\iffalse
% \note{We introduce Concept Topological Representations (CTR)

% CTR serves as an operational interface in which diverse representation edits can be expressed as target topologies and realized through learned feature transformations.
% CTR separates the desired organization of concepts from the particular transformation used to produce it.

% CTR turns concept editing from an algorithm-design problem into a target-specification problem.

% Concepts do not exist as an unstructured collection of independent variables. 
% They form relationships, groups, and hierarchies that may appear at different strengths. Topology provides a natural language for representing this organization without committing to a single arbitrary threshold.
% CTR captures how concepts progressively organize into larger structures across interaction scales

% CTR separates what concept organization is desired from how the neural representation is transformed to realize it. 

% Different applications become different target topologies rather than entirely different algorithms. 

% An edit is then specified as a modification of this structure and realized by optimizing a transformation of the neural representation.}

% Neural representations from different architectures live in incompatible coordinate spaces, but their concepts can be described in a shared concept-level abstraction. Therefore CTR may serve as a common interface between models, even when their feature dimensions and coordinates differ.
\fi

%%%%%%%%%%%%%%%%%%%%%%%%%%%%%%%%%%%%%%%
%%%%%%%%%%%%%%%%%%%%%%%%%%%%%%%%%%%%%%%

\myParagraph{Contributions.} Our main contributions are as follows:
%\vspace{-0.5em}
\begin{itemize}
\item We introduce \acronym{}, a post-hoc topological abstraction that jointly represents concepts and their relationships and provides a common interface for concept-level interventions.
\item We establish theoretical properties of \acronym{} such as stability, invariance to representation reparameterization and its connections to existing concept-editing formulations.
\item We evaluate \acronym{} across concept erasure, disentanglement and transfer, showing improved task preservation during erasure and improved transfer of concept recoverability.
\end{itemize}

%%%%%%%%%%%%%%%%%%%%%%%%%%%%%%%%%%%%%%%%%%%%%%%%%%%%%%%%%%%%%%%%%%%%%%%%%%%%%%%
%%%%%%%%%%%%%%%%%%%%%%%%%%%%%%%%%%%%%%%%%%%%%%%%%%%%%%%%%%%%%%%%%%%%%%%%%%%%%%%
%%%%%%%%%%%%%%%%%%%%%%%%%%%%%%%%%%%%%%%%%%%%%%%%%%%%%%%%%%%%%%%%%%%%%%%%%%%%%%%
% TODO cite visual concept connectome?

\myParagraph{Related work.}
We provide an extended discussion of concept representation and intervention methods as well as topological abstractions and persistence-based optimization in \autoref{sec_app_related_work}.

% \subsection{Related Work}

\section{Preliminaries}
\label{sec_preliminaries}

% This section introduces the background and notation required for our method.

%%%%%%%%%%%%%%%%%%%%%%%%%%%%%%%%%%%%%%%
%%%%%%%%%%%%%%%%%%%%%%%%%%%%%%%%%%%%%%%
\myParagraph{Concepts and probes.} 
Given a representation function \(h:\mathcal{X}\rightarrow\mathbb{R}^d\) and a collection of \(N\) samples \(\{x_1,\ldots,x_N\}\), we denote their representations by \(H=[h(x_1),\ldots,h(x_N)]^\top\in\mathbb{R}^{N\times d}\). 
A concept \(C_i\) is described 
%over these samples 
by a vector \(\mathbf{c}_i=(c_{i,1},\ldots,c_{i,N})^\top\), indicating its presence or strength for each sample. 
Such concept annotations may be provided explicitly or obtained through 
a
%an unsupervised 
concept discovery procedure. 
A common way to characterize how a concept is encoded in a representation is through a probe \(q_i:\mathbb{R}^d\rightarrow\mathbb{R}\) predicting \(c_{i,n}\) from \(h(x_n)\). 
For instance, a linear probe \mbox{\(q_i(h)=\mathbf{w}_i^\top h+b_i\)} associates the concept with a direction \(\mathbf{w}_i\) in the representation space. 
% The predictive ability of such a probe can then be used to quantify how strongly the concept is encoded in the representation.
% The ability of such a probe to predict \(\mathbf c_i\) can then be used to quantify how strongly the concept is represented.
The ability of such a probe to predict \(\mathbf c_i\) can then be used to quantify how strongly the concept is encoded in the representation.

%%%%%%%%%%%%%%%%%%%%%%%%%%%%%%%%%%%%%%%
%%%%%%%%%%%%%%%%%%%%%%%%%%%%%%%%%%%%%%%
\myParagraph{Persistence diagrams.} 
%We now briefly introduce the persistence diagram %, one of the most common topological descriptor
% A detailed introduction to computational topology can be found in the books of \citet{edelsbrunner09} and \citet{zomorodianBook}.
% and refer to the books of \citet{edelsbrunner09} and \citet{zomorodianBook} for a more detailed introduction.
%\citep{edelsbrunner09}.
Let a scalar field $f : \vertexSet \rightarrow \range$ defined on the vertex set $\vertexSet$ of a simplicial complex $\complex$.
% A topological representation aims to encode its features of interest in a compact manner and we focus in this work on the 0th topological features, intuitively corresponding to the \emph{pits} of \emph{peaks} of $f$.
% Each feature is characterized by an extremum-saddle pair of critical vertices of $f$ (a minimum and a saddle or a saddle and a maximum).
%The topological representations considered in this work encode the topological features of $f$
% More formally, the features involving minima of $f$ can be uniquely identified using the lower-star filtration.
The topological features of $f$ can be identified using the \emph{lower-star filtration} and encoded in the persistence diagram \citep{edelsbrunner09}.
% The persistence diagram \citep{edelsbrunner09} can encode the topological features of $f$ using the \emph{lower-star filtration}.
Let \mbox{$\mathbf{f} = \big(f(\vertex_1),\ldots,f(\vertex_{n_\vertex})\big) \in \range^{|\vertexSet|}$} be the vector of vertex values of $f$ and let the \emph{filtration map} \mbox{$\filtrationMap : \range^{|\vertexSet|} \rightarrow \range^{|\complex|}$} mapping a vector of vertex values to a 
\mbox{$\range^{|\complex|}$-dimensional} 
vector of simplex values where \mbox{$\filtrationMap_\simplex(\mathbf{f}) = \max_{\vertex \in \simplex} f(v)$} for each simplex \mbox{$\simplex \in \complex$}, ensuring that $\filtrationMap_\simplexT(\mathbf{f}) \leq \filtrationMap_\simplex(\mathbf{f})$ for 
%$\simplexT, \simplex \in \complex$ and 
any $\simplexT \subseteq \simplex$.
The lower-star filtration is the nested family of subcomplexes \mbox{$\complex_\isovalue = \{\simplex \in \complex ~ | ~ \filtrationMap_\simplex(\mathbf{f}) \leq \isovalue\}$} with $\complex_\isovalue \subseteq \complex_\isovalueT$ for any $\isovalue \leq \isovalueT$.
Assuming 
%that 
$f$ 
%is 
injective on the vertices of $\complex$, 
% being easily achieved in practice 
obtained easily
with symbolic perturbations \citep{edelsbrunner90}, 
the filtration changes only at the $n_\vertex = |\vertexSet|$ distinct vertex values,
% and can be 
represented by the finite sequence \mbox{$\emptyset \subset \complex_{f(\vertex_1)} \subset \ldots \subset \complex_{f(\vertex_{n_\vertex})} = \complex$} 
(%with $\vertex_1,\dots,\vertex_{n_\vertex}$ the ordered vertices 
such that \mbox{$f(\vertex_1) \lt \dots \lt f(\vertex_{n_\vertex})$)} where each step adds a vertex $\vertex$ as well as its lower star (i.e. each simplex $\simplex$ containing $\vertex$ such that $\filtrationMap_\simplex(\mathbf{f}) = f(\vertex)$).

Topological changes, such as the appearance and merging of connected components, occur at specific filtration values. When two components merge, the one created later \emph{dies} according to the \emph{Elder} \emph{rule} \citep{edelsbrunner09} and defines a \emph{persistence pair} \((\vertex,\vertex')\), where \(\vertex\) creates the component and \(\vertex'\) merges it;
%with an older one; 
\(f(\vertex)\) and \(f(\vertex')\) are respectively its \emph{birth} and \emph{death}, and \(f(\vertex')-f(\vertex)\) its \emph{persistence}, commonly interpreted as a measure of 
% feature 
importance. 
The component born at the global minimum never dies, 
%through a merge, 
and we conventionally assign it the global maximum as death value. The \emph{persistence diagram} \(\diagram(f)\) is the multiset of points \(\big(f(\vertex),f(\vertex')\big)\) associated with all persistence pairs.

%%%%%%%%%%%%%%%%%%%%%%%%%%%%%%%%%%%%%%%
%%%%%%%%%%%%%%%%%%%%%%%%%%%%%%%%%%%%%%%
\iffalse
\myParagraph{Wasserstein distances.} 
Two persistence diagrams can be compared using the Wasserstein distance.
It seeks a mapping between their points such that a point of one diagram can either be mapped to exactly one point in the other diagram, representing two features being matched together, or to its projection on the diagonal of the other diagram, intuitively representing the deletion or insertion of a feature.
In practice, when computing the distance between two diagrams $\diagram(f_i)$ and $\diagram(f_j)$, they are first pre-processed to augment each diagram with the diagonal projection of the off-diagonal points of the other diagram, i.e. $\diagram'(f_i) = \diagram(f_i) \cup \{\projection(p_j) ~ | ~ p_j \in \diagram(f_j)$ and similarly for $\diagram'(f_j)$ and where $\projection(p_i) = (\frac{x_i+y_i}{2},\frac{x_i+y_i}{2})$ is the diagonal projection of an off-diagonal point $p_i = (x_i, y_i)$.
This 
%procedure 
simulates the fact that persistence diagrams have an infinite number of diagonal points (dummy features with zero persistence), allows off-diagonal points to possibly be matched to their diagonal projection 
% (i.e. their closest point among all points on the diagonal).
% It also
and 
guarantees that the two diagrams have the same number of points (\mbox{$| \diagram'(f_i)| = |\diagram'(f_j)|$}).
\fi
\myParagraph{Wasserstein distances.}
Two persistence diagrams can be compared using the Wasserstein distance, which seeks a minimum-cost matching between their points. Each off-diagonal point can either be matched to exactly one point of the other diagram, associating two topological features, or to the diagonal, corresponding to the insertion or deletion of a feature. In practice, for diagrams \(\diagram(f_i)\) and \(\diagram(f_j)\), each is augmented with the diagonal projections of the off-diagonal points of the other, e.g. \mbox{\(\diagram'(f_i)=\diagram(f_i)\cup\{\projection(p_j)\mid p_j\in\diagram(f_j)\}\)}, where \mbox{\(\projection(x,y)=((x+y)/2,(x+y)/2)\)}. This accounts for the infinite multiplicity of the diagonal and ensures \(|\diagram'(f_i)|=|\diagram'(f_j)|\), so that the distance can be computed as a bijective matching between the augmented diagrams.

Two points $p_i = (x_i, y_i) \in \diagram'(f_i)$ and $p_j = (x_j, y_j) \in \diagram'(f_j)$ are mapped with a cost of % being their distance in the birth/death space:
%, called the \emph{ground metric}: 
\mbox{$\pointMetric_{q}(p_i, p_j)^q = |x_i - x_j|^q + |y_i - y_j|^q = \| p_i - p_j \|_q^q$},
% \begin{equation}
% \label{eq_groundMetric}
% \pointMetric_{q}(p_i, p_j) = \big(|x_i - x_j|^q + |y_i - y_j|^q\big)^{1/q} = \| p_i - p_j \|_q,
% \end{equation}
%\noindent
with \mbox{$q \geq 1$}
and where $\pointMetric_q(p_i, p_j)$ is set to zero if both $p_i$ and $p_j$ are diagonal points.
\iffalse
In the 
%special 
case where both $p_i$ and $p_j$ are diagonal 
points
%dummy features located on the diagonal (i.e. $x_i = y_i$ and $x_j = y_j$) 
then $\pointMetric_q(p_i, p_j)$ is set to zero (for them to not contribute to the distance).
\fi
The \mbox{$L^q$-Wasserstein} distance $\wasserstein{q}$ is then defined as:

\begin{eqnarray}
\label{eq_wasserstein}
\wasserstein{q}\big(\diagram(f_i), \diagram(f_j)\big)
=
\label{eq_wasserstein_mapping}
&
\hspace{-.15cm}
\underset{\phi \in \Phi}\min
 \hspace{-.15cm}
&
\Big(
\sum_{p_i \in \diagram'(f_i)}
\pointMetric_q\big(p_i,
\phi(p_i)\big)^q\Big)^{1/q},
\end{eqnarray}

\noindent
where $\Phi$ is the set of all possible assignments $\phi$ mapping each point $p_i \in \diagram'(f_i)$ to exactly one point $p_j \in \diagram'(f_j)$
% \revisionTVCG{Lines \ref{eq_wasserstein_destruction} and \ref{eq_wasserstein_creation}, \revisionTVCGminor{in which} $\projection(p_i) = (\frac{x_i+y_i}{2},\frac{x_i+y_i}{2})$ is the diagonal projection of an off-diagonal point $p_i = (x_i, y_i)$\revisionTVCGminor{,} indicate the deletion or the appearance of a feature, respectively.}
being either an off-diagonal point or the diagonal projection of $p_i$ (i.e. \mbox{$\phi(p_i) = \projection(p_i) = p_j \in \diagram'(f_j)$}) indicating its deletion. 
% Similarly, when $p_i$ is a diagonal point being matched to an off-diagonal point of $\diagram'(f_j)$ it indicates its appearance.
$\wasserstein{q}$ can be computed using traditional assignment problem solvers \citep{Munkres1957, Bertsekas81, cuturi2013Sinkhorn, Kerber2016}.
% \revisionTVCG{The Bottleneck distance \cite{edelsbrunner02} is recovered when $q \rightarrow \infty$, in which case only the maximum matching cost is retained.}
% TODO cite sinkhorn

%%%%%%%%%%%%%%%%%%%%%%%%%%%%%%%%%%%%%%%
%%%%%%%%%%%%%%%%%%%%%%%%%%%%%%%%%%%%%%%
\myParagraph{Persistence-based topological optimization.}
Persistence diagrams can be incorporated into differentiable optimization objectives \citep{carriere21}. Let \(\persistenceMap:\range^{|\complex|}\rightarrow\range^{|\complex|}\) denote the \emph{persistence map}, being a permutation of the simplex filtration values returned by \(\filtrationMap\) according to the persistence pairing, such that the birth and death values of each pair are placed consecutively (including zero-persistence pairs). % that are typically omitted from the persistence diagram). 
% For a fixed ordering of the simplices, the persistence pairs remain unchanged and \(\persistenceMap\) simply reorders the coordinates of the filtration vector. 
Gradients with respect to birth and death values can therefore be propagated back to the simplices inducing them and, through \(\filtrationMap\), to the vertex values defining the filtration. 
A persistence-based objective can then be written as \(\wassersteinLoss=\wassersteinEnergy\circ\persistenceMap\circ\filtrationMap\), where \(\wassersteinEnergy\) is a differentiable energy defined on the persistence representation, such as one derived from the Wasserstein distance of \autoref{eq_wasserstein_mapping}. Under mild regularity conditions, such objectives are differentiable almost everywhere, admit well-defined generalized subgradients at non-smooth points and can be optimized with stochastic subgradient methods with convergence guarantees to critical points \citep{carriere21}.

% TODO homegenize and make clearer the use of H and Z throughout the section (notably 4.1 and 4.2 first paragraph use H then Z is used)
\section{Topological Concept Representations}
\label{sec_method}

%%%%%%%%%%%%%%%%%%%%%%%%%%%%%%%%%%%%%%%%%%%%%%%%%%%%%%%%%%%%%%%%%%%%%%%%%%%%%%%
%%%%%%%%%%%%%%%%%%%%%%%%%%%%%%%%%%%%%%%%%%%%%%%%%%%%%%%%%%%%%%%%%%%%%%%%%%%%%%%
%%%%%%%%%%%%%%%%%%%%%%%%%%%%%%%%%%%%%%%%%%%%%%%%%%%%%%%%%%%%%%%%%%%%%%%%%%%%%%%

\subsection{Overview and Problem Formulation}
Given \(\datasetSize\) samples \(\{x_1,\ldots,x_\datasetSize\}\), let 
%\(\representation = \{\representationSample_i\}_{i=1}^\datasetSize\), with \(\representationSample_i \in \mathbb{R}^d\), 
\mbox{$\representation = [\representationSample(x_1),\ldots,\representationSample(x_N)]^\top \in \mathbb{R}^{\datasetSize\times d}$}
denote their representations, such as those extracted at a selected model layer, and let \(\mathcal{C} = \{C_1,\ldots,C_{n_C}\}\) be a collection of concepts defined on these samples.
Our objective is to construct an abstraction of $\representation$ that jointly characterizes which concepts are encoded in the representation and how these concepts interact with each other, and that can subsequently be used as an operational proxy to modify $\representation$.
To this end, we introduce \emph{\acronymLong{}} (\acronym{}).
We associate to each concept $C_i$ a scalar value $m_i$ characterizing how strongly $C_i$ is encoded in $\representation$, and to each pair of concepts $(C_i,C_j)$ an interaction value $s_{ij}$.
These values define a scalar field on a \emph{concept mesh} in which concepts form maxima and their interactions form saddle vertices.
The persistence diagram of this scalar field, the \acronym{} of $\representation$, summarizes both the presence of individual concepts and their organization with respect to the other concepts.
% TODO say that persistence diagrams permit to manipulate the concepts (more intuitive than below)
% Importantly, 
Beyond being descriptive, the resulting abstraction permits to express a desired intervention by defining a \emph{target \acronym{}}, e.g., by reducing or increasing a concept maximum or a saddle connecting two concepts.
Considering a parameterized transformation $T_\theta : \mathbb{R}^d \rightarrow \mathbb{R}^{d'}$, applied row-wise to $\representation$ as 
% \(T_\theta(\representation)=\{T_\theta(\representationSample_i)\}_{i=1}^{\datasetSize}\), 
\mbox{$T_\theta(\representation) := [T_\theta(\representationSample_1),\ldots,T_\theta(\representationSample_{\datasetSize})]^\top \in \mathbb{R}^{\datasetSize \times d'}$},
we optimize \(\theta\) such that the \acronym{} of \(T_\theta(\representation)\) matches this target.
Hence, different concept manipulations can be formulated as modifications of a common abstraction rather than through operation-specific objectives.
% TODO reference to overview figure at the pertinent locations in the text above

%%%%%%%%%%%%%%%%%%%%%%%%%%%%%%%%%%%%%%%%%%%%%%%%%%%%%%%%%%%%%%%%%%%%%%%%%%%%%%%
%%%%%%%%%%%%%%%%%%%%%%%%%%%%%%%%%%%%%%%%%%%%%%%%%%%%%%%%%%%%%%%%%%%%%%%%%%%%%%%
%%%%%%%%%%%%%%%%%%%%%%%%%%%%%%%%%%%%%%%%%%%%%%%%%%%%%%%%%%%%%%%%%%%%%%%%%%%%%%%

\subsection{Constructing \acronymLong{}}
\label{sec_ctr_construction}

Given a representation 
%matrix 
% $\representation$, 
$Z \in \mathbb{R}^{\datasetSize \times d'}$,
we first associate scalar values with individual concepts, then measure their pairwise interactions and combine 
% both quantities 
them
into a topological abstraction.
% Concepts can be provided through annotated labels or obtained through unsupervised concept discovery procedures (e.g., \citet{ace, craft}).
% \acronym{} only requires, for each $C_i$, a vector $\mathbf{c}_i$ describing the extent to which $C_i$ is encoded over each of the $\datasetSize$ samples.

%%%%%%%%%%%%%%%%%%%%%%%%%%%%%%%%%%%%%%%
%%%%%%%%%%%%%%%%%%%%%%%%%%%%%%%%%%%%%%%

\myParagraph{Concept scores.}
At the framework level, \acronym{} represents simply a concept score by a scalar %:
\iffalse
\begin{equation}
    m_i = S(\representation, C_i),
\end{equation}
\fi
% \mbox{$m_i = S(\representation, C_i)$}
\mbox{$m_i = S(Z, C_i)$}
whose value increases with how much the concept $C_i$ is encoded in the representation.
While different choices of $S$ can instantiate different notions of concept encoding, in this work we primarily quantify it through \emph{recoverability}.
Let $\mathbf{c}_i = (c_{i,1},\ldots,c_{i,\datasetSize})^\top$ denote the sample-level scores of $C_i$, and let
$Z = [z_1,\ldots,z_{\datasetSize}]^\top \in \mathbb{R}^{\datasetSize \times d'}$
denote the representation on which the concept score is evaluated.
% Given a probe $q_{\phi_i}$ and a prediction loss $\ell$, we define the concept score as the improvement of the fitted probe over the optimal constant predictor:
Given a probe $q_{\phi_i}$ and a prediction loss $\ell$, we define the concept score as:
\begin{equation}
    m_i = \mathcal{L}_i^0 - \mathcal{L}_i^\star,
    \qquad
    \mathcal{L}_i^0
    =
    \min_{a}
    \frac{1}{\datasetSize}
    \sum_{n=1}^{\datasetSize}
    \ell(c_{i,n}, a),
    \qquad
    \mathcal{L}_i^\star
    =
    \min_{\psi_i}
    \frac{1}{\datasetSize}
    \sum_{n=1}^{\datasetSize}
    \ell\!\left(c_{i,n}, q_{\psi_i}(z_n)\right),
    \label{eq_ctr_concept_score}
\end{equation}
where $\psi_i$ denotes the fitted probe parameters and $\mathcal{L}_i^0$ is the loss of the optimal constant predictor, which does not use $Z$ and thus provides a baseline for prediction without information from the representation.
%Here, $\phi_i^\star$ denotes the fitted probe parameters and $\mathcal{L}_i^0$ is the loss of the optimal constant predictor, providing a baseline for prediction without information from the representation.
Thus, $m_i$ is the improvement of the fitted probe over the optimal constant predictor where \(m_i=0\) indicates no improvement over the null predictor, while larger values indicate that \(C_i\) is more readily predicted from \(Z\).
We consider binary cross-entropy (BCE) with a logistic probe for binary concepts, and mean squared error (MSE) with a least-squares probe for binary or continuous concepts.
The score $m_i$ can optionally be normalized by $\mathcal{L}_i^0$, yielding a dimensionless score facilitating the joint treatment of concepts evaluated with different prediction losses, e.g., BCE for binary and MSE for continuous concepts.
For MSE, it corresponds to an \(R^2\)-type explained-variance score.
% TODO mention that you can do non-linear probes? Or remove the mention to the specific linear logistic and least-squares probe?
The MSE formulation additionally admits a convenient analytical form described in \autoref{sec_app_mse}.

%%%%%%%%%%%%%%%%%%%%%%%%%%%%%%%%%%%%%%%
%%%%%%%%%%%%%%%%%%%%%%%%%%%%%%%%%%%%%%%

\myParagraph{Concept interactions.}
Individual concept scores characterize 
%the extent to which each
how much a 
concept is encoded in a representation, but not how concepts relate to one another.
We therefore associate every pair $(C_i,C_j)$ with a normalized interaction score
\mbox{$\rho_{ij}\in[0,1]$}, where $\rho_{ij}=0$ indicates no interaction according to the chosen criterion and $\rho_{ij}=1$ indicates maximal interaction.
While some interaction criteria are naturally symmetric, others characterize a directional relationship
\mbox{$r_{i\rightarrow j}\in[0,1]$}.
Directional scores can be combined to associate each concept pair with a single interaction value:
\iffalse
\begin{equation}
\label{eq_ctr_interaction_sym}
\rho_{ij}
=
\Psi\!\left(
r_{i\rightarrow j},
r_{j\rightarrow i}
\right),
\end{equation}
\fi
$\rho_{ij} = \Psi\!\left(r_{i\rightarrow j}, r_{j\rightarrow i}\right)$,
where $\Psi:[0,1]^2\rightarrow[0,1]$ is a symmetric aggregation function, such as \mbox{$\Psi(a,b)=(a+b)/2$}.

We principally instantiate this construction through \emph{interference}.
For instance, a representation may strongly encode both an undesired concept and the task while relying on shared information to encode them, such that removing the former also reduces the score of the latter.
Inspired by \citet{naowarat26}, who characterize interference through the prediction error of one concept after erasing another, we measure directional interference through the relative decrease in the concept score of $C_j$ after removing $C_i$.
Let $\mathcal{E}_i(Z)$ denote the representation obtained by erasing concept $C_i$ from $Z$, and let $m_j(Z)$ denote the score of $C_j$ evaluated on $Z$.
We define the directional interference as:
\begin{equation}
\label{eq_ctr_interference}
r_{i\rightarrow j}
=
1-
\frac{
m_j\bigl(\mathcal{E}_i(Z)\bigr)
}{
m_j(Z)
}.
\end{equation}
Intuitively, $r_{ij}$ is high when removing $C_i$ substantially reduces the recoverability score of $C_j$, and low when $C_j$ remains largely unaffected by this intervention. 
% The symmetric score \(\rho_{ij}\) then summarizes this relationship across both directions.
We instantiate $\mathcal{E}_i$ with a differentiable implementation of LEACE~\citep{leace}. % and refit the probe for the remaining concept on the erased representation.
Consequently, \autoref{eq_ctr_interference} characterizes interactions through the effect of an explicit intervention, rather than through their correlation alone.
\iffalse
A different interaction score using MSE considers \emph{predictive overlap}, 
%(using concept predictive vectors), 
\fi
For MSE concept scores, interference admits an equivalent closed-form expression,
described in \autoref{sec_app_mse}.

%%%%%%%%%%%%%%%%%%%%%%%%%%%%%%%%%%%%%%%
%%%%%%%%%%%%%%%%%%%%%%%%%%%%%%%%%%%%%%%

\myParagraph{From concept scores and interactions to topology.}
To jointly represent concept scores and interactions, we first express both quantities on a common scalar scale.
While $m_i$ directly quantifies concept $C_i$, the interaction score $\rho_{ij}\in[0,1]$ is normalized and therefore only specifies the relative interaction between $C_i$ and $C_j$.
We seek an absolute interaction level that increases with $\rho_{ij}$, vanishes when $\rho_{ij}=0$, and cannot exceed the score of either concept: an interaction between two concepts should not be assigned a larger value than the extent to which either of them is itself represented.
Accordingly, we rescale $\rho_{ij}$ by the largest admissible value $\min(m_i,m_j)$ and define:
%\iffalse
\begin{equation}
\label{eq_ctr_saddle}
s_{ij}
=
\min(m_i,m_j)\rho_{ij}.
\end{equation}
%\fi
%$s_{ij} = \min(m_i,m_j)\rho_{ij}$.
Thus, $s_{ij}$ varies continuously from $0$ for no interaction to $\min(m_i,m_j)$ for maximal interaction. %, while remaining on the same scale as the individual concept scores.
% This places the unary concept information ${m_i}$ and pairwise interaction information ${s_{ij}}$ on a common scalar scale.
% This makes the concept score ${m_i}$ and pairwise interaction value ${s_{ij}}$ on a common scalar scale.
Intuitively, the difference $m_i - s_{ij}$ represents how much $C_i$ is \emph{independent} from $C_j$.

We use these quantities to define a scalar field on a graph $\mathcal{M}$ that we refer to as the \emph{concept mesh}.
The graph contains one concept vertex $v_i$ of value $m_i$ for each $C_i$, together with a connector vertex $v_{ij}$ of value $s_{ij}$ connected to $v_i$ and $v_j$.
Since $s_{ij}\leq\min(m_i,m_j)$, concept vertices form maxima of this scalar field, while connector vertices encode their pairwise interactions below the corresponding concept values.
We additionally introduce a common vertex with value below all other vertices to assign a finite reference level to the component that survives all pairwise merges.

\iffalse
% Since concept vertices correspond to maxima, 
We consider the upper-star counterpart of the filtration introduced in \autoref{sec_preliminaries}. %, equivalently obtained by applying the lower-star filtration to the negated scalar field.
Starting from the highest values, each concept maximum creates a connected component and connector vertices progressively merge these components.
Following the Elder rule, the component associated with the lower maximum dies when it joins an older component.
% For consistency with the persistence-diagram convention introduced in \autoref{sec_preliminaries},
We represent the resulting feature associated with $C_i$ % as:
as $p_i = (s_i,m_i)$,
\iffalse
\begin{equation}
\label{eq_ctr_pair}
p_i=(s_i,m_i),
\end{equation}
\fi 
where $s_i$ denotes the connector value at which the component born at $m_i$ merges with an older component.
% TODO better formulate next sentence
Its persistence \(m_i-s_i\) measures the hierarchical prominence of \(C_i\), i.e., the range of interaction levels over which it remains the representative maximum of its concept component.
Finally,
\mbox{$\acronym{}(Z)=\{p_i\}_{i=1}^{n_C}$}
constitutes the \acronymLongSing{} of $Z$.
% TODO state that TCR is a persistence diagram
Expressing this organization through \acronym{} enables the use of established methods, such as comparisons with Wasserstein distances and differentiable topological optimization, which we exploit in the following section.
\fi

We apply the upper-star counterpart of the filtration introduced in \autoref{sec_preliminaries}, processing vertices from highest to lowest values.
Concept maxima create connected components and connector vertices merge them; following the Elder rule, we associate $C_i$ with $p_i=(s_i,m_i)$, where $s_i$ is the connector value at which its component merges with an older one.
% Its persistence $m_i-s_i$ measures the hierarchical prominence of $C_i$, i.e., the range of interaction levels over which it remains the representative maximum of its component.
Finally, $\acronym{}(Z)=\{p_i\}_{i=1}^{n_C}$ is the persistence diagram defining the \acronymLongSing{} of $Z$.
This representation enables Wasserstein comparisons and differentiable topological optimization, which we exploit next.

\subsection{Editing \acronymLong{}}
\label{sec_ctr_editing}

Having constructed a topological description of the concept organization, we now turn it into an operational interface to specify interventions on a representation $\representation$.
% \acronym{} separates \emph{which} concept organization is desired from \emph{how} the representation should be modified to realize it.
Let $Z_\theta=T_\theta(\representation)$ denote the representation obtained by applying a differentiable transformation $T_\theta:\mathbb{R}^{d}\rightarrow\mathbb{R}^{d'}$ row-wise to $\representation$.
Starting from $\acronym{}(\representation)$, an intervention is specified by defining a target $\acronym{}^\star$ through changes to its concept maxima and interaction structure.
Given a pair $(s^\star_{\mathcal{A}\mathcal{B}}, m_i^\star)$ in $\acronym{}^\star$, examples include:
\begin{itemize}
    \item $m_i^\star \downarrow$ / $m_i^\star \uparrow$: weaken / strengthen concept $C_i$; while $m_i^\star = m_i$ preserves it;
    % \item $m_i^\star = m_i$: preserve concept $C_i$;
    \item $s_{\mathcal{A}\mathcal{B}}^\star \downarrow$ / $s_{\mathcal{A}\mathcal{B}}^\star \uparrow$: disentangle / entangle the sets of concepts $\mathcal{A}$ and $\mathcal{B}$ that merged at $s^\star_{\mathcal{A}\mathcal{B}}$;
    \item $\acronym{}^\star=\acronym{}(Z^\star)$: transfer the entire concept organization from a representation $Z^\star$.
\end{itemize}
These operations can be combined within the same target, e.g., suppressing an undesired concept while preserving the task and selected concepts. Importantly, the target specifies an \emph{editing criterion}, rather than an editing algorithm. Concept erasure, for instance, can be expressed by setting \mbox{$m_i^\star=0$} while preserving protected concepts. More generally, varying the concept score $S$ and target constraints allows \acronym{} to capture a broad class of existing erasure criteria at the objective level, independently of the optimization used to enforce them; we further examine this relation in \autoref{sec_ctr_properties}.

%%%%%%%%%%%%%%%%%%%%%%%%%%%%%%%%%%%%%%%
%%%%%%%%%%%%%%%%%%%%%%%%%%%%%%%%%%%%%%%

\myParagraph{Topological objective.}
Given $\acronym{}(Z_\theta)$ and $\acronym{}^\star$, we measure how closely the transformed representation $Z_\theta$ realizes the desired concept organization using the $L^q$-Wasserstein distance (\autoref{sec_preliminaries}):
\begin{equation}
\label{eq_ctr_loss}
\mathcal{L}_{\acronym{}}(\theta)
=
\wasserstein{q}\bigl(
\acronym{}(Z_\theta),
\acronym{}^\star
\bigr)^q.
%= \min_{\phi\in\Phi} \sum_{p\in \acronym{}(Z_\theta)'} \pointMetric_q\!\big(p,\phi(p)\big)^q.
\end{equation}
When concept correspondences are known (possibly only on a subset) this semantic information can be exploited by constraining the Wasserstein assignment to match the corresponding features. %, while the remaining features are optimally assigned by the standard solver.
% Semantic correspondences may be specified for only a subset of concepts and when none are available, the assignment reduces to the usual \mbox{Wasserstein matching}.
% The objective therefore compares the current and target concept organizations through a common and well-defined topological framework while preserving known semantic correspondences.

% At each optimization step, we compute $Z_\theta$, refit the concept probes, reconstruct its \acronym{}, solve the corresponding persistence matching, and differentiate \autoref{eq_ctr_loss} through this construction to update $\theta$.
% Differentiating through probe fitting ensures that recoverability is evaluated using the best probe for the current edited representation rather than a direction fixed before the intervention.
% The framework is agnostic to the parameterization of $T_\theta$; in our experiments, we consider post-hoc linear and nonlinear transformations while keeping the representation-producing network fixed.

%%%%%%%%%%%%%%%%%%%%%%%%%%%%%%%%%%%%%%%
%%%%%%%%%%%%%%%%%%%%%%%%%%%%%%%%%%%%%%%

\myParagraph{Topological optimization.}
To realize the target, we optimize $\theta$ through persistence-based topological optimization (\autoref{sec_preliminaries}).
For a given $Z_\theta$, we refit the concept probes, reconstruct the concept mesh and its \acronym{}.
The gradient of Eq.~5 is propagated through the persistence map \(\persistenceMap\) to the filtration values of the concept mesh, and through the filtration map \(\filtrationMap\) to the concept maxima and interaction saddles involved in the matched persistence features.
Since these values are differentiable functions of $Z_\theta$, through the differentiable concept scores and interactions,
%and differentiable probe fitting, 
the gradient further propagates through $Z_\theta=T_\theta(\representation)$ to update $\theta$.
% Differentiating through probe fitting ensures that concept scores reflect the optimal probes for the current transformed representation rather than directions fixed before the intervention.
% Thus, the target specifies \emph{which} concept organization is desired, while the optimization determines \emph{how} the representation is modified to realize it.
% Thus, \acronym{} provides an operational interface separating \emph{which} concept organization is desired from \emph{how} the representation should be modified to realize it.
The framework is agnostic to the parameterization of $T_\theta$, we consider post-hoc linear or nonlinear transformations, or directly the network producing $\representation$.
% We next detail two key use cases: concept disentanglement and transfer.
% TODO Different applications correspond to different target topologies rather than entirely different editing algorithms.

%%%%%%%%%%%%%%%%%%%%%%%%%%%%%%%%%%%%%%%
%%%%%%%%%%%%%%%%%%%%%%%%%%%%%%%%%%%%%%%

\myParagraph{Concept disentanglement.}
\iffalse
Considering two components $\mathcal{A}$ and $\mathcal{B}$ immediately before they merge in the filtration at a saddle $s_{\mathcal{A}\mathcal{B}}=\max_{C_i\in\mathcal{A},C_j\in\mathcal{B}}s_{ij}$.
By construction, it is the strongest interaction between the two groups, while interactions internal to either component are summarized within their existing structure.
Given the persistence pair $(s_{\mathcal{A}\mathcal{B}},m_{\mathcal{A}\mathcal{B}})$, we disentangle $\mathcal{A}$ and $\mathcal{B}$ by replacing this feature in the target \acronym{} by $(0,m_{\mathcal{A}\mathcal{B}}^\star)$.
The target maximum $m_{\mathcal{A}\mathcal{B}}^\star$ can be set to $m_{\mathcal{A}\mathcal{B}}$ to preserve it or to its current value to leave it unconstrained, so that only the saddle contributes to the objective.
Crucially, the saddle is selected dynamically, as its value decreases and the \acronym{} is recomputed, another cross-group interaction may become the strongest and take its place.
The optimization therefore successively acts on the saddle currently maintaining the strongest connection between the groups, effectively reducing it without having to specify all pairwise cross-group interaction targets.
\fi
Given two sets of concepts \(\mathcal A\) and \(\mathcal B\), we define their structural disentanglement by lowering the persistence saddles at which components containing concepts from respectively each set merge. Several such merges may occur at different locations of the hierarchy, involving different subsets of \(\mathcal A\) and \(\mathcal B\); their saddle coordinates are jointly targeted while selected concept maxima can be preserved. As the \acronym{} is recomputed during optimization, the selected saddles adapt to the current concept organization. This operation therefore edits the hierarchical connectivity between the two concept sets, rather than a prescribed individual pairwise interaction.

%%%%%%%%%%%%%%%%%%%%%%%%%%%%%%%%%%%%%%%
%%%%%%%%%%%%%%%%%%%%%%%%%%%%%%%%%%%%%%%

\myParagraph{Concept transfer.}
% A target \acronym{} does not have to be obtained by modifying the original one.
Given two representations $Z^A$ and $Z^B$, possibly produced by different models or having different dimensionalities, $\acronym{}(Z^A)$ can serve as the target for $Z^B$. % whenever the relevant concepts can be identified or matched between them.
Thus, concept transfer 
%is formulated with the same objective as local editing, but 
replaces a manually specified target by the concept organization from another representation.

%%%%%%%%%%%%%%%%%%%%%%%%%%%%%%%%%%%%%%%%%%%%%%%%%%%%%%%%%%%%%%%%%%%%%%%%%%%%%%%
%%%%%%%%%%%%%%%%%%%%%%%%%%%%%%%%%%%%%%%%%%%%%%%%%%%%%%%%%%%%%%%%%%%%%%%%%%%%%%%
%%%%%%%%%%%%%%%%%%%%%%%%%%%%%%%%%%%%%%%%%%%%%%%%%%%%%%%%%%%%%%%%%%%%%%%%%%%%%%%

\subsection{Properties of \acronymLong{}}
\label{sec_ctr_properties}

\myParagraph{\textit{(1)} Benefits of the persistence reduction.}
% The persistence reduction is intentionally lossy: among the $\binom{n_C}{2}$ pairwise interactions of the concept mesh, only the $n_C-1$ interactions that induce merges between previously disconnected components contribute to its $0$-dimensional persistence. 
% Rather than matching every interaction independently, \acronym{} therefore retains the interactions that determine the connectivity of the concept structure across scales, while being invariant to changes in currently redundant interactions. 
% This reduction also yields an adaptive objective for editing: as a topology-defining interaction is modified, another previously redundant interaction may become responsible for a merge and consequently enter the persistence pairing. Thus, \acronym{} does not aim to reconstruct the full interaction matrix, but provides a compact representation of its multiscale connectivity together with a common objective space in which structural interventions can be expressed and optimized.
Rather than reproducing every pairwise interaction of the concept mesh, \acronym{} summarizes how 
such local interactions combine globally to form groups of concepts and how they organize 
% these local relations jointly organize concepts into groups
% these local relations globally form groups of concepts and how they organize 
across interaction levels, beyond what individual interaction scores reveal in isolation.
% It does so by reducing the $\binom{n_C}{2}$ interactions to only the $n_C-1$ 
% saddles being responsible for this organization. 
% saddles needed to characterize the connectivity of the concept mesh across scales.
% ... correspond exactly to what is computed by persistence.
Among the $\binom{n_C}{2}$ interactions, the ones being responsible for the concept hierarchy correspond exactly to the $n_C-1$ saddles computed by persistence.
\acronym{} therefore provides a compact multiscale description of the concept organization by identifying the minimum sparse set of interactions through which this organization can be modified. %, giving an inductive bias toward organization rather than individual pairwise values.
Consequently, it is invariant to changes not affecting the concept hierarchy in contrast to concept meshes that can differ substantially entry-wise while still inducing the same concept hierarchy.
% Moreover, directly comparing concept meshes requires aligned concept identities or an explicit graph-matching criterion, whereas persistence diagrams provide permutation-invariant objects that can be compared without such correspondence.
\iffalse
Representing this organization as a persistence diagram further enables the use of an established operational framework including Wasserstein comparisons
% through assignment 
and optimization, along with theoretical properties (e.g., stability and convergence), 
%in contrast to concept meshes that would require less canonical additional choices.
%being, in contrast, less canonical if working directly with concept meshes.
% whereas working directly with concept meshes requires additional choices.
whereas using directly concept meshes requires additional and less canonical choices.
\fi
Representing this organization as a persistence diagram provides established tools for Wasserstein comparison and differentiable optimization, together with stability and convergence properties, whereas operating directly on concept meshes instead requires additional choices of correspondence, comparison and optimization criteria.

\myParagraph{\textit{(2)} Relation to existing erasure objectives.}
With appropriate choices of the concept scores, several existing erasure criteria, such as those of the baselines considered in our experiments, arise as special cases of \acronym{} at the objective or constraint level. Details and proofs are given in \autoref{sec_app_generalizing}.

\myParagraph{\textit{(3)} Affine reparameterization invariance.}
For the MSE--overlap instantiation, \acronym{} depends on the intrinsic concept organization represented rather than on a particular affine parameterization of the feature space.
Hence, \acronym{} is invariant to invertible affine feature changes, see \autoref{sec_app_invariance}.

\begin{proposition}[Affine reparameterization invariance]
\label{prop_ctr_invariance}
Let \mbox{$Z\in\mathbb{R}^{N\times d}$} be a representation of \mbox{$N$} samples and let $\widetilde Z = ZB^\top + \mathbf{1}a^\top$ with $B\in\mathbb{R}^{d\times d}$ invertible and $a\in\mathbb{R}^d$.
Assume that \acronym{} is constructed using the MSE concept scores of \autoref{eq_\acronym{}_predictive_gram} (\autoref{sec_app_mse}) and the predictive-overlap interactions of \autoref{eq_app_overlap_cosine}. Then $\acronym{}(\widetilde Z)=\acronym{}(Z)$.
The same result holds for normalized MSE concept scores.
\end{proposition}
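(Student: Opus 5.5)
The plan is to show that every scalar entering the concept mesh, namely the concept values $m_i$ and the connector values $s_{ij}$, is unchanged under $Z\mapsto \widetilde Z$. Since the mesh graph is fixed and the persistence diagram depends only on the vertex values, $\acronym{}(\widetilde Z)=\acronym{}(Z)$ then follows at once. I have not seen the closed forms of \autoref{sec_app_mse}, so I assume the standard ones.

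\textbf{Concept scores.} With a least-squares affine probe $q(z)=\mathbf{w}^\top z+b$, the optimal fitted values are the orthogonal projection of $\mathbf{c}_i$ onto the column space of the augmented matrix $[Z,\mathbf{1}]$. Write $P=I-\frac1N\mathbf{1}\mathbf{1}^\top$ for the centering projector and $Z_c=PZ$.

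1. The reparameterization acts on the augmented matrix by $[\widetilde Z,\mathbf 1]=[Z,\mathbf 1]\begin{pmatrix}B^\top & 0\\ a^\top & 1\end{pmatrix}$, and the block matrix is invertible because $B$ is. Hence the two augmented matrices have the same column space.
2. Equivalently, $\widetilde Z_c=Z_cB^\top$, so the predictive Gram/projection $Z_c(Z_c^\top Z_c)^{+}Z_c^\top$ is unchanged. If the paper uses the pseudo-inverse, I use that $\operatorname{col}(Z_cB^\top)=\operatorname{col}(Z_c)$ and that the orthogonal projector depends only on the column space.
3. Therefore $\mathcal L_i^\star$ is unchanged. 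The baseline $\mathcal L_i^0$ does not depend on $Z$ at all, so both $m_i$ and the normalized score $m_i/\mathcal L_i^0$ are invariant.

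\textbf{Interactions.} I expect the predictive-overlap interaction of \autoref{eq_app_overlap_cosine} to be a cosine between the predicted (centered) vectors $\hat{\mathbf c}_i=\Pi\mathbf c_i$ and $\hat{\mathbf c}_j=\Pi\mathbf c_j$ in $\mathbb R^N$, where $\Pi$ is the projector above. An absolute value, square or clipping to $[0,1]$ may be applied on top; none of these affects the argument.

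1. Since $\Pi$ is invariant, so are $\hat{\mathbf c}_i$, $\hat{\mathbf c}_j$, their inner product and their norms, hence $\rho_{ij}$.
2. It follows that $s_{ij}=\min(m_i,m_j)\rho_{ij}$ is invariant.
3. The common bottom vertex is a constant below all values. If it is defined relative to the other values, for example as their minimum minus a constant, it is still a function of invariant quantities.

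\textbf{Conclusion.} All vertex values of $\mathcal M$ coincide for $Z$ and $\widetilde Z$. The upper-star filtration, the Elder-rule pairing (including any symbolic-perturbation tie-breaking, which depends only on the fixed vertex indexing) and the diagram are therefore identical.

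\textbf{Main obstacle.} The crux is the exact form of the overlap in \autoref{eq_app_overlap_cosine}. If it is a cosine between \emph{probe weight vectors} $\mathbf w_i,\mathbf w_j$ in feature space, it is not invariant, since $\widetilde{\mathbf w}=B^{-\top}\mathbf w$. In that case I would rewrite the cosine using the Gram metric, $\mathbf w_i^\top Z_c^\top Z_c\mathbf w_j=\hat{\mathbf c}_i^\top\hat{\mathbf c}_j$, which is the inner product of fitted values and hence invariant. Rank deficiency of $Z_c$ would be handled by the column-space argument of step 2 above.
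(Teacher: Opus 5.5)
Your proposal is correct and follows essentially the paper's route. Centering removes $a$, and $\widetilde Z_c=Z_cB^\top$ has the same column space, so the projector $\Pi_Z$ is unchanged. Hence $m_i=\frac1N\|\Pi_Z\bar{\mathbf c}_i\|_2^2$ and $A_{ij}=\frac1N\langle\Pi_Z\bar{\mathbf c}_i,\Pi_Z\bar{\mathbf c}_j\rangle$ are invariant, and so are $\rho_{ij}$, $s_{ij}$ and the diagram. The paper's overlap is the squared cosine of the whitened directions $\mathbf v_i=(\Sigma_Z^{1/2})^+\mathbf g_i$, which is exactly your $\Sigma_Z$-metric (fitted-value) fallback, so the obstacle you anticipated does not arise.
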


\myParagraph{\textit{(4)} Stability.}
Finally, small perturbations of the representation induce proportionally small changes in its \acronym{} under mild local regularity conditions. Proofs and discussions are in \autoref{app_ctr_stability}.

\begin{theorem}[Representation-to-\acronym{} stability]
\label{thm_ctr_representation_stability}
Given \mbox{$\representation\in\mathbb{R}^{N\times d}$}. Assume that \mbox{$\Sigma(\representation)\succ0$} and that every concept represented
in the \acronym{} has strictly positive MSE recoverability,
\mbox{$m_i(\representation)>0$} for every \mbox{$i=1,\ldots,n_C$}. Then there exist
\mbox{$\varepsilon>0$} and \mbox{$C>0$} such that, for every centered representation
\mbox{$\representation'$} satisfying \mbox{$\|\representation-\representation'\|_F<\varepsilon$}, $d_B\left(\acronym{}(\representation), \acronym{}(\representation')\right) \le C\,\|\representation-\representation'\|_F$ with $d_B = \wasserstein{\infty}$.
\end{theorem}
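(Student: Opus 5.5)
The argument composes two Lipschitz estimates. The first is a standard persistence stability bound on the concept mesh. The second is a local Lipschitz bound for the map from the representation to the vertex values of the mesh.

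\emph{Step 1: persistence stability.} The concept mesh $\mathcal{M}$ has a fixed combinatorial structure: one vertex per concept, one connector per pair, and one common bottom vertex. Only the vertex values depend on $\representation$. For two scalar fields $f,f'$ on the same complex, the bottleneck stability theorem for (upper-)star filtrations gives
\[
d_B\bigl(\diagram(f),\diagram(f')\bigr)\le\|f-f'\|_\infty .
\]
This also holds in the presence of the symbolic perturbation used to break ties. The essential class is assigned the global minimum, i.e. the bottom vertex. As long as that vertex is placed at a fixed value, or at a value that is itself Lipschitz in the others, this class is also controlled. It therefore suffices to show that
\[
\max_i|m_i(\representation)-m_i(\representation')| + \max_{i<j}|s_{ij}(\representation)-s_{ij}(\representation')| \le C'\,\|\representation-\representation'\|_F
\]
for all $\representation'$ in a Frobenius ball around $\representation$.

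\emph{Step 2: local Lipschitz continuity of the scores.} For a centered $\representation$, the MSE recoverability has the closed form of \autoref{sec_app_mse}:
\[
m_i(\representation)=\tfrac1N\,\mathbf c_i^\top \representation\,\Sigma(\representation)^{-1}\representation^\top\mathbf c_i ,
\]
possibly normalized by the constant $\mathcal L_i^0$, which does not depend on $\representation$. Here $\Sigma(\representation)=\representation^\top\representation/N$. Since $\Sigma(\representation)\succ0$ and $\Sigma$ is continuous, there is an $\varepsilon_0$ with $\lambda_{\min}(\Sigma(\representation'))\ge\lambda_{\min}(\Sigma(\representation))/2$ on the $\varepsilon_0$-ball. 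On that ball, matrix inversion is smooth with derivative bounded by $\|\Sigma^{-1}\|^2$, and $m_i$ is a rational function with bounded derivative. Taking the closed ball and the mean value theorem yields a Lipschitz constant for $m_i$.

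The interactions are handled similarly. The predictive-overlap interaction of \autoref{eq_app_overlap_cosine}, or equivalently the closed-form interference, is a ratio whose denominator involves $m_j$ or norms of predictive vectors $\Sigma^{-1/2}\representation^\top\mathbf c_j$ with $\|\cdot\|^2\propto m_j$. The hypothesis $m_i(\representation)>0$ for every $i$ is what keeps these denominators away from zero. Shrinking $\varepsilon$ so that $m_i(\representation')\ge m_i(\representation)/2$ makes $\rho_{ij}$ locally Lipschitz. If needed, one also uses the fact that clipping to $[0,1]$ and a symmetric $\Psi$ such as the mean are $1$-Lipschitz.

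Finally, $s_{ij}=\min(m_i,m_j)\rho_{ij}$ is a product of bounded, locally Lipschitz functions, since $\min$ is $1$-Lipschitz. It is therefore locally Lipschitz as well. Taking $C$ as the maximum of the finitely many constants gives the claim.

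\emph{Main obstacle.} I expect the delicate part to be Step 2 for the interaction scores. The denominators must be bounded below uniformly on the ball, and the exact form of the interaction (square roots of Gram quantities, possible clipping) must be checked to be Lipschitz rather than merely continuous. This is exactly where the positivity of every $m_i$ is used: a vanishing $m_j$ would make $\rho_{ij}$ discontinuous. A secondary point is justifying the stability theorem with the common bottom vertex and the convention for the essential pair. I would fix the bottom vertex at a constant value, for instance $\min$ of the other values minus $1$, and note that this choice is $1$-Lipschitz in the other values, so the essential point moves by at most $\|f-f'\|_\infty$ as well.
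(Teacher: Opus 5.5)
Your proposal is correct and follows the same route as the paper. You establish local Lipschitz continuity of the mesh values $m_i$ and $s_{ij}$ on a small Frobenius ball, using that $\Sigma$ stays positive definite, that inversion is smooth, that $m_i>0$ keeps the denominators $A_{ii}$ away from zero, and that $\min$ is Lipschitz, and then compose this with bottleneck stability for the negated upper-star filtration on the fixed concept mesh. Your explicit handling of the common bottom vertex, fixing its value so that the essential class is also controlled, makes precise a point the paper leaves implicit; the closed form you write for $m_i$ is off by a constant factor $N$, which does not affect the argument.
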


%%%%%%%%%%%%%%%%%%%%%%%%%%%%%%%%%%%%%%%%%%%%%%%%%%%%%%%%%%%%%%%%%%%%%%%%%%%%%%%

% Benefits of Topology
% - Reduce the number of vertices taken into account from the full concept mesh to 2*|C|
% - More than pairwise interactions

% Implementation details
% - Warm start probes
% - Differentiable logistic regression and LEACE (with variants)
% - Gradient stop on some saddle parts

% TODO Formulate RQ and Obs. like AlphaEdit

\section{Experiments \& Results}

We evaluate \acronym{} on two applications: \emph{concept disentanglement before erasure}, studying whether modifying concept interactions can facilitate the removal of undesired information while preserving task utility, and \emph{concept transfer}, investigating whether the concept organization of a teacher model can be transferred to a student. We organize our experiments around the following questions:

\begin{itemize}
    \item[(Q1)] \textit{How does \acronym{} disentanglement preprocessing improves existing erasure methods at similar linear concept leakage and can \acronym{} directly realize effective concept erasure?}
    \item[(Q2)] \textit{How do different experimental settings and data regimes, such as the level of erasure, spurious correlation and label imbalance, affect the task utility preservation \acronym{}?}
    \item[(Q3)] \textit{Can \acronym{} transfer concept information across models while maintaining task accuracy compared with standard knowledge-distillation methods?}
    \item[(Q4)] \textit{How does \acronym{} transfer generalize beyond the concepts explicitly optimized and how is this transfer affected by concepts discovered at increasingly local and fine spatial scales?}
\end{itemize}

%%%%%%%%%%%%%%%%%%%%%%%%%%%%%%%%%%%%%%%%%%%%%%%%%%%%%%%%%%%%%%%%%%%%%%%%%%%%%%%
%%%%%%%%%%%%%%%%%%%%%%%%%%%%%%%%%%%%%%%%%%%%%%%%%%%%%%%%%%%%%%%%%%%%%%%%%%%%%%%
%%%%%%%%%%%%%%%%%%%%%%%%%%%%%%%%%%%%%%%%%%%%%%%%%%%%%%%%%%%%%%%%%%%%%%%%%%%%%%%

\subsection{Concept Disentanglement Before Erasure}
\label{sec_exp_dis}

\myParagraph{Experimental setup.}
We consider as baselines INLP \citep{inlp}, LEACE \citep{leace}, RLACE \citep{rlace}, SPLINCE \citep{splince}, LEOPARD \citep{leopard}, KRaM \citep{kram} and Obliviator \citep{obliviator}. %, covering linear and non-linear methods as well as task-agnostic and task-preserving formulations.
We evaluate on Waterbirds \citep{waterbirds}, Bias in Bios \citep{bias_in_bios}, DIAL-Mention \citep{dial_blodgett,dial_elazar} and Jigsaw \citep{jigsaw}.
We use frozen ImageNet-pretrained ResNet-50 \citep{resnet} average-pooling features for Waterbirds and BERT-base \citep{bert} final-layer \texttt{[CLS]} representations for the three text datasets.
Each dataset defines a classification task together with an unwanted concept to erase.
All methods operate post-hoc on these fixed pretrained representations reduced to 128 dimensions using PCA fitted on the training set before intervention.
More details are provided in \subappref{sec_app_exp_setup_disent}.

% full disentanglement may be impossible if task and concept labels are correlated, it may be impossible to make statistically correlated labels independent while preserving both labels perfectly.

% CTR generalizes good disentanglement even if interference only uses LEACE.

% In practice CTR seems to make easier the tuning of the utility-erasure trade-off because of the many small gradient steps that can be adjusted instead of the few big steps as in INLP

%%%%%%%%%%%%%%%%%%%%%%%%%%%%%%%%%%%%%%%
%%%%%%%%%%%%%%%%%%%%%%%%%%%%%%%%%%%%%%%

% Generated with: /home/mathieu/my-venv/bin/python3 /media/mathieu/Data/Etude/Postdoc/ourPapers/conceptTopologicalRepresentation/figures/table_wga/make_table.py --siunitx --no-math --scriptsize --underline-second-best --select-by-validation-wga --aspect-ratio 1.3 --text-scale 2.6 --style-scale 2.9 --label-spacing 1.08 1.18 --label-point-force 1.35 1.70 --label-pull 0.028 0.040 --label-iterations 2500
% Requires \usepackage{siunitx}
% Optional resizing requires \usepackage{graphicx}
\newcommand{\tableWGATabular}{%
\begin{tabular}{l *{4}{S[table-format=2.2{ {\scriptsize \textbf{\textpm 00.00}}}, detect-weight=true, table-align-text-after=true, mode=text]}}
\toprule
\textbf{Method}              & {\textbf{Waterbirds}}                                                                           & {\textbf{Bias in Bios SN}}                                                                      & {\textbf{Dial Mention}}                                                                         & {\textbf{Jigsaw}}                                                                               \\
\midrule
\textit{Raw (WGA / Leakage)} & {\textit{56.54 / 80.22}}                                                                        & {\textit{78.77 / 98.48}}                                                                        & {\textit{39.20 / 54.57}}                                                                        & {\textit{53.29 / 41.57}}                                                                        \\
\midrule
INLP                         & 27.88 {\scriptsize \textpm 0.00}                                                                & 59.47 {\scriptsize \textpm 0.00}                                                                & 52.05 {\scriptsize \textpm 0.00}                                                                & 55.64 {\scriptsize \textpm 0.00}                                                                \\
LEACE                        & 13.71 {\scriptsize \textpm 0.00}                                                                & 59.47 {\scriptsize \textpm 0.00}                                                                & 50.45 {\scriptsize \textpm 0.00}                                                                & 52.59 {\scriptsize \textpm 0.00}                                                                \\
RLACE                        & 14.55 {\scriptsize \textpm 0.95}                                                                & 59.53 {\scriptsize \textpm 0.92}                                                                & 50.80 {\scriptsize \textpm 0.29}                                                                & 53.27 {\scriptsize \textpm 0.37}                                                                \\
SPLINCE                      & 13.71 {\scriptsize \textpm 0.00}                                                                & 58.98 {\scriptsize \textpm 0.00}                                                                & 50.45 {\scriptsize \textpm 0.00}                                                                & 52.59 {\scriptsize \textpm 0.00}                                                                \\
LEOPARD                      & 13.71 {\scriptsize \textpm 0.00}                                                                & 59.47 {\scriptsize \textpm 0.00}                                                                & 49.95 {\scriptsize \textpm 0.19}                                                                & 52.59 {\scriptsize \textpm 0.00}                                                                \\
KRaM                         & 9.08 {\scriptsize \textpm 4.01}                                                                 & 44.02 {\scriptsize \textpm 3.17}                                                                & 52.25 {\scriptsize \textpm 1.40}                                                                & 53.12 {\scriptsize \textpm 1.41}                                                                \\
Obliviator                   & 18.68 {\scriptsize \textpm 3.46}                                                                & 58.18 {\scriptsize \textpm 1.29}                                                                & 50.56 {\scriptsize \textpm 1.48}                                                                & 52.56 {\scriptsize \textpm 1.07}                                                                \\
\midrule
\acronym{}-DIS + INLP        & 66.01 {\scriptsize \textpm 8.80}                                                                & 81.12 {\scriptsize \textpm 3.85}                                                                & 54.66 {\scriptsize \textpm 1.15}                                                                & 56.71 {\scriptsize \textpm 1.12}                                                                \\
\acronym{}-DIS + LEACE       & 74.45 {\scriptsize \textpm 1.31}                                                                & 81.61 {\scriptsize \textpm 1.26}                                                                & 58.04 {\scriptsize \textpm 0.74}{\llap{\underline{\phantom{58.04 {\scriptsize \textpm 0.74}}}}} & 60.58 {\scriptsize \textpm 0.92}                                                                \\
\acronym{}-DIS + RLACE       & \bfseries 74.66 {\scriptsize \textbf{\textpm 0.96}}                                             & 81.59 {\scriptsize \textpm 1.36}                                                                & 57.78 {\scriptsize \textpm 0.73}                                                                & \bfseries 61.05 {\scriptsize \textbf{\textpm 0.72}}                                             \\
\acronym{}-DIS + SPLINCE     & 74.17 {\scriptsize \textpm 1.24}                                                                & 81.53 {\scriptsize \textpm 0.97}                                                                & 58.03 {\scriptsize \textpm 0.70}                                                                & 60.56 {\scriptsize \textpm 0.99}                                                                \\
\acronym{}-DIS + LEOPARD     & 74.47 {\scriptsize \textpm 0.89}{\llap{\underline{\phantom{74.47 {\scriptsize \textpm 0.89}}}}} & 81.47 {\scriptsize \textpm 1.43}                                                                & 57.83 {\scriptsize \textpm 0.83}                                                                & 60.61 {\scriptsize \textpm 0.87}{\llap{\underline{\phantom{60.61 {\scriptsize \textpm 0.87}}}}} \\
\acronym{}-DIS + KRaM        & 55.64 {\scriptsize \textpm 11.32}                                                               & 57.48 {\scriptsize \textpm 5.85}                                                                & 54.23 {\scriptsize \textpm 1.56}                                                                & 56.92 {\scriptsize \textpm 1.27}                                                                \\
\acronym{}-DIS + Obliviator  & 69.30 {\scriptsize \textpm 7.99}                                                                & 83.50 {\scriptsize \textpm 1.17}{\llap{\underline{\phantom{83.50 {\scriptsize \textpm 1.17}}}}} & \bfseries 58.20 {\scriptsize \textbf{\textpm 1.73}}                                             & 60.01 {\scriptsize \textpm 1.21}                                                                \\
\acronym{}-ERA               & 71.17 {\scriptsize \textpm 1.42}                                                                & \bfseries 86.52 {\scriptsize \textbf{\textpm 1.01}}                                             & 55.70 {\scriptsize \textpm 0.89}                                                                & 57.37 {\scriptsize \textpm 1.37}                                                                \\
\bottomrule
\end{tabular}%
}
\NewDocumentCommand{\tableWGA}{m m m o}{%
\begin{table}[#3]
\caption{#1}
\label{#2}
\centering
\IfNoValueTF{#4}{%
\tableWGATabular
}{%
\resizebox{#4\textwidth}{!}{\tableWGATabular}%
}%
\end{table}%
}

\tableWGA{Test task worst-group accuracy with an adjusted balanced linear concept leakage $< 10\%$.}{table_wga}{b!}

\myParagraph{(Q1)}
To compare task preservation after concept removal, we report the test worst-group accuracy (WGA), averaged over 10 seeds, of methods with an adjusted balanced linear concept leakage below $10\%$ in \autoref{table_wga} (metrics described formally in \subappref{sec_app_exp_setup_disent}).
Disentangling the task and concept with \acronym{} before erasure consistently improves task utility: \acronym{}-DIS increases WGA for every erasure method on all four datasets.
The improvement is particularly pronounced on Waterbirds and Bias in Bios, where the best baseline WGA increases from $27.88$ to $74.66$ and from $59.53$ to $83.50$, respectively.
This also shows that the interference saddle generalizes beyond the erasure operator used to define it, although constructed from LEACE, it facilitates erasure with all considered methods.
Moreover, directly optimizing \acronym{} for concept erasure (\acronym{}-ERA) outperforms every standalone erasure baseline across the four datasets and achieves the highest WGA on Bias in Bios.
Across all datasets, \acronym{}-DIS improves the WGA of each baseline by 21.89 on average, with large gains on Waterbirds (+53.91) and Bias in Bios (+21.31), while \acronym{}-ERA improves WGA by 23.43.

\iffalse
Note that concept leakage is measured through a linear probe trained with a convex loss, rather than as statistical independence from the concept. 
This follows the linear-guardedness perspective of prior concept-erasure work, for instance, LEACE guarantees that, after erasure, no linear predictor can improve over the optimal constant predictor under convex losses \citep{leace}. 
However, direct linear-guardedness need not imply that a downstream model cannot infer the concept indirectly through other preserved information \citep{loglinear_guardedness}.
This distinction
% is particularly relevant 
matters
when task and concept labels are correlated, since preserving task-predictive information may necessarily retain some predictability of the concept. 
We therefore additionally evaluate stronger leakage probes, including oracle-conditioned linear and non-linear probes, in \subappref{sec_app_exp_disent_leak}.
\fi
Note that, following prior work on linear-guardedness, e.g., LEACE \citep{leace}, concept leakage is measured through a linear probe.
We therefore additionally evaluate stronger leakage probes, including oracle-conditioned linear and non-linear probes, in \subappref{sec_app_exp_disent_leak}.
Additional analysis of global representational similarity before and after erasure is provided in \subappref{sec_app_exp_disent_cka}.

\myParagraph{Takeaway.}
\acronym{} disentanglement can effectively serve as preprocessing for a diverse set of erasure methods and systematically facilitates utility-preserving concept erasure. Moreover, \acronym{} can also perform effective task-preserving concept erasure directly within the same framework.

%%%%%%%%%%%%%%%%%%%%%%%%%%%%%%%%%%%%%%%
%%%%%%%%%%%%%%%%%%%%%%%%%%%%%%%%%%%%%%%

\iffalse
\begin{figure}[t]
    \centering
    \begin{subfigure}{0.32\textwidth}
        \centering
        \includegraphics[width=\linewidth]{figures/erasure_utility_tradeoff/erasure_utility_tradeoff_ranges_waterbirds.png}
        \caption{Erasure-utility trade-off.}
        \label{fig_erasure_utility}
    \end{subfigure}
    \hfill
    \begin{subfigure}{0.32\textwidth}
        \centering
        \includegraphics[width=\linewidth]{figures/spurious_and_imbalance/spurious_and_imbalance_dial_mention_li.png}
        \caption{WGA given label imbalance.}
        \label{fig_li}
    \end{subfigure}
    \hfill
    \begin{subfigure}{0.32\textwidth}
        \centering
        \includegraphics[width=\linewidth]{figures/spurious_and_imbalance/spurious_and_imbalance_jigsaw_sc.png}
        \caption{WGA given spurious correlation.}
        \label{fig_sc}
    \end{subfigure}
    \caption{Overall caption.}
    \label{fig_tradeoff}
\end{figure}
\fi

\begin{figure}[t]
    \centering
    \adjustbox{valign=m}{%
        \stackinset{l}{16pt}{t}{0pt}{\scriptsize (a)}{%
            \includegraphics[width=0.297\textwidth]{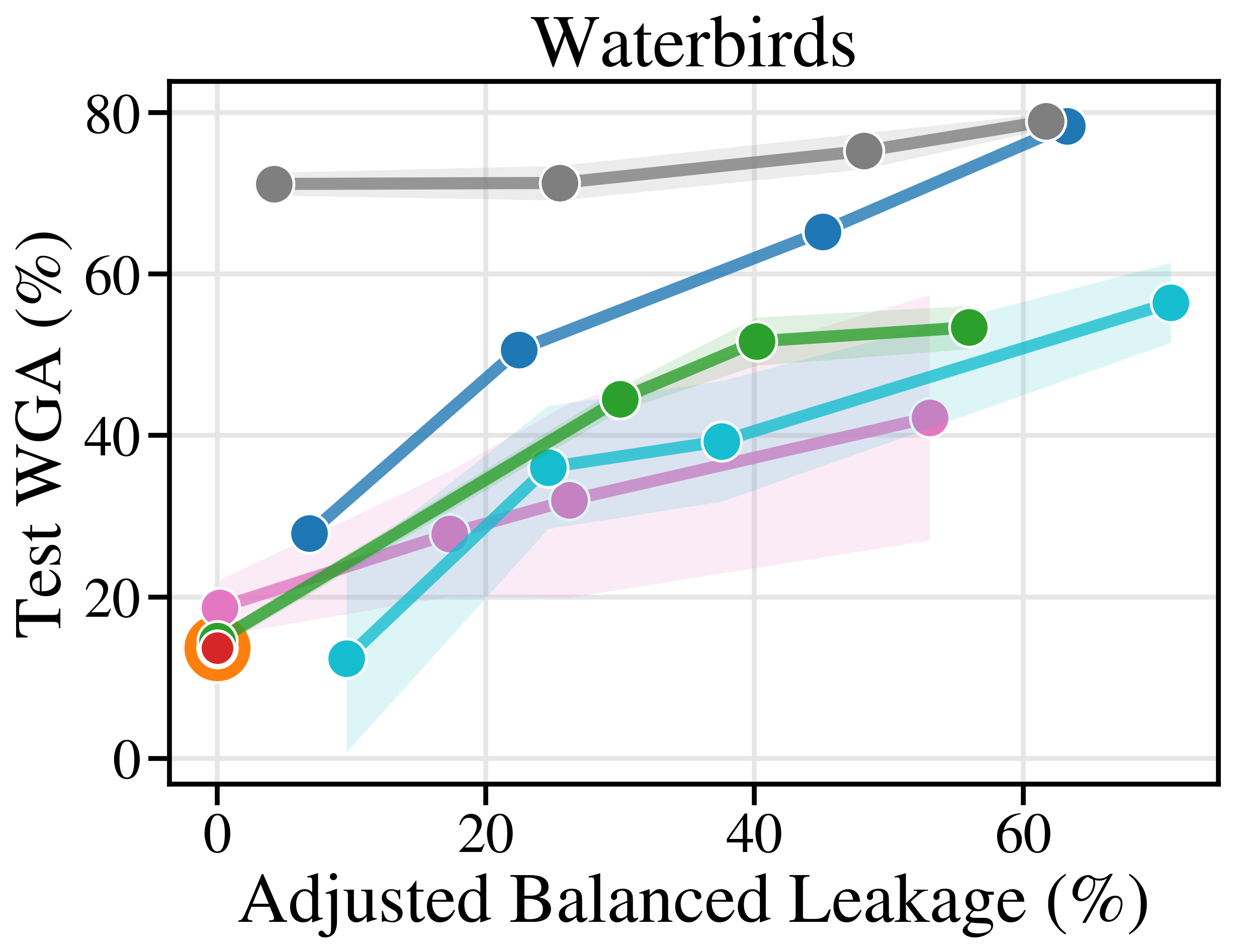}%
        }%
    }\hfill
    \adjustbox{valign=m}{%
        \stackinset{l}{16pt}{t}{0pt}{\scriptsize (b)}{%
            \includegraphics[width=0.297\textwidth]{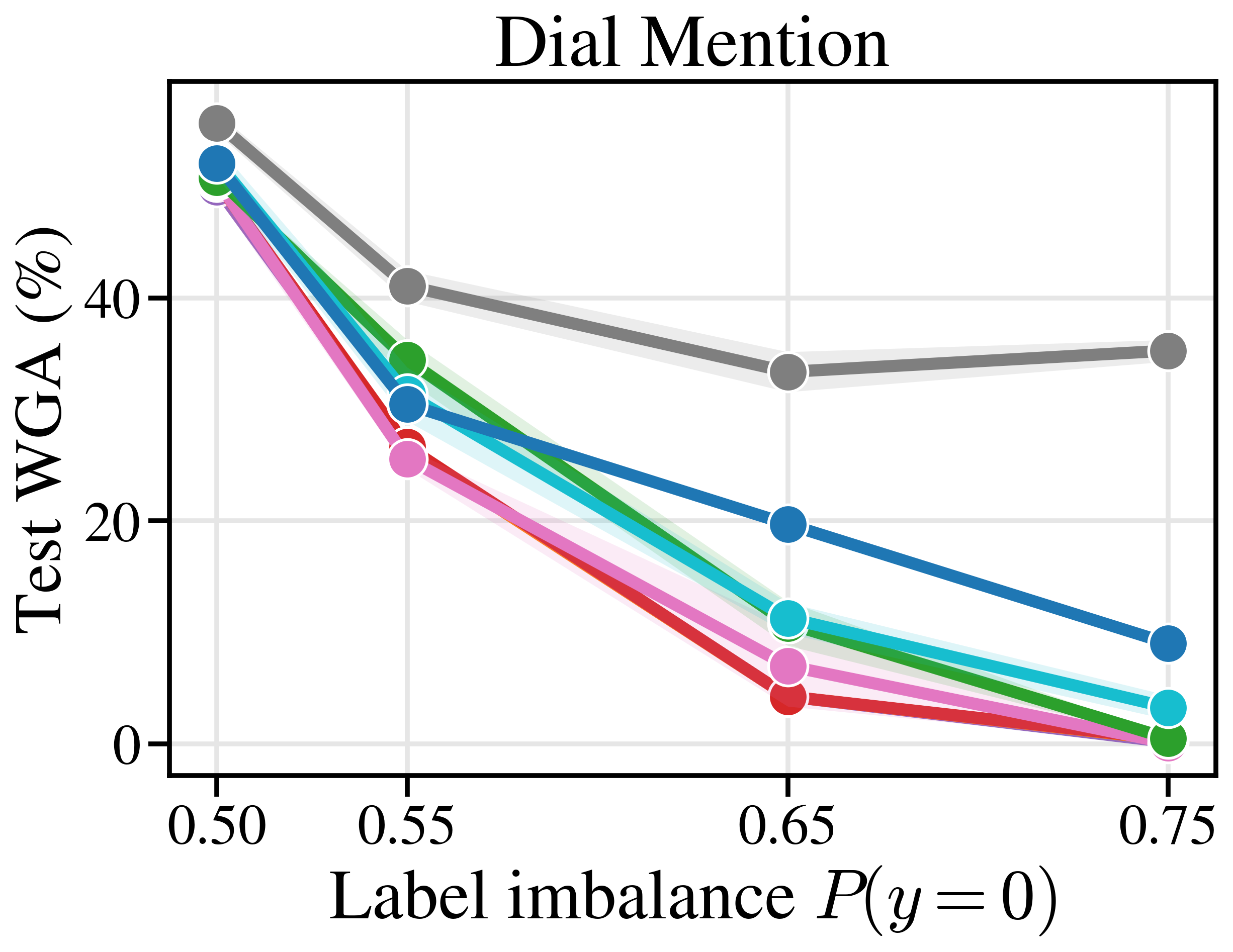}%
        }%
    }\hfill
    \adjustbox{valign=m}{%
        \stackinset{l}{16pt}{t}{0pt}{\scriptsize (c)}{%
            \includegraphics[width=0.297\textwidth]{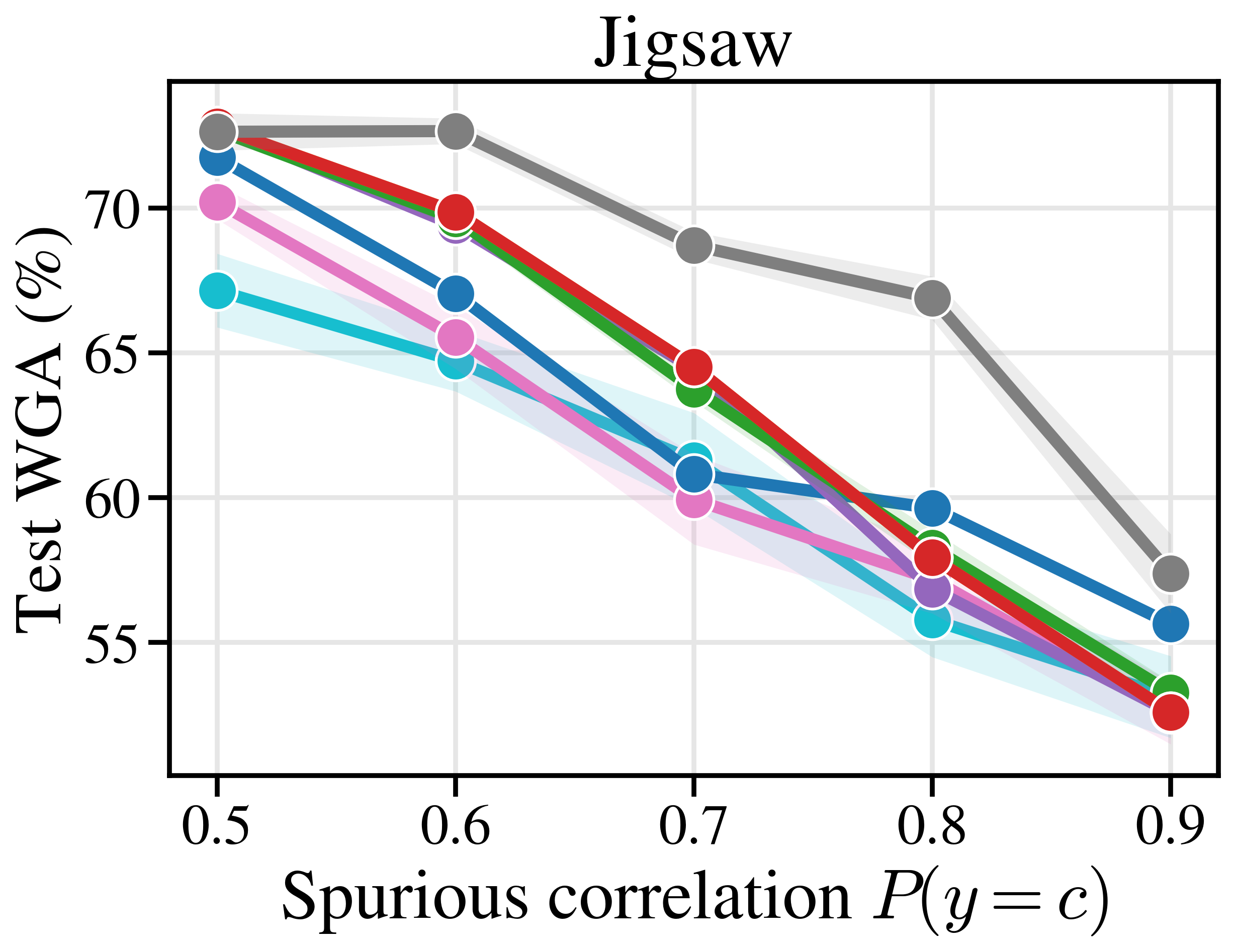}%
        }%
    }
    \raisebox{0.45em}{%
        \adjustbox{valign=m}{%
            \includegraphics[width=0.099\textwidth]{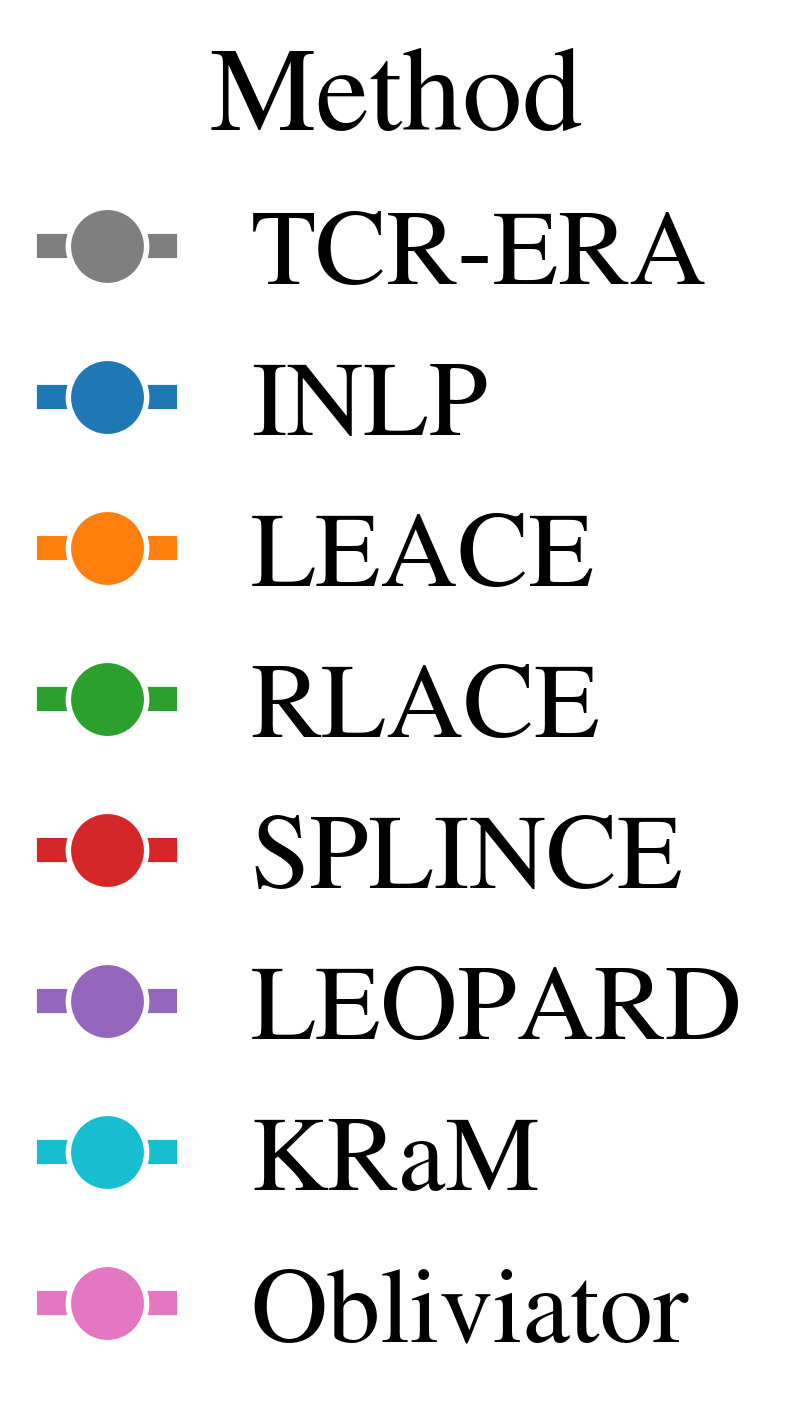}%
        }%
    }
    \caption{Erasure-utility trade-off showing the test WGA given the adjusted balanced linear leakage (a) and impact on test WGA of varying label imbalance (b) and spurious correlation (c).}
    \label{fig_tradeoff}
\end{figure}

\myParagraph{(Q2)}
To assess utility preservation under different erasure strengths and data regimes, we vary the erasure constraint, label imbalance and spurious correlation.
We first examine the erasure-utility trade-off across different erasure strengths.
On Waterbirds (\autoref{fig_tradeoff}a), \acronym{} retains $71.17$ WGA at $4.26\%$ adjusted balanced linear leakage, while INLP reaches $27.88$ WGA at $6.88\%$.
As the erasure constraint is relaxed, several baselines recover substantial task utility and the gap to \acronym{} narrows, whereas \acronym{} retains high WGA already under strong erasure.
We then vary properties of the data distribution while maintaining strong concept erasure.
The effect of increasing label imbalance is illustrated on Dial Mention (\autoref{fig_tradeoff}b), where the margin of \acronym{} over the best baseline at each setting grows from $3.46$ WGA points at $P(y=0)=0.50$ to $26.27$ at $P(y=0)=0.75$.
For increasing spurious correlation, on Jigsaw (\autoref{fig_tradeoff}c), \acronym{} is comparable to the best baseline at $P(y=c)=0.5$ and achieves the highest WGA at every stronger correlation, with a gain of up to $7.25$ points at $P(y=c)=0.8$.
Further erasure-utility and data-regime analyses are reported in \subappref{sec_app_exp_disent_tradeoff}.

\myParagraph{Takeaway.}
\acronym{} maintains a favorable erasure-utility trade-off across erasure strengths and consistently strong utility preservation relative to the baselines across data regimes, with its advantage particularly pronounced under strong erasure, label imbalance and increased spurious correlation.

%%%%%%%%%%%%%%%%%%%%%%%%%%%%%%%%%%%%%%%%%%%%%%%%%%%%%%%%%%%%%%%%%%%%%%%%%%%%%%%
%%%%%%%%%%%%%%%%%%%%%%%%%%%%%%%%%%%%%%%%%%%%%%%%%%%%%%%%%%%%%%%%%%%%%%%%%%%%%%%
%%%%%%%%%%%%%%%%%%%%%%%%%%%%%%%%%%%%%%%%%%%%%%%%%%%%%%%%%%%%%%%%%%%%%%%%%%%%%%%

\subsection{Concept Transfer}
\label{sec_exp_transfer}

%\begingroup
\myParagraph{Experimental setup.}
Following the distillation experimental protocol of \citet{contrastiveKD}, we evaluate concept transfer on CIFAR-100 \citep{cifar100}, using a ResNet32$\times$4 teacher and ResNet8$\times$4 and ShuffleNetV2 students. 
We consider as baselines a student trained without distillation (Student), KD \citep{kd}, FitNet \citep{fitnet}, SPKD \citep{spkd} and TopKD \citep{topkd}. 
For \acronym{}, concepts are discovered from the teacher penultimate representation, from the full images or from 2$\times$2, 3$\times$3 or 4$\times$4 crops \citep{craft}, and their organization is transferred to the student during distillation. Further details are given in \subappref{sec_app_exp_setup_disti}.
%\parfillskip=0pt\par
%\endgroup

%%%%%%%%%%%%%%%%%%%%%%%%%%%%%%%%%%%%%%%
%%%%%%%%%%%%%%%%%%%%%%%%%%%%%%%%%%%%%%%

\begingroup
\myParagraph{(Q3)}
To evaluate how \acronym{} transfers concept information while preserving task performance, we
report in \autoref{table_distillation} test top-1 accuracy and concept recoverability (CR $k_{20}$), being the mean held-out $R^2$ of linear probes predicting 20 teacher-derived concept coefficients
from the student representation, that can be seen as their preservation (detailed in \subappref{sec_app_exp_setup_disti}).
For ResNet8$\times$4, both \acronym{} configurations outperform all baselines on
both metrics: optimizing 10 concepts ($k_{10}$) achieves the highest
\parfillskip=0pt\par
\endgroup

% Generated with: /home/mathieu/my-venv/bin/python3 /media/mathieu/Data/Etude/Postdoc/ourPapers/conceptTopologicalRepresentation/figures/distillation/table.py tex
% Requires \usepackage{siunitx}
% Optional resizing requires \usepackage{graphicx}
\newcommand{\tableDistillationTabular}{%
\begin{tabular}{l *{4}{S[table-format=2.2{ {\scriptsize \textbf{\textpm 00.00}}}, detect-weight=true, table-align-text-after=true, mode=text]}}
\toprule
\textbf{Method} & \multicolumn{2}{c}{\textbf{ResNet8$\times$4}} & \multicolumn{2}{c}{\textbf{ShuffleNetV2}} \\
\cmidrule(lr){2-3}\cmidrule(lr){4-5}
                    & {\textbf{Top-1}}                                                                                & {\textbf{CR $k_{20}$}}                                                                          & {\textbf{Top-1}}                                                                                & {\textbf{CR $k_{20}$}}                                                                          \\
\midrule
Student             & 72.35 {\scriptsize \textpm 0.24}                                                                & 41.50 {\scriptsize \textpm 0.18}                                                                & 72.58 {\scriptsize \textpm 0.23}                                                                & 52.65 {\scriptsize \textpm 0.23}                                                                \\
KD                  & 73.35 {\scriptsize \textpm 0.24}                                                                & 54.07 {\scriptsize \textpm 0.17}                                                                & 74.62 {\scriptsize \textpm 0.19}                                                                & 66.64 {\scriptsize \textpm 0.19}                                                                \\
FitNet              & 73.19 {\scriptsize \textpm 0.23}                                                                & 41.05 {\scriptsize \textpm 0.25}                                                                & 74.25 {\scriptsize \textpm 0.15}                                                                & 53.52 {\scriptsize \textpm 0.45}                                                                \\
SPKD                & 73.00 {\scriptsize \textpm 0.24}                                                                & 54.51 {\scriptsize \textpm 0.19}                                                                & \bfseries 76.04 {\scriptsize \textbf{\textpm 0.15}}                                             & \bfseries 67.62 {\scriptsize \textbf{\textpm 0.23}}                                             \\
TopKD               & 73.20 {\scriptsize \textpm 0.28}                                                                & 54.18 {\scriptsize \textpm 0.25}                                                                & 74.98 {\scriptsize \textpm 0.28}                                                                & 66.58 {\scriptsize \textpm 0.14}                                                                \\
\midrule
\acronym{} $k_{10}$ & \bfseries 73.68 {\scriptsize \textbf{\textpm 0.47}}                                             & 57.18 {\scriptsize \textpm 0.06}{\llap{\underline{\phantom{57.18 {\scriptsize \textpm 0.06}}}}} & 74.97 {\scriptsize \textpm 0.70}                                                                & 66.83 {\scriptsize \textpm 0.37}{\llap{\underline{\phantom{66.83 {\scriptsize \textpm 0.37}}}}} \\
\acronym{} $k_{20}$ & 73.55 {\scriptsize \textpm 0.28}{\llap{\underline{\phantom{73.55 {\scriptsize \textpm 0.28}}}}} & \bfseries 60.05 {\scriptsize \textbf{\textpm 0.35}}                                             & 75.02 {\scriptsize \textpm 0.54}{\llap{\underline{\phantom{75.02 {\scriptsize \textpm 0.54}}}}} & 66.67 {\scriptsize \textpm 0.28}                                                                \\
\bottomrule
\end{tabular}%
}
\NewDocumentCommand{\tableDistillation}{m m m o}{%
\begin{table}[#3]
\caption{#1}
\label{#2}
\centering
\IfNoValueTF{#4}{%
\tableDistillationTabular
}{%
\resizebox{#4\textwidth}{!}{\tableDistillationTabular}%
}%
\end{table}%
}

%\tableDistillation{Caption.}{table_disti}{t!}

\begin{wrapstuff}[
    r,
    width=0.3\textwidth,
    type=figure,
    leftsep=0.02\textwidth,
    voffset=0.8\baselineskip
]
    \centering

    \includegraphics[width=\linewidth]
        {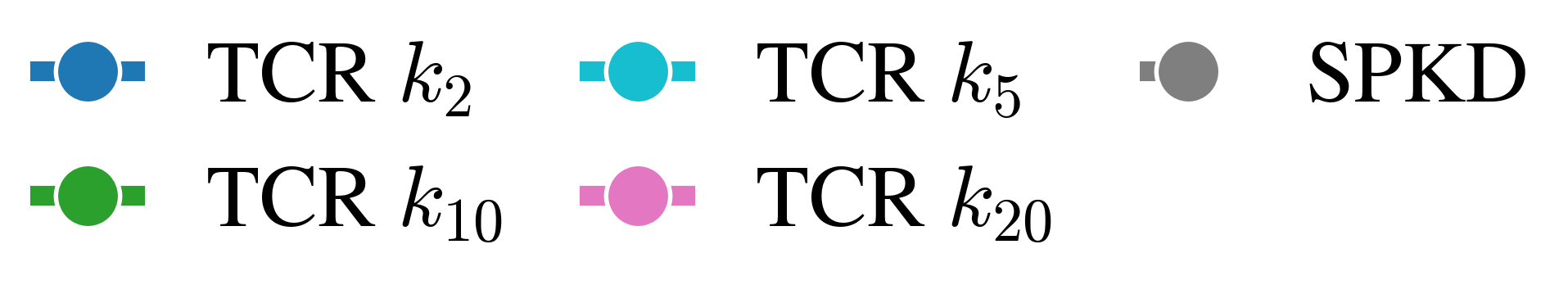}

    \stackinset{l}{10pt}{t}{0pt}{\scriptsize (a)}{%
        \includegraphics[width=\linewidth]
            {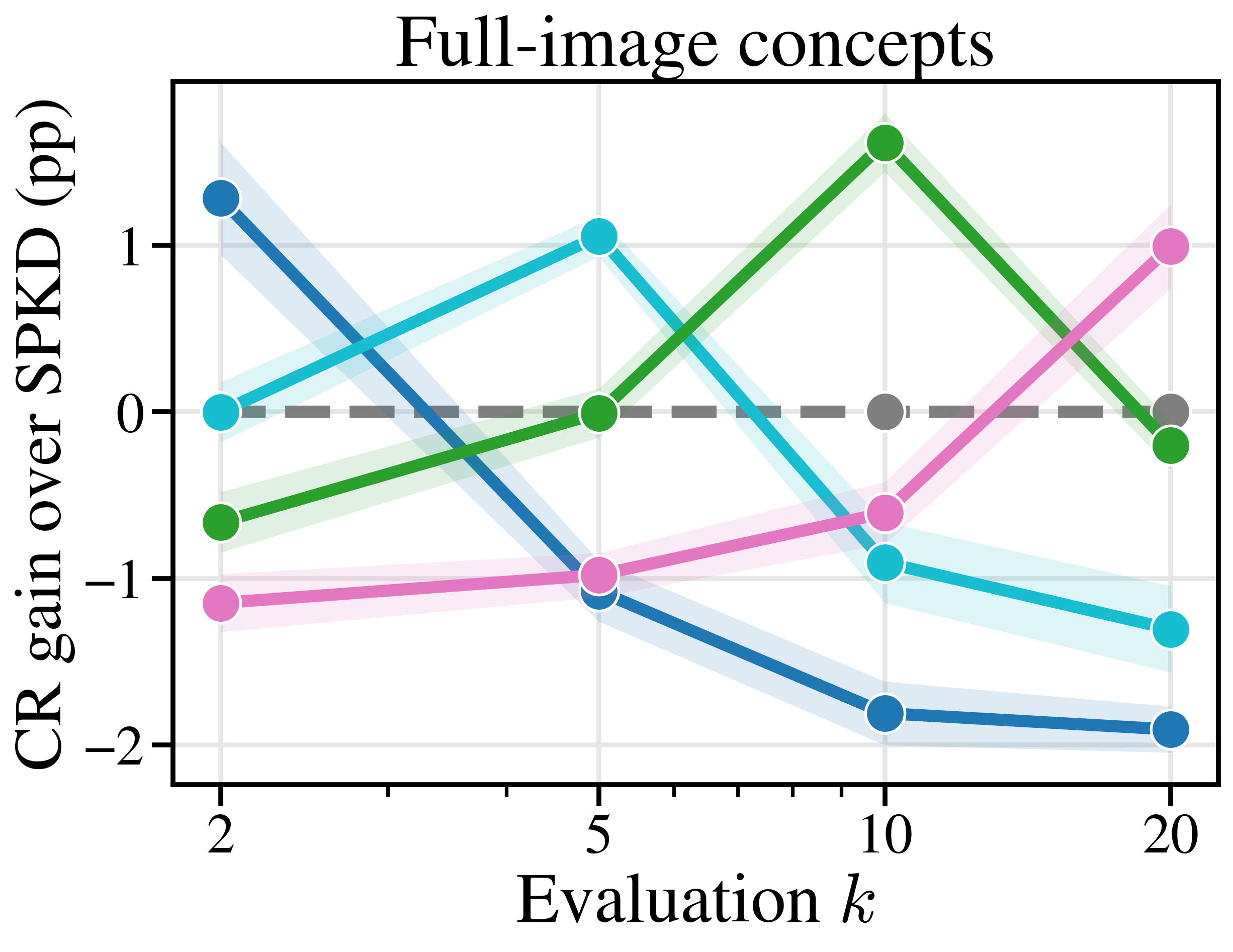}%
    }

    \stackinset{l}{10pt}{t}{0pt}{\scriptsize (b)}{%
        \includegraphics[width=\linewidth]
            {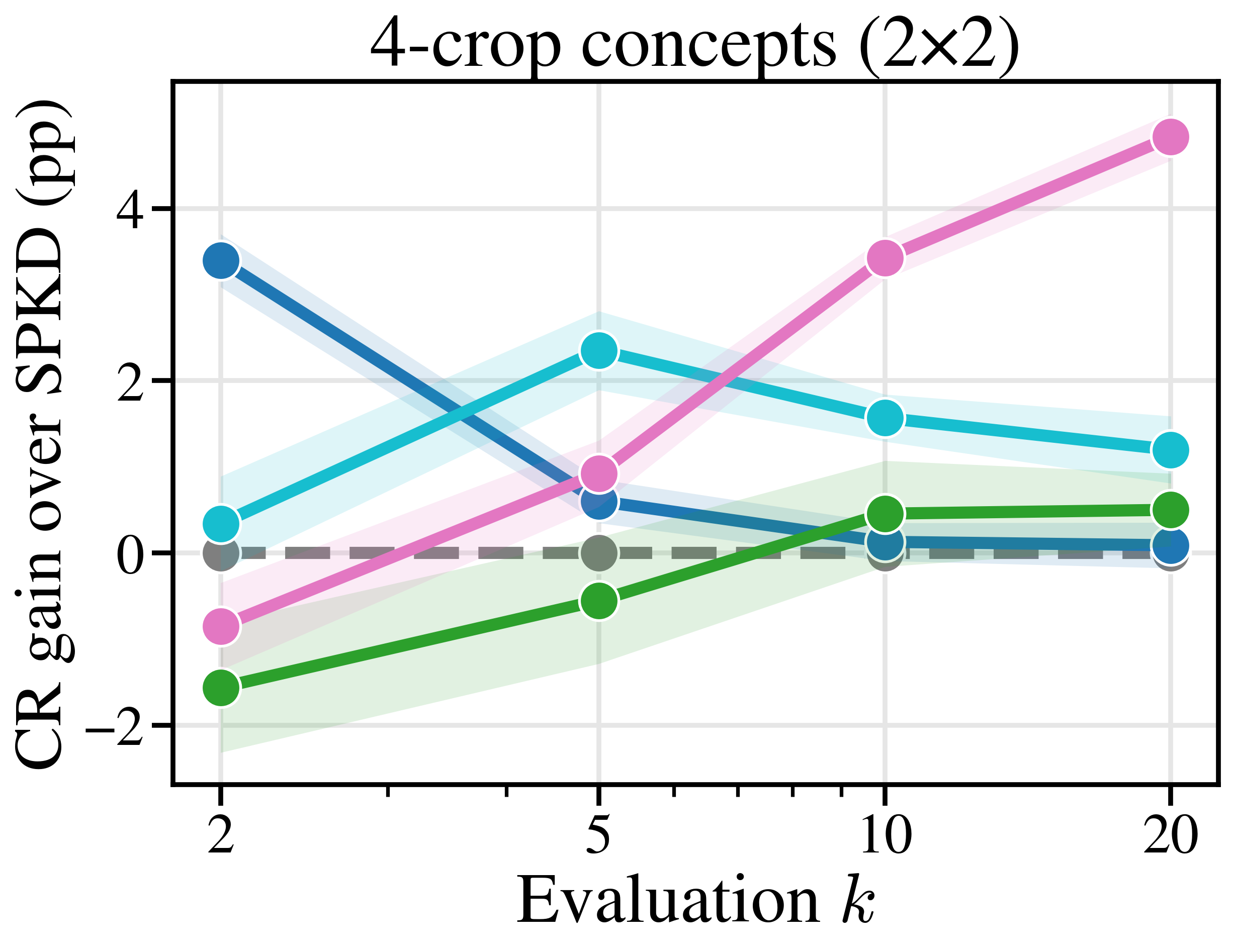}%
    }

    \stackinset{l}{10pt}{t}{0pt}{\scriptsize (c)}{%
        \includegraphics[width=\linewidth]
            {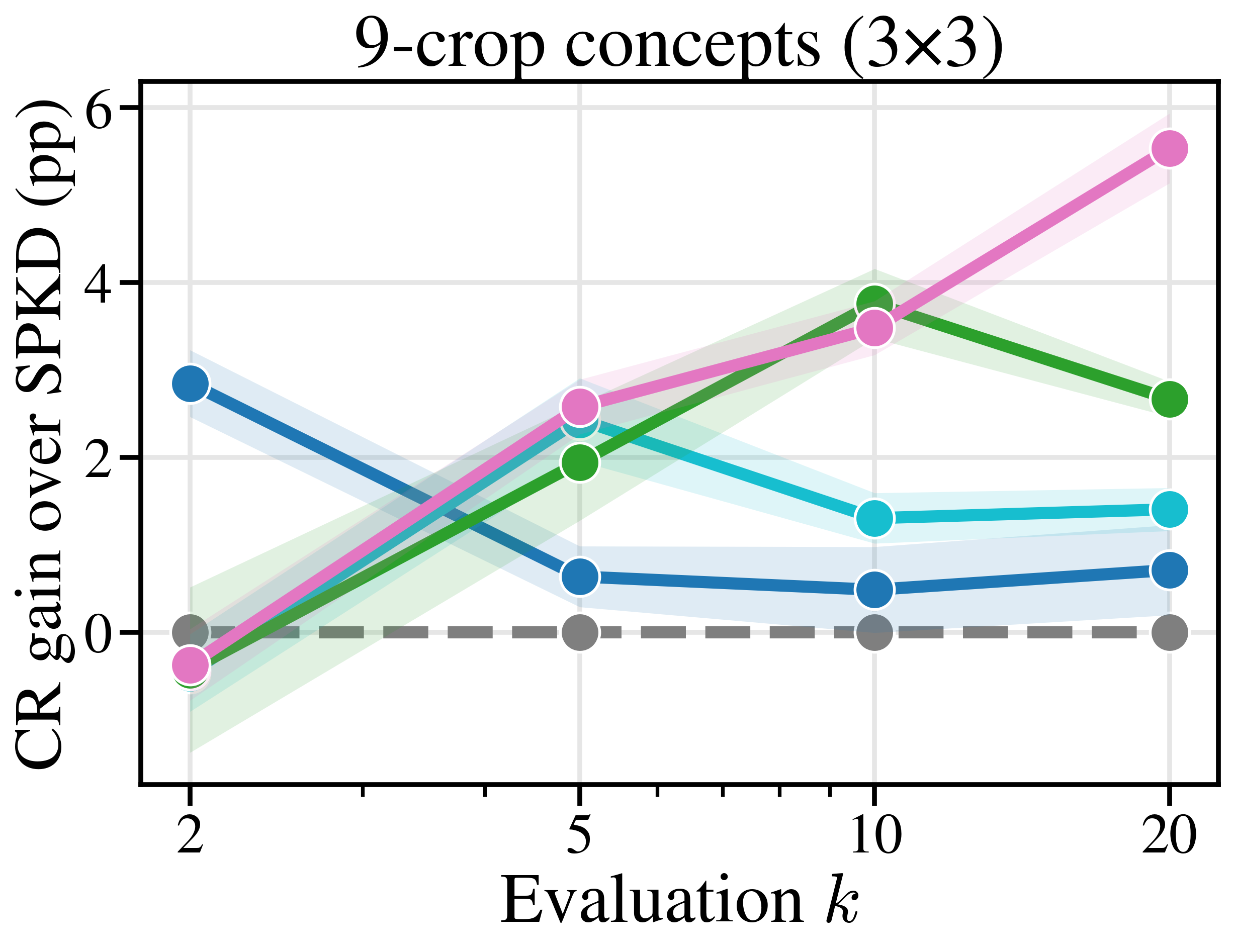}%
    }

    \stackinset{l}{10pt}{t}{0pt}{\scriptsize (d)}{%
        \includegraphics[width=\linewidth]
            {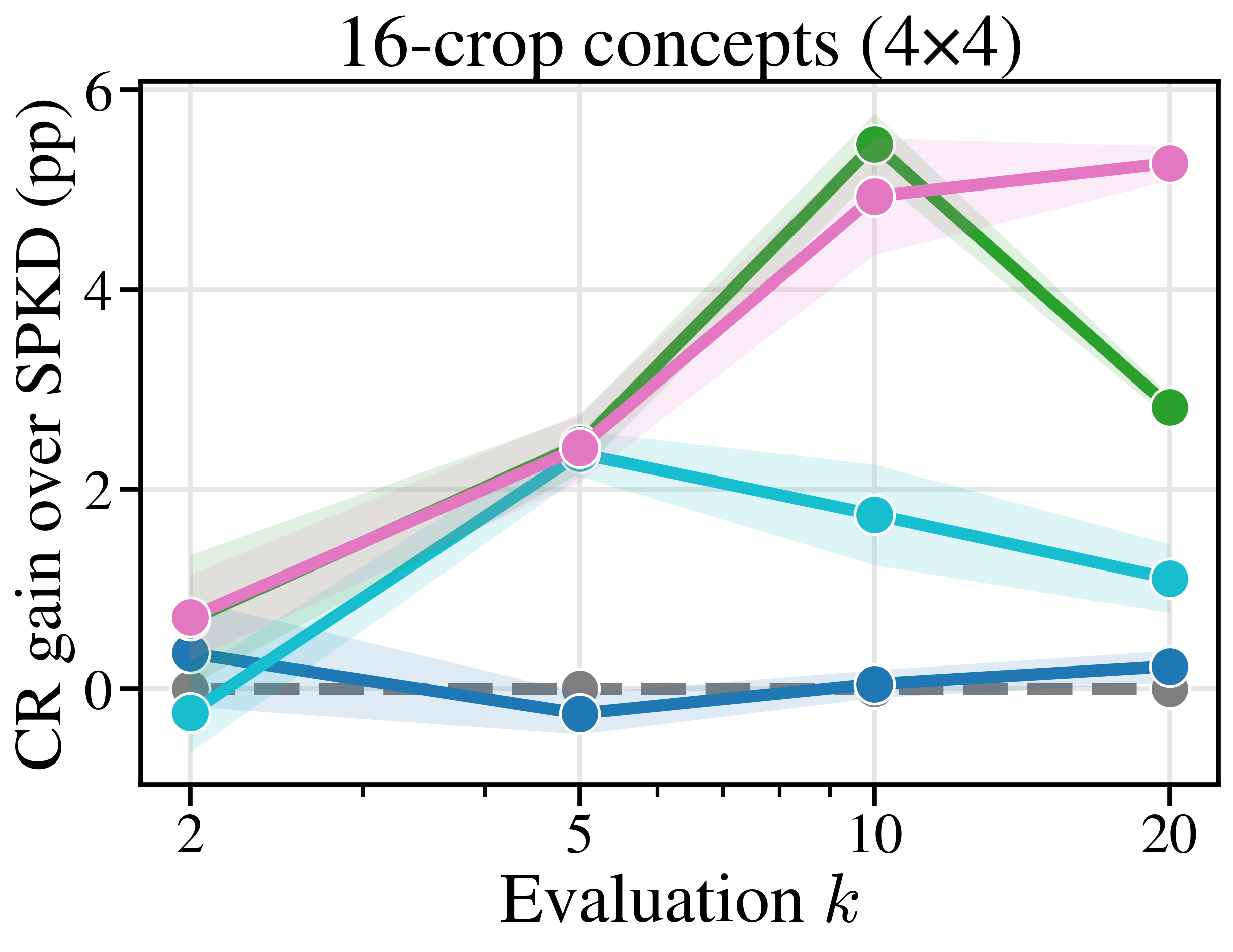}%
    }

    \caption{Impact of concept spatial scale on recoverability.}
    \label{fig_distillation}
\end{wrapstuff}

% Non-floating table occupying the available left-hand width
% Start the wrapped paragraph before entering a minipage; otherwise the
% minipage resets wrapstuff while the figure is still pending.
\noindent\mbox{}\par
\vspace{-\baselineskip}
\noindent
\begin{minipage}{0.67\textwidth}
    \centering
    \captionof{table}{
%Test task accuracy and concept recoverability (CR) for concept transfer using concepts discovered from a $3\times3$ crop grid.
Distillation test top-1 accuracy and concept recoverability.
}
    \label{table_distillation}

    \resizebox{\linewidth}{!}{%
        \tableDistillationTabular
    }
\end{minipage}

\noindent
top-1 accuracy ($73.68$) while $k_{20}$ achieves the highest CR ($60.05$). %, indicating a trade-off between top-1 accuracy and CR.
Compared with the strongest 
baseline for each metric, this corresponds to gains of $0.33$
for top-1 
%points 
over KD and $5.54$ for CR 
%points 
over SPKD, respectively.
Averaged over all baselines, gains are respectively $0.60$ and $9.55$ points.
For ShuffleNetV2, SPKD achieves the strongest results on both metrics
% ($76.04$ top-1 and $67.62$ CR), 
while \acronym{} remains close in CR. % but obtains lower task accuracy. %, reaching $66.73$ CR and $74.82$ top-1 with $k_{10}$.

\myParagraph{Takeaway.}
\acronym{} can transfer teacher-derived concept information to a student while preserving task
performance, simultaneously improving concept recoverability and task accuracy (or maintaining it).
% over standard distillation baselines. 
% for ResNet8$\times$4. On ShuffleNetV2, \acronym{} retains comparable concept recoverability but lower task accuracy.

%%%%%%%%%%%%%%%%%%%%%%%%%%%%%%%%%%%%%%%
%%%%%%%%%%%%%%%%%%%%%%%%%%%%%%%%%%%%%%%

\iffalse
\begin{figure}[t]
    \centering

    \includegraphics[width=0.297\textwidth]{figures/distillation/concept_recoverability_legend_two_rows.png}%
    \\

    \stackinset{l}{10pt}{t}{0pt}{\scriptsize (a)}{%
        \includegraphics[width=0.297\textwidth]{figures/distillation/concept_recoverability_resnet8x4_base.png}%
    }%
    \\

    \stackinset{l}{10pt}{t}{0pt}{\scriptsize (b)}{%
        \includegraphics[width=0.297\textwidth]{figures/distillation/concept_recoverability_resnet8x4_cc_4.png}%
    }%
    \\

    \stackinset{l}{10pt}{t}{0pt}{\scriptsize (c)}{%
        \includegraphics[width=0.297\textwidth]{figures/distillation/concept_recoverability_resnet8x4_cc_9.png}%
    }%
    \\

    \stackinset{l}{10pt}{t}{0pt}{\scriptsize (d)}{%
        \includegraphics[width=0.297\textwidth]{figures/distillation/concept_recoverability_resnet8x4_cc_16.png}%
    }%

    \caption{.}
    \label{fig_distillation}
\end{figure}
\fi

\myParagraph{(Q4)}
To assess whether \acronym{} transfers concept information beyond the concept bank used during
optimization, we train \acronym{} with $k_{\mathrm{train}}\in\{2,5,10,20\}$ concepts and
evaluate CR using all $k_{\mathrm{eval}}\in\{2,5,10,20\}$ concept banks in
\autoref{fig_distillation}.
We report the CR gain over SPKD, the best baseline regarding CR, with $0$ indicating equal recoverability.
Since concept banks for different values of $k$ are independently discovered, settings with
$k_{\mathrm{train}}\neq k_{\mathrm{eval}}$ measure transfer to concepts that were not explicitly
optimized.
For full-image concepts (\autoref{fig_distillation}a), \acronym{} improves over SPKD for every
matched $k_{\mathrm{train}}=k_{\mathrm{eval}}$ setting, but not when evaluated on different
%concept 
bank sizes. %, for which the mean gain is $-0.88$ CR points.
This behavior changes as concepts become increasingly local: the mean gain when
$k_{\mathrm{train}}\neq k_{\mathrm{eval}}$ becomes $+0.48$, $+1.16$ and $+1.39$ 
%points 
for the
2$\times$2, 3$\times$3 and 4$\times$4 crop grids, respectively.
Likewise, the mean gain across all training and evaluation banks increases from $-0.35$ for
full-image concepts to $+1.05$, $+1.78$ and $+1.88$ 
%points
at 
%increasingly 
finer spatial scales.
At the two finest scales, larger training banks yield particularly broad transfer; for example,
training with $k_{20}$ improves CR on the independently discovered $k_{10}$ bank by $3.48$
%points 
for the 3$\times$3 grid and $4.94$ 
%points 
for the 4$\times$4 grid.
These results indicate a transition from primarily bank-specific transfer for global concepts
to broader cross-bank transfer as concepts become more spatially localized.
One possible interpretation is that global concepts may induce more bank-specific structure, whereas local concepts expose more reusable structure that are shared across independently discovered banks.
% One possible interpretation is that full-image concepts induce more bank-specific structure, whereas localized concepts expose structure shared across independently discovered concept banks.
% However, this effect may also partly reflect the increasing difficulty of recovering local concepts with standard distillation: the mean SPKD CR decreases from $78.05$ for full-image concepts to $65.68$, $60.75$ and $57.63$ at increasingly fine spatial scales, leaving more room for explicit concept-structure transfer.
Part of this effect may also arise because localized concepts are less recoverable for the baselines, leaving more room for 
improvement. 
%explicit concept-structure transfer. 
%explicit concept transfer. % of teacher-derived concept information.
% Overall, these results indicate a transition from primarily bank-specific transfer for global concepts to broader transfer across concept banks as concepts become more spatially localized.

\myParagraph{Takeaway.}
For global concepts, \acronym{} primarily improves recoverability of the concept bank used
during optimization. As concepts become increasingly local, its advantage over SPKD grows and
extends to independently discovered concept banks of different sizes. %, particularly when \acronym{} is trained with richer concept vocabularies.

\wrapstuffclear

% Limitations:
% - Dependence on the chosen concept set;
% - Dependence on the probe and erasure operator;
% - Operational interaction versus causal dependence;
% - Pairwise measurements versus genuinely higher-order interactions;
% - Scalability with the number of concepts;
% - Whether topology is unique or only one useful abstraction;
% - Nonlinear information that may remain after linear editing;
% - Distribution-shift sensitivity.

% - PCA dim 128
% - No multi-class

\section{Conclusion}

% Future work:
% - Concept interactions on more than two concepts (give a better reduction from the concept mesh to the persistence diagram)
% - Entanglement
% - CTR on many layers instead of only the penultimate one
% - LLM entanglement of performance and safety guards
% - Visualization tool
% - Also optimize backbone
% - How much users can use that easily, cite work of a guy at TORI workshop
% - How persistence compares to other concept importance measures

We introduced \acronym{}, a topological abstraction that jointly represents concepts and their relationships and provides a common interface for concept-level interventions. Across concept erasure, disentanglement and transfer, our results demonstrate its ability to manipulate concept organization while preserving task-relevant information. % (improving respectively WGA by $21.89$ and CR by up to $5.54$).
For disentanglement, it improves WGA by $21.89$ on average at comparable concept leakage and, for transfer, it improves concept recoverability by up to $5.54$. % while improving or maintaining competitive test top-1 accuracy.

% The following are the requested sections and do not count toward the page limit.

\subsection*{AI use statement}

In this work, we used generative AI tools for 
assisting in the writing of proofs, % 5
designing or providing feedback on research methodology or experiments, % 7
implementing methods, % 8
supporting qualitative and thematic data analysis and % 10
interpreting results. % 11
We have not used generative AI tools for
generating synthetic data sets, % 1
helping developing theoretical models or conceptual frameworks, % 2
formulating mathematical claims, % 3
providing critical ingredients for proving mathematical claims, % 4
proposing or refining hypotheses, % 6
assisting with translation and % 9
cleaning and reformatting dataset.
% and [the rest of the required disclosure tasks] are not applicable to this work.
Additionally, we used generative AI tools for 
% Formulate questions for surveys or interviews, 
% create or modify scientific figures or images, 
suggesting experimental parameters, 
creating or editing software code,
% creation of artifacts, 
% draft parts of a research paper, 
% transcribe recordings of research material, 
summarize or analyse existing literature,
% discover research topics or identify gaps, 
brainstorming,
sourcing/searching for information, 
editing a research paper to improve readability and
identifying relevant literature.
% format references, 
% suggest a structure for a research paper, 
% propose a title or keywords for a research paper.
We have reviewed all AI-assisted work. 
% [Elaborate. For example, “we checked LLM-generated research ideas for potential plagiarism through a manual literature survey”, “LLM-generated code was verified and tested for correctness by 2 authors”, etc.]. 
We take responsibility for the final content of this work, including text, claims or artifacts produced with the aid of generative AI.

\subsection*{Ethics Statement}
Our experiments use existing benchmark datasets, including human-generated text with demographic, dialectal and identity-related annotations or proxies. We do not collect new human-subject data. Such annotations and proxies may be imperfect and should not be interpreted as definitive characterizations of individuals or groups. While concept erasure can help reduce unwanted information in learned representations, removing a selected concept does not by itself guarantee fairness or complete removal of the corresponding information; we therefore evaluate leakage using complementary diagnostics. More generally, representation interventions can be applied to arbitrary concepts, and their use in high-stakes applications requires careful validation of both the information removed and the utility preserved.

\subsection*{Reproducibility Statement}
We provide the code used to produce the results of this paper as supplementary material with the anonymous submission. Detailed experimental protocols, including datasets and splits, model architectures, hyperparameter selection and evaluation metrics, are provided in the appendix. Upon de-anonymization, we will make the code publicly available on GitHub.

\bibliography{iclr2027_conference}

\begin{thebibliography}{56}
\providecommand{\natexlab}[1]{#1}
\providecommand{\url}[1]{\texttt{#1}}
\expandafter\ifx\csname urlstyle\endcsname\relax
  \providecommand{\doi}[1]{doi: #1}\else
  \providecommand{\doi}{doi: \begingroup \urlstyle{rm}\Url}\fi

\bibitem[Akbari et~al.(2025)Akbari, Afshari, and Boddeti]{obliviator}
Ramin Akbari, Milad Afshari, and Vishnu Boddeti.
\newblock Obliviator reveals the cost of nonlinear guardedness in concept
  erasure.
\newblock In D.~Belgrave, C.~Zhang, H.~Lin, R.~Pascanu, P.~Koniusz,
  M.~Ghassemi, and N.~Chen (eds.), \emph{Advances in Neural Information
  Processing Systems}, volume 38, Main Conference, pp.\  144860--144892. Curran
  Associates, Inc., 2025.
\newblock \doi{10.52202/085713-4847}.
\newblock URL
  \url{https://proceedings.neurips.cc/paper_files/paper/2025/file/d55f39791f04745b2e0c8abebf3dd5d7-Paper-Conference.pdf}.

\bibitem[Ansuini et~al.(2019)Ansuini, Laio, Macke, and
  Zoccolan]{ansuiniRepresentation}
Alessio Ansuini, Alessandro Laio, Jakob~H. Macke, and Davide Zoccolan.
\newblock \emph{Intrinsic dimension of data representations in deep neural
  networks}.
\newblock Curran Associates Inc., Red Hook, NY, USA, 2019.

\bibitem[Bau et~al.(2017)Bau, Zhou, Khosla, Oliva, and
  Torralba]{networkDissection}
David Bau, Bolei Zhou, Aditya Khosla, Aude Oliva, and Antonio Torralba.
\newblock Network dissection: Quantifying interpretability of deep visual
  representations.
\newblock In \emph{2017 IEEE Conference on Computer Vision and Pattern
  Recognition (CVPR)}, pp.\  3319--3327, 2017.
\newblock \doi{10.1109/CVPR.2017.354}.

\bibitem[Belrose et~al.(2023)Belrose, Schneider-Joseph, Ravfogel, Cotterell,
  Raff, and Biderman]{leace}
Nora Belrose, David Schneider-Joseph, Shauli Ravfogel, Ryan Cotterell, Edward
  Raff, and Stella Biderman.
\newblock Leace: Perfect linear concept erasure in closed form.
\newblock In A.~Oh, T.~Naumann, A.~Globerson, K.~Saenko, M.~Hardt, and
  S.~Levine (eds.), \emph{Advances in Neural Information Processing Systems},
  volume~36, pp.\  66044--66063. Curran Associates, Inc., 2023.
\newblock \doi{10.52202/075280-2884}.
\newblock URL
  \url{https://proceedings.neurips.cc/paper_files/paper/2023/file/d066d21c619d0a78c5b557fa3291a8f4-Paper-Conference.pdf}.

\bibitem[Bertsekas(1981)]{Bertsekas81}
Dimitri~P. Bertsekas.
\newblock A new algorithm for the assignment problem.
\newblock \emph{Mathematical Programming}, 21\penalty0 (1):\penalty0 152--171,
  1981.
\newblock \doi{10.1007/BF01584237}.

\bibitem[Blodgett et~al.(2016)Blodgett, Green, and O'Connor]{dial_blodgett}
Su~Lin Blodgett, Lisa Green, and Brendan~T. O'Connor.
\newblock Demographic dialectal variation in social media: A case study of
  african-american english.
\newblock In Jian Su, Xavier Carreras, and Kevin Duh (eds.), \emph{EMNLP}, pp.\
   1119--1130. The Association for Computational Linguistics, 2016.
\newblock ISBN 978-1-945626-25-8.
\newblock URL
  \url{http://dblp.uni-trier.de/db/conf/emnlp/emnlp2016.html#BlodgettGO16}.

\bibitem[Borkan et~al.(2019)Borkan, Dixon, Sorensen, Thain, and
  Vasserman]{jigsaw}
Daniel Borkan, Lucas Dixon, Jeffrey Sorensen, Nithum Thain, and Lucy Vasserman.
\newblock Nuanced metrics for measuring unintended bias with real data for text
  classification.
\newblock In Sihem Amer-Yahia, Mohammad Mahdian, Ashish Goel, Geert-Jan Houben,
  Kristina Lerman, Julian~J. McAuley, Ricardo Baeza-Yates, and Leila Zia
  (eds.), \emph{WWW (Companion Volume)}, pp.\  491--500. ACM, 2019.
\newblock ISBN 978-1-4503-6675-5.
\newblock URL
  \url{http://dblp.uni-trier.de/db/conf/www/www2019c.html#BorkanDSTV19}.

\bibitem[Bubenik \& Elchesen(2022)Bubenik and Elchesen]{bubenik22}
Peter Bubenik and Alex Elchesen.
\newblock Universality of persistence diagrams and the bottleneck and
  wasserstein distances.
\newblock \emph{Comput. Geom. Theory Appl.}, 105\penalty0 (C), August 2022.
\newblock ISSN 0925-7721.
\newblock \doi{10.1016/j.comgeo.2022.101882}.
\newblock URL \url{https://doi.org/10.1016/j.comgeo.2022.101882}.

\bibitem[Carriere et~al.(2021)Carriere, Chazal, Glisse, Ike, Kannan, and
  Umeda]{carriere21}
Mathieu Carriere, Frederic Chazal, Marc Glisse, Yuichi Ike, Hariprasad Kannan,
  and Yuhei Umeda.
\newblock Optimizing persistent homology based functions.
\newblock In \emph{Proceedings of the 38th International Conference on Machine
  Learning}, volume 139, pp.\  1294--1303. PMLR, 18--24 Jul 2021.
\newblock URL \url{https://proceedings.mlr.press/v139/carriere21a.html}.

\bibitem[Carrière et~al.(2026)Carrière, Ike, Lacombe, and
  Nishikawa]{topoOptSurvey26}
Mathieu Carrière, Yuichi Ike, Théo Lacombe, and Naoki Nishikawa.
\newblock Persistence-based topological optimization: a survey, 03 2026.

\bibitem[Che et~al.(2024)Che, Galaz-Garc{\'i}a, Guijarro, and
  Membrillo~Solis]{che24}
Mauricio Che, Fernando Galaz-Garc{\'i}a, Luis Guijarro, and Ingrid~Amaranta
  Membrillo~Solis.
\newblock Metric geometry of spaces of persistence diagrams.
\newblock \emph{Journal of Applied and Computational Topology}, 8\penalty0
  (8):\penalty0 2197--2246, Dec 2024.
\newblock ISSN 2367-1734.
\newblock \doi{10.1007/s41468-024-00189-2}.
\newblock URL \url{https://doi.org/10.1007/s41468-024-00189-2}.

\bibitem[Chen et~al.(2019)Chen, Ni, Bai, and Wang]{chen19}
Chao Chen, Xiuyan Ni, Qinxun Bai, and Yusu Wang.
\newblock A topological regularizer for classifiers via persistent homology.
\newblock In \emph{Proceedings of the 22nd International Conference on
  Artificial Intelligence and Statistics}, volume~89, pp.\  2573--2582. PMLR,
  16--18 Apr 2019.
\newblock URL \url{https://proceedings.mlr.press/v89/chen19g.html}.

\bibitem[Chowdhury et~al.(2023)Chowdhury, Monath, Dubey, Ahmed, and
  Chaturvedi]{kram}
Somnath Basu~Roy Chowdhury, Nicholas Monath, Avinava Dubey, Amr Ahmed, and
  Snigdha Chaturvedi.
\newblock Robust concept erasure via kernelized rate-distortion maximization.
\newblock In \emph{Proceedings of the 37th International Conference on Neural
  Information Processing Systems}, NIPS '23, Red Hook, NY, USA, 2023. Curran
  Associates Inc.

\bibitem[Clémot et~al.(2026)Clémot, Digne, and Tierny]{topoAE++26}
Mattéo Clémot, Julie Digne, and Julien Tierny.
\newblock Topological autoencoders++: Fast and accurate cycle-aware
  dimensionality reduction.
\newblock \emph{IEEE Transactions on Visualization and Computer Graphics},
  32\penalty0 (2):\penalty0 1622--1639, 2026.
\newblock \doi{10.1109/TVCG.2025.3644671}.

\bibitem[Cuturi(2013)]{cuturi2013Sinkhorn}
Marco Cuturi.
\newblock Sinkhorn distances: lightspeed computation of optimal transport.
\newblock In \emph{Proceedings of the 27th International Conference on Neural
  Information Processing Systems - Volume 2}, NIPS'13, pp.\  2292–2300, Red
  Hook, NY, USA, 2013. Curran Associates Inc.

\bibitem[De-Arteaga et~al.(2019)De-Arteaga, Romanov, Wallach, Chayes, Borgs,
  Chouldechova, Geyik, Kenthapadi, and Kalai]{bias_in_bios}
Maria De-Arteaga, Alexey Romanov, Hanna~M. Wallach, Jennifer~T. Chayes,
  Christian Borgs, Alexandra Chouldechova, Sahin~Cem Geyik, Krishnaram
  Kenthapadi, and Adam~Tauman Kalai.
\newblock Bias in bios: A case study of semantic representation bias in a
  high-stakes setting.
\newblock In danah boyd and Jamie~H. Morgenstern (eds.), \emph{FAT}, pp.\
  120--128. ACM, 2019.
\newblock URL
  \url{http://dblp.uni-trier.de/db/conf/fat/fat2019.html#De-ArteagaRWCBC19}.

\bibitem[Devlin et~al.(2019)Devlin, Chang, Lee, and Toutanova]{bert}
Jacob Devlin, Ming-Wei Chang, Kenton Lee, and Kristina Toutanova.
\newblock {BERT}: Pre-training of deep bidirectional transformers for language
  understanding.
\newblock In Jill Burstein, Christy Doran, and Thamar Solorio (eds.),
  \emph{Proceedings of the 2019 Conference of the North {A}merican Chapter of
  the Association for Computational Linguistics: Human Language Technologies,
  Volume 1 (Long and Short Papers)}, pp.\  4171--4186, Minneapolis, Minnesota,
  June 2019. Association for Computational Linguistics.
\newblock \doi{10.18653/v1/N19-1423}.
\newblock URL \url{https://aclanthology.org/N19-1423/}.

\bibitem[Edelsbrunner \& Harer(2009)Edelsbrunner and Harer]{edelsbrunner09}
H.~Edelsbrunner and J.~Harer.
\newblock \emph{Computational Topology: An Introduction}.
\newblock American Mathematical Society, 2009.

\bibitem[Edelsbrunner \& Mucke(1990)Edelsbrunner and Mucke]{edelsbrunner90}
Herbert Edelsbrunner and Ernst~P Mucke.
\newblock Simulation of simplicity: a technique to cope with degenerate cases
  in geometric algorithms.
\newblock \emph{ACM Transactions on Graphics}, 9\penalty0 (1):\penalty0
  66--104, 1990.
\newblock \doi{10.1145/77635.77639}.

\bibitem[Edelsbrunner et~al.(2002)Edelsbrunner, Letscher, and
  Zomorodian]{edelsbrunner02}
Herbert Edelsbrunner, David Letscher, and Afra Zomorodian.
\newblock {Topological Persistence and Simplification}.
\newblock 28\penalty0 (4):\penalty0 511--533, 2002.
\newblock \doi{10.1007/S00454-002-2885-2}.

\bibitem[Elazar \& Goldberg(2018)Elazar and Goldberg]{dial_elazar}
Yanai Elazar and Yoav Goldberg.
\newblock Adversarial removal of demographic attributes from text data.
\newblock In Ellen Riloff, David Chiang, Julia Hockenmaier, and Jun'ichi Tsujii
  (eds.), \emph{EMNLP}, pp.\  11--21. Association for Computational
  Linguistics, 2018.
\newblock ISBN 978-1-948087-84-1.
\newblock URL
  \url{http://dblp.uni-trier.de/db/conf/emnlp/emnlp2018.html#ElazarG18}.

\bibitem[Fel et~al.(2023{\natexlab{a}})Fel, Boutin, Moayeri, Cad{\`e}ne,
  Bethune, And{\'e}ol, Chalvidal, and Serre]{unifiedConceptExtraction}
Thomas Fel, Victor Boutin, Mazda Moayeri, R{\'e}mi Cad{\`e}ne, Louis Bethune,
  L{\'e}o And{\'e}ol, Mathieu Chalvidal, and Thomas Serre.
\newblock A holistic approach to unifying automatic concept extraction and
  concept importance estimation.
\newblock In \emph{Proceedings of the 37th International Conference on Neural
  Information Processing Systems}, NIPS '23, Red Hook, NY, USA,
  2023{\natexlab{a}}. Curran Associates Inc.

\bibitem[Fel et~al.(2023{\natexlab{b}})Fel, Picard, B\'ethune, Boissin,
  Vigouroux, Colin, Cad\`ene, and Serre]{craft}
Thomas Fel, Agustin Picard, Louis B\'ethune, Thibaut Boissin, David Vigouroux,
  Julien Colin, R\'emi Cad\`ene, and Thomas Serre.
\newblock Craft: Concept recursive activation factorization for explainability.
\newblock In \emph{Proceedings of the IEEE/CVF Conference on Computer Vision
  and Pattern Recognition (CVPR)}, pp.\  2711--2721, June 2023{\natexlab{b}}.

\bibitem[Ghorbani et~al.(2019)Ghorbani, Wexler, Zou, and Kim]{ace}
Amirata Ghorbani, James Wexler, James Zou, and Been Kim.
\newblock \emph{Towards automatic concept-based explanations}.
\newblock Curran Associates Inc., Red Hook, NY, USA, 2019.

\bibitem[Gupta et~al.(2023)Gupta, Saini, and Narayanan]{conceptDistillation}
Avani Gupta, Saurabh Saini, and P~J Narayanan.
\newblock Concept distillation: Leveraging human-centered explanations for
  model improvement.
\newblock In A.~Oh, T.~Naumann, A.~Globerson, K.~Saenko, M.~Hardt, and
  S.~Levine (eds.), \emph{Advances in Neural Information Processing Systems},
  volume~36, pp.\  63724--63737. Curran Associates, Inc., 2023.
\newblock \doi{10.52202/075280-2783}.
\newblock URL
  \url{https://proceedings.neurips.cc/paper_files/paper/2023/file/c9450295fd667740a39a68148fc17f6e-Paper-Conference.pdf}.

\bibitem[He et~al.(2016)He, Zhang, Ren, and Sun]{resnet}
Kaiming He, Xiangyu Zhang, Shaoqing Ren, and Jian Sun.
\newblock {Deep Residual Learning for Image Recognition}.
\newblock In \emph{Proceedings of 2016 IEEE Conference on Computer Vision and
  Pattern Recognition}, CVPR '16, pp.\  770--778. IEEE, June 2016.
\newblock \doi{10.1109/CVPR.2016.90}.
\newblock URL \url{http://ieeexplore.ieee.org/document/7780459}.

\bibitem[Hensel et~al.(2021)Hensel, Moor, and Rieck]{hensel2021survey}
Felix Hensel, Michael Moor, and Bastian Rieck.
\newblock A survey of topological machine learning methods.
\newblock \emph{Frontiers in Artificial Intelligence}, Volume 4 - 2021, 2021.
\newblock ISSN 2624-8212.
\newblock \doi{10.3389/frai.2021.681108}.
\newblock URL
  \url{https://www.frontiersin.org/journals/artificial-intelligence/articles/10.3389/frai.2021.681108}.

\bibitem[Hinton et~al.(2015)Hinton, Vinyals, and Dean]{kd}
Geoffrey~E. Hinton, Oriol Vinyals, and Jeffrey Dean.
\newblock Distilling the knowledge in a neural network.
\newblock \emph{CoRR}, abs/1503.02531, 2015.
\newblock URL
  \url{http://dblp.uni-trier.de/db/journals/corr/corr1503.html#HintonVD15}.

\bibitem[Hofer et~al.(2019)Hofer, Kwitt, Niethammer, and Dixit]{hofer19a}
Christoph Hofer, Roland Kwitt, Marc Niethammer, and Mandar Dixit.
\newblock Connectivity-optimized representation learning via persistent
  homology.
\newblock In \emph{Proceedings of the 36th International Conference on Machine
  Learning}, volume~97, pp.\  2751--2760. PMLR, 09--15 Jun 2019.
\newblock URL \url{https://proceedings.mlr.press/v97/hofer19a.html}.

\bibitem[Hofer et~al.(2020)Hofer, Graf, Niethammer, and Kwitt]{hofer20a}
Christoph Hofer, Florian Graf, Marc Niethammer, and Roland Kwitt.
\newblock Topologically densified distributions.
\newblock In \emph{Proceedings of the 37th International Conference on Machine
  Learning}, volume 119, pp.\  4304--4313. PMLR, 13--18 Jul 2020.
\newblock URL \url{https://proceedings.mlr.press/v119/hofer20a.html}.

\bibitem[Holstege et~al.(2024)Holstege, Wouters, Giersbergen, and Diks]{jse}
Floris Holstege, Bram Wouters, Noud~Van Giersbergen, and Cees Diks.
\newblock Removing spurious concepts from neural network representations via
  joint subspace estimation.
\newblock In Ruslan Salakhutdinov, Zico Kolter, Katherine Heller, Adrian
  Weller, Nuria Oliver, Jonathan Scarlett, and Felix Berkenkamp (eds.),
  \emph{Proceedings of the 41st International Conference on Machine Learning},
  volume 235 of \emph{Proceedings of Machine Learning Research}, pp.\
  18568--18610. PMLR, 21--27 Jul 2024.
\newblock URL \url{https://proceedings.mlr.press/v235/holstege24a.html}.

\bibitem[Holstege et~al.(2025)Holstege, Ravfogel, and Wouters]{splince}
Floris Holstege, Shauli Ravfogel, and Bram Wouters.
\newblock Preserving task-relevant information under linear concept removal.
\newblock In \emph{The Thirty-ninth Annual Conference on Neural Information
  Processing Systems}, 2025.
\newblock URL \url{https://openreview.net/forum?id=t79rMQGk4S}.

\bibitem[Huben et~al.(2024)Huben, Cunningham, Smith, Ewart, and
  Sharkey]{huben2024sae}
Robert Huben, Hoagy Cunningham, Logan~Riggs Smith, Aidan Ewart, and Lee
  Sharkey.
\newblock Sparse autoencoders find highly interpretable features in language
  models.
\newblock In \emph{The Twelfth International Conference on Learning
  Representations}, 2024.
\newblock URL \url{https://openreview.net/forum?id=F76bwRSLeK}.

\bibitem[Kaplan et~al.(2020)Kaplan, McCandlish, Henighan, Brown, Chess, Child,
  Gray, Radford, Wu, and Amodei]{llmScalingLaws}
Jared Kaplan, Sam McCandlish, Tom Henighan, Tom~B. Brown, Benjamin Chess, Rewon
  Child, Scott Gray, Alec Radford, Jeffrey Wu, and Dario Amodei.
\newblock Scaling laws for neural language models.
\newblock \emph{CoRR}, abs/2001.08361, 2020.
\newblock URL \url{https://arxiv.org/pdf/2001.08361.pdf}.

\bibitem[Kerber et~al.(2017)Kerber, Morozov, and Nigmetov]{Kerber2016}
Michael Kerber, Dmitriy Morozov, and Arnur Nigmetov.
\newblock Geometry helps to compare persistence diagrams.
\newblock \emph{ACM J. of Experimental Algorithmics}, 22:\penalty0 1--20, 2017.
\newblock \doi{10.1145/3064175}.

\bibitem[Kim et~al.(2018)Kim, Wattenberg, Gilmer, Cai, Wexler, Viegas, and
  sayres]{tcav}
Been Kim, Martin Wattenberg, Justin Gilmer, Carrie Cai, James Wexler, Fernanda
  Viegas, and Rory sayres.
\newblock Interpretability beyond feature attribution: Quantitative testing
  with concept activation vectors ({TCAV}).
\newblock In Jennifer Dy and Andreas Krause (eds.), \emph{Proceedings of the
  35th International Conference on Machine Learning}, volume~80 of
  \emph{Proceedings of Machine Learning Research}, pp.\  2668--2677. PMLR,
  10--15 Jul 2018.
\newblock URL \url{https://proceedings.mlr.press/v80/kim18d.html}.

\bibitem[Kim et~al.(2024)Kim, You, Lee, Kim, and Jung]{topkd}
Jungeun Kim, Junwon You, Dongjin Lee, Ha~Young Kim, and Jae-Hun Jung.
\newblock Do topological characteristics help in knowledge distillation?
\newblock In Ruslan Salakhutdinov, Zico Kolter, Katherine Heller, Adrian
  Weller, Nuria Oliver, Jonathan Scarlett, and Felix Berkenkamp (eds.),
  \emph{Proceedings of the 41st International Conference on Machine Learning},
  volume 235 of \emph{Proceedings of Machine Learning Research}, pp.\
  24674--24693. PMLR, 21--27 Jul 2024.
\newblock URL \url{https://proceedings.mlr.press/v235/kim24aj.html}.

\bibitem[Kissi et~al.(2025)Kissi, Pont, Levine, and Tierny]{kissiSolver25}
Mohamed Kissi, Mathieu Pont, Joshua~A. Levine, and Julien Tierny.
\newblock A practical solver for scalar data topological simplification.
\newblock \emph{IEEE Transactions on Visualization and Computer Graphics},
  31\penalty0 (1):\penalty0 97--107, 2025.
\newblock \doi{10.1109/TVCG.2024.3456345}.

\bibitem[Kornblith et~al.(2019)Kornblith, Norouzi, Lee, and Hinton]{cka}
Simon Kornblith, Mohammad Norouzi, Honglak Lee, and Geoffrey Hinton.
\newblock Similarity of neural network representations revisited.
\newblock In Kamalika Chaudhuri and Ruslan Salakhutdinov (eds.),
  \emph{Proceedings of the 36th International Conference on Machine Learning},
  volume~97 of \emph{Proceedings of Machine Learning Research}, pp.\
  3519--3529. PMLR, 09--15 Jun 2019.
\newblock URL \url{https://proceedings.mlr.press/v97/kornblith19a.html}.

\bibitem[Krizhevsky \& Hinton(2009)Krizhevsky and Hinton]{cifar100}
Alex Krizhevsky and Geoffrey Hinton.
\newblock Learning multiple layers of features from tiny images.
\newblock Technical Report~0, University of Toronto, Toronto, Ontario, 2009.
\newblock URL
  \url{https://www.cs.toronto.edu/~kriz/learning-features-2009-TR.pdf}.

\bibitem[Moor et~al.(2020)Moor, Horn, Rieck, and Borgwardt]{MoorHRB20}
Michael Moor, Max Horn, Bastian Rieck, and Karsten~M. Borgwardt.
\newblock Topological autoencoders.
\newblock In \emph{ICML}, 2020.

\bibitem[Munkres(1957)]{Munkres1957}
James Munkres.
\newblock Algorithms for the assignment and transportation problems.
\newblock \emph{J. of SIAM}, 5\penalty0 (1):\penalty0 32--38, 1957.
\newblock \doi{https://doi.org/10.1137/0105003}.

\bibitem[Naowarat et~al.(2026)Naowarat, Tang, and Goldwater]{naowarat26}
Burin Naowarat, Hao Tang, and Sharon Goldwater.
\newblock A framework for analyzing concept representations in neural models.
\newblock In Claire Bonial and Yevgeni Berzak (eds.), \emph{Proceedings of the
  30th Conference on Computational Natural Language Learning}, pp.\  574--587,
  San Diego, California, USA, July 2026. Association for Computational
  Linguistics.
\newblock ISBN 979-8-89176-410-1.
\newblock \doi{10.18653/v1/2026.conll-main.34}.
\newblock URL \url{https://aclanthology.org/2026.conll-main.34/}.

\bibitem[Nigmetov \& Morozov(2024)Nigmetov and Morozov]{nigmetov24}
Arnur Nigmetov and Dmitriy Morozov.
\newblock Topological optimization with big steps.
\newblock \emph{Discrete \& Computational Geometry}, 1 2024.
\newblock ISSN 0179-5376.
\newblock \doi{10.1007/s00454-023-00613-x}.
\newblock URL \url{https://www.osti.gov/biblio/2337698}.

\bibitem[Pont \& Garth(2026)Pont and Garth]{regionAware}
Mathieu Pont and Christoph Garth.
\newblock Region-aware wasserstein distances of persistence diagrams and merge
  trees.
\newblock \emph{IEEE Transactions on Visualization and Computer Graphics},
  32\penalty0 (9):\penalty0 7756--7773, 2026.
\newblock \doi{10.1109/TVCG.2026.3703539}.

\bibitem[Ravfogel et~al.(2020)Ravfogel, Elazar, Gonen, Twiton, and
  Goldberg]{inlp}
Shauli Ravfogel, Yanai Elazar, Hila Gonen, Michael Twiton, and Yoav Goldberg.
\newblock Null it out: Guarding protected attributes by iterative nullspace
  projection.
\newblock In Dan Jurafsky, Joyce Chai, Natalie Schluter, and Joel Tetreault
  (eds.), \emph{Proceedings of the 58th Annual Meeting of the Association for
  Computational Linguistics}, pp.\  7237--7256, Online, July 2020. Association
  for Computational Linguistics.
\newblock \doi{10.18653/v1/2020.acl-main.647}.
\newblock URL \url{https://aclanthology.org/2020.acl-main.647/}.

\bibitem[Ravfogel et~al.(2022)Ravfogel, Twiton, Goldberg, and Cotterell]{rlace}
Shauli Ravfogel, Michael Twiton, Yoav Goldberg, and Ryan~D Cotterell.
\newblock Linear adversarial concept erasure.
\newblock In Kamalika Chaudhuri, Stefanie Jegelka, Le~Song, Csaba Szepesvari,
  Gang Niu, and Sivan Sabato (eds.), \emph{Proceedings of the 39th
  International Conference on Machine Learning}, volume 162 of
  \emph{Proceedings of Machine Learning Research}, pp.\  18400--18421. PMLR,
  17--23 Jul 2022.
\newblock URL \url{https://proceedings.mlr.press/v162/ravfogel22a.html}.

\bibitem[Ravfogel et~al.(2023)Ravfogel, Goldberg, and
  Cotterell]{loglinear_guardedness}
Shauli Ravfogel, Yoav Goldberg, and Ryan Cotterell.
\newblock Log-linear guardedness and its implications.
\newblock In Anna Rogers, Jordan Boyd-Graber, and Naoaki Okazaki (eds.),
  \emph{Proceedings of the 61st Annual Meeting of the Association for
  Computational Linguistics (Volume 1: Long Papers)}, pp.\  9413--9431,
  Toronto, Canada, July 2023. Association for Computational Linguistics.
\newblock \doi{10.18653/v1/2023.acl-long.523}.
\newblock URL \url{https://aclanthology.org/2023.acl-long.523/}.

\bibitem[Romero et~al.(2015)Romero, Ballas, Kahou, Chassang, Gatta, and
  Bengio]{fitnet}
Adriana Romero, Nicolas Ballas, Samira~Ebrahimi Kahou, Antoine Chassang, Carlo
  Gatta, and Yoshua Bengio.
\newblock Fitnets: Hints for thin deep nets.
\newblock In Yoshua Bengio and Yann LeCun (eds.), \emph{3rd International
  Conference on Learning Representations, {ICLR} 2015, San Diego, CA, USA, May
  7-9, 2015, Conference Track Proceedings}, 2015.
\newblock URL \url{http://arxiv.org/abs/1412.6550}.

\bibitem[Sagawa et~al.(2020)Sagawa, Koh, Hashimoto, and Liang]{waterbirds}
Shiori Sagawa, Pang~Wei Koh, Tatsunori~B Hashimoto, and Percy Liang.
\newblock Distributionally robust neural networks for group shifts: On the
  importance of regularization for worst-case generalization.
\newblock In \emph{International Conference on Learning Representations
  (ICLR)}, 2020.

\bibitem[Saillenfest \& Lemberger(2025)Saillenfest and Lemberger]{leopard}
Antoine Saillenfest and Pirmin Lemberger.
\newblock Nonlinear concept erasure: {A} density matching approach.
\newblock In In{\^{e}}s Lynce, Nello Murano, Mauro Vallati, Serena Villata,
  Federico Chesani, Michela Milano, Andrea Omicini, and Mehdi Dastani (eds.),
  \emph{{ECAI} 2025 - 28th European Conference on Artificial Intelligence,
  25-30 October 2025, Bologna, Italy - Including 14th Conference on Prestigious
  Applications of Intelligent Systems {(PAIS} 2025)}, volume 413 of
  \emph{Frontiers in Artificial Intelligence and Applications}, pp.\
  2138--2145. {IOS} Press, 2025.
\newblock \doi{10.3233/FAIA251053}.
\newblock URL \url{https://doi.org/10.3233/FAIA251053}.

\bibitem[Soler et~al.(2018)Soler, Plainchault, Conche, and
  Tierny]{soler_ldav18}
Maxime Soler, Melanie Plainchault, Bruno Conche, and Julien Tierny.
\newblock { Lifted Wasserstein Matcher for Fast and Robust Topology Tracking }.
\newblock In \emph{2018 IEEE 8th Symposium on Large Data Analysis and
  Visualization (LDAV)}, pp.\  23--33, October 2018.
\newblock \doi{10.1109/LDAV.2018.8739196}.
\newblock URL
  \url{https://doi.ieeecomputersociety.org/10.1109/LDAV.2018.8739196}.

\bibitem[Tian et~al.(2020)Tian, Krishnan, and Isola]{contrastiveKD}
Yonglong Tian, Dilip Krishnan, and Phillip Isola.
\newblock Contrastive representation distillation.
\newblock In \emph{ICLR}. OpenReview.net, 2020.
\newblock URL
  \url{http://dblp.uni-trier.de/db/conf/iclr/iclr2020.html#TianKI20}.

\bibitem[Tung \& Mori(2019)Tung and Mori]{spkd}
Fred Tung and Greg Mori.
\newblock { Similarity-Preserving Knowledge Distillation }.
\newblock In \emph{2019 IEEE/CVF International Conference on Computer Vision
  (ICCV)}, pp.\  1365--1374, Los Alamitos, CA, USA, November 2019. IEEE
  Computer Society.
\newblock \doi{10.1109/ICCV.2019.00145}.
\newblock URL
  \url{https://doi.ieeecomputersociety.org/10.1109/ICCV.2019.00145}.

\bibitem[Vielhaben et~al.(2023)Vielhaben, Bluecher, and
  Strodthoff]{vielhaben2023multidimensional}
Johanna Vielhaben, Stefan Bluecher, and Nils Strodthoff.
\newblock Multi-dimensional concept discovery ({MCD}): A unifying framework
  with completeness guarantees.
\newblock \emph{Transactions on Machine Learning Research}, 2023.
\newblock ISSN 2835-8856.
\newblock URL \url{https://openreview.net/forum?id=KxBQPz7HKh}.

\bibitem[Zomorodian(2010)]{zomorodianBook}
Afra~J. Zomorodian.
\newblock Topology for computing.
\newblock In Mikhail~J. Atallah and Marina Blanton (eds.), \emph{Algorithms and
  Theory of Computation Handbook (Second Edition)}, chapter~3, pp.\  82--112.
  CRC Press, Boca Raton, 2010.
\newblock \doi{https://doi.org/10.1017/CBO9780511546945}.

\end{thebibliography}
\bibliographystyle{iclr2027_conference}

\newpage

\appendix

\section*{Appendices}

These appendices provide additional details including extended related work, method details, mathematical proofs, detailed experimental setup and additional experiments and results to further illustrate and validate the effectiveness of \acronym{}. The contents are organized as follows:

\begin{itemize}
   \item \autoref{sec_app_related_work}: Extended Related Work
   \item Additional Method Details
   \begin{itemize}
       \item \autoref{sec_app_mse}: MSE Formulation
       \item \autoref{sec_app_impl}: \acronym{} Implementation Details and Computational Cost
   \end{itemize}
   \item Mathematical Proofs
   \begin{itemize}
     \item \autoref{sec_app_concept_signatures} and \ref{sec_app_generalizing}: Existing Concept Erasure Criteria as Special Cases of \acronym{}
     \item \autoref{sec_app_invariance}: Affine Reparameterization Invariance of \acronym{}
     \item \autoref{app_ctr_stability}: Stability of \acronym{}
   \end{itemize}
   \item \autoref{sec_app_exp_setup}: Detailed Experimental Setup
   \begin{itemize}
     \item \subappref{sec_app_exp_setup_disent}: Setup for Concept Disentanglement Before Erasure
     \item \subappref{sec_app_exp_setup_disti}: Setup for Concept Transfer
   \end{itemize}
   \item \autoref{sec_app_exp}: Additional Experiments \& Results
   \begin{itemize}
      \item \subappref{sec_app_exp_disent}: Concept Disentanglement Before Erasure
      \begin{itemize}
          \item \subappref{sec_app_exp_disent_leak}: Beyond-linear Concept Leakage
          \item \subappref{sec_app_exp_disent_tradeoff}: Utility Preservation Across Erasure Levels and Data Regimes
          \item \subappref{sec_app_exp_disent_cka}: Representation Similarity After Intervention
      \end{itemize}
      % \item \subappref{sec_app_exp_transfer}: Concept Transfer
   \end{itemize}
   \item \subappref{sec_limitations}: Limitations
 \end{itemize}

\section{Extended Related Work}
\label{sec_app_related_work}

\myParagraph{Concept discovery.}
Concept-based interpretability aims to characterize learned representations through high-level semantic properties that are more meaningful to humans than individual activation dimensions \citep{tcav, ace}. 
% In a common setting, 
A concept is often represented as a direction in the activation space of a selected layer, for instance obtained from a linear classifier trained to distinguish examples associated with the concept \citep{tcav} or from multidimensional concept subspaces \citep{vielhaben2023multidimensional}. 
Concepts can also be discovered in an unsupervised manner, through clustering \citep{ace}, matrix factorizations such as NMF \citep{craft} or sparse autoencoders \citep{huben2024sae}. 
These approaches provide complementary mechanisms for extracting semantically meaningful properties encoded by neural representations.

\myParagraph{Concept-based interventions.}
Beyond their use for interpretation, concepts can also serve as targets for post-hoc interventions on learned representations.
A prominent line of work considers concept erasure, which aims to remove information associated with an undesired concept from a fixed representation through linear \citep{inlp, rlace, leace} or non-linear formulations \citep{kram, leopard}.
Other methods additionally seek to preserve information relevant to a downstream task \citep{jse, splince, obliviator}.
While effective for their respective objectives, these approaches are generally tailored to specific interventions, with new edits often requiring new objectives or optimization procedures.
More fundamentally, they do not provide a common representation in which concept organization can be described and manipulated independently of the editing algorithm.

%%%%%%%%%%%%%%%%%%%%%%%%%%%%%%%%%%%%%%%
%%%%%%%%%%%%%%%%%%%%%%%%%%%%%%%%%%%%%%%

\iffalse
\myParagraph{Topological abstractions.}
Topological Data Analysis (TDA) provides concise abstractions that capture the global organization of complex data while abstracting away local variations \citep{edelsbrunner09, hensel2021survey}. 
In particular, persistent homology tracks the evolution of topological features across a filtration, characterizing their appearance, disappearance, and interactions over multiple scales \citep{edelsbrunner02}. 
These features can be summarized through persistence diagrams, while richer abstractions such as merge trees additionally encode how connected components progressively merge. 
Such multiscale abstractions have been extensively used to characterize and compare complex data, including in conjunction with machine learning \citep{hensel2021survey}. 
Their ability to jointly represent individual features and their global organization makes them a natural candidate for describing structured relationships between concepts.
\fi

\myParagraph{Topological abstractions.}
Topological Data Analysis (TDA) provides concise abstractions of the global organization of the features present within complex data \citep{edelsbrunner09, hensel2021survey}.
In particular, such methods track how topological features appear and disappear across a filtration of the input, thereby characterizing structure over multiple scales \citep{edelsbrunner02}.
These features can be summarized compactly through persistence diagrams, while richer abstractions such as merge trees additionally encode how connected components progressively merge.
Their ability to jointly represent individual features and their global organization makes such abstractions a natural candidate for describing structured relationships between concepts.

%%%%%%%%%%%%%%%%%%%%%%%%%%%%%%%%%%%%%%%
%%%%%%%%%%%%%%%%%%%%%%%%%%%%%%%%%%%%%%%

\iffalse
\myParagraph{Persistence-based topological optimization.}
Beyond their descriptive role, topological abstractions can also be incorporated into differentiable optimization objectives.
For instance by enforcing topological constraints on the output of specific layers of neural networks.
Popular applications include regularization, to improve generalization on unseen data, by simplifying the topology of decision boundaries \citep{chen19, hofer20a}, or dimensionality reduction, where neural networks are optimized while preserving the topological properties of the input data in the representation being learned \citep{hofer19a, MoorHRB20, topoAE++26}.
\citet{carriere21}, based on the convergence properties of stochastic subgradient descent \citep{davis2020}, proposed a generic topological optimization framework supporting a wide range of losses and applications.
This framework has been later extended in various ways, e.g, to handle its sparse gradient problem or to make it more efficient on challenging datasets \citep{nigmetov24, kissiSolver25, topoOptSurvey26}.
However, \acronym{} does not simply add a regularization loss to neural networks optimization but provide an editable proxy of their concepts that users can manipulate.
\fi

\myParagraph{Persistence-based topological optimization.}
Beyond their descriptive role, topological abstractions can be incorporated into differentiable objectives to constrain learned representations. Such losses have been used, for instance, to regularize decision boundaries \citep{chen19, hofer20a} or preserve input topology during dimensionality reduction \citep{hofer19a, MoorHRB20, topoAE++26}. \citet{carriere21} introduced a generic framework for optimizing persistence-based objectives, later extended to improve gradient density and scalability \citep{nigmetov24, kissiSolver25, topoOptSurvey26}. \acronym{} differs from these approaches in that it is not used to regularize the representation itself, but to encode its concept organization and to provide a common way to specify desired changes to the concepts of the underlying representation.

\section{Analytical MSE Formulation}
\label{sec_app_mse}

This section details the closed-form MSE concept scores and predictive-overlap
interactions used as an efficient analytical construction of \acronym{}.

\myParagraph{Concept scores.}
Assume that the representation and concept scores are centered, and let
\mbox{$\Sigma_Z=\operatorname{Cov}(Z)$} and
\mbox{$\mathbf{g}_i=\operatorname{Cov}(Z,\mathbf{c}_i)$}.
For a linear least-squares probe with coefficient vector \mbox{$\boldsymbol{\beta}_i$},
the population MSE is:
\begin{equation}
\mathbb{E}\!\left[
\left(c_i-\boldsymbol{\beta}_i^\top Z\right)^2
\right]
=
\operatorname{Var}(\mathbf{c}_i)
-2\boldsymbol{\beta}_i^\top\mathbf{g}_i
+\boldsymbol{\beta}_i^\top\Sigma_Z\boldsymbol{\beta}_i.
\label{eq_app_mse_objective}
\end{equation}
Its minimum-norm solution is
\mbox{$\boldsymbol{\beta}_i^\star=\Sigma_Z^+\mathbf{g}_i$}, where
\mbox{$(\cdot)^+$} denotes the Moore--Penrose pseudoinverse. Substitution into
\autoref{eq_app_mse_objective} shows that the improvement over the optimal
constant predictor is:
\begin{equation}
    m_i
    =
    \mathbf{g}_i^\top \Sigma_Z^+ \mathbf{g}_i
    =
    A_{ii},
    \qquad
    A_{ij}
    :=
    \mathbf{g}_i^\top \Sigma_Z^+ \mathbf{g}_j.
    \label{eq_\acronym{}_predictive_gram}
\end{equation}
When the score is normalized by the constant-predictor loss, it is instead
\mbox{$A_{ii}/\operatorname{Var}(\mathbf{c}_i)$}, i.e., the explained-variance
fraction. Thus, the diagonal of the matrix $A$ contains all MSE
concept scores.

\myParagraph{Predictive overlap.}
As an efficient analytical alternative, we additionally consider \emph{predictive overlap} as concept interactions.
The off-diagonal entries of the same matrix quantify how strongly two concepts
rely on aligned predictive directions. We define their normalized predictive
overlap as:
\begin{equation}
\label{eq_\acronym{}_overlap}
\rho_{ij}^{\mathrm{overlap}}
=
\frac{A_{ij}^2}{A_{ii}A_{jj}}.
\end{equation}
To interpret this quantity, define the whitened predictive direction
\mbox{$\mathbf{v}_i=(\Sigma_Z^{1/2})^+\mathbf{g}_i$}. Then
\mbox{$A_{ij}=\mathbf{v}_i^\top\mathbf{v}_j$} and
\mbox{$A_{ii}=\|\mathbf{v}_i\|_2^2$}, so \autoref{eq_\acronym{}_overlap} becomes:
\begin{equation}
\rho_{ij}^{\mathrm{overlap}}
=
\frac{(\mathbf{v}_i^\top\mathbf{v}_j)^2}
{\|\mathbf{v}_i\|_2^2\|\mathbf{v}_j\|_2^2}.
\label{eq_app_overlap_cosine}
\end{equation}
It is therefore the squared cosine between the predictive directions of the two
concepts and belongs to $[0,1]$ whenever $A_{ii}$ and $A_{jj}$ are nonzero. A
value of zero indicates orthogonal predictive directions, whereas a value of one
indicates collinear directions, irrespective of their sign. Consequently, a
single matrix $A$ yields both the concept scores through its diagonal and every
pairwise predictive overlap through its normalized off-diagonal entries.

\section{Implementation Details and Computational Cost}
\label{sec_app_impl}

\myParagraph{Differentiable construction.}
When optimizing a representation, we reconstruct \acronym{} from the current transformed
features at each optimization step. For BCE recoverability, the linear concept probes are
therefore refitted differentiably, such that gradients propagate through the probe optimization.
For interference interactions, removing a source concept is performed with differentiable
LEACE and the probe of the target concept is subsequently refitted on the erased
representation. Hence, interference measures the recoverability remaining after erasure,
rather than the degradation of a fixed pre-erasure predictor. When differentiating a saddle
$s_{ij}=\min(m_i,m_j)\rho_{ij}$, we stop gradients through the
$\min(m_i,m_j)$ factor while retaining its value in the forward pass. This prevents
interaction optimization from trivially decreasing a saddle by decreasing one of its incident
concept scores instead of modifying their interaction.

\myParagraph{Analytical least-squares construction.}
The MSE formulation of \autoref{sec_app_mse} allows all concept scores to be computed
jointly. We first accumulate the sufficient statistics required to compute the representation
covariance and concept cross-covariances, namely sums of features, feature outer products
and feature--concept products. Given $n_C$ concepts, we collect the resulting
cross-covariances in $G=[\mathbf g_1,\ldots,\mathbf g_{n_C}]$ and perform a single
eigendecomposition of the $d\times d$ representation covariance $\Sigma_Z$.
The same decomposition yields all
least-squares predictive directions and concept scores simultaneously. When predictive
overlap is used, all pairwise interactions are subsequently obtained from their
$n_C\times n_C$ predictive Gram matrix. The covariance decomposition is performed in
double precision while numerically null eigendirections are discarded.

\myParagraph{Computational cost.}
Computing the sufficient statistics requires
$O(Nd^2+Ndn_C)$ operations, followed by one $O(d^3)$ covariance decomposition.
Computing only the concept scores then scales linearly with $n_C$ up to matrix products,
whereas forming all predictive-overlap interactions additionally costs $O(dn_C^2)$.
Thus, although a complete concept mesh contains $O(n_C^2)$ pairwise interactions,
the analytical construction does not require $O(n_C^2)$ matrix decompositions.
By contrast, interference explicitly evaluates both directional effects for each concept pair
and therefore requires $O(n_C^2)$ erased-recoverability evaluations. In our
disentanglement experiments this construction involves only the task and concept nodes,
while the larger concept sets used for transfer use the analytical least-squares construction.
We further evaluate computation of \acronym{} in \subappref{sec_app_exp_disent}.

\myParagraph{Online construction for representation transfer.}
When \acronym{} is optimized jointly with a network, constructing it from the complete training
set after every minibatch would be unnecessarily expensive. We instead maintain a bounded
feature bank and update the statistics required by the analytical formulation as new
features are produced. The resulting \acronym{} therefore uses a larger sample of the current
representation than the instantaneous minibatch while keeping its memory footprint bounded.
The precise bank size and \acronym{} instantiation used for concept transfer are reported in the
corresponding experimental setup.

\myParagraph{Choice of recoverability.}
BCE and MSE are different instantiations of concept recoverability rather than numerical
approximations of one another. BCE is natural for the binary supervised concepts considered
in our post-hoc erasure experiments and supports the intervention-based interference measure.
MSE instead measures explained variance under linear least squares, naturally accommodates
continuous concept coefficients and admits the shared analytical construction above.
Likewise, interference and predictive overlap provide complementary interaction measures:
the former measures the effect of explicitly erasing one concept on the recoverability of
another, whereas the latter measures the alignment of their whitened predictive directions.
We therefore use BCE with interference for concept disentanglement and normalized MSE
for concept transfer.

\section{Concept Signatures}
\label{sec_app_concept_signatures}

\iffalse
The persistence coordinates of a \acronym{} feature describe the score of a concept and the interaction level at which it merges with an older component, but they do not explicitly encode additional information about the identity or usage of that concept. In some settings, such information is available and can be useful when comparing two Topological Concept Representations. For instance, two features may correspond to concepts having similar sample-level annotations, or one may wish to distinguish concepts that have similar persistence coordinates but are encoded through different directions of the representation. We therefore optionally associate each \acronym{} feature with a concept signature and extend the Wasserstein ground metric to jointly compare persistence coordinates and signatures.
\fi

The persistence coordinates of a \acronym{} feature describe the score of a concept and the interaction level at which it merges with an older component, but they do not encode additional concept-specific information. We therefore optionally associate each feature with a concept signature and extend the Wasserstein ground metric to jointly compare persistence coordinates and signatures. This formalism is distinct from the hard semantic matching used in the main method (\autoref{sec_ctr_editing}) and is introduced primarily to express the objective-level specializations studied in \autoref{sec_app_generalizing}.

Related constructions have previously augmented persistence-diagram comparisons with information beyond birth and death. This has been done by incorporating spatial information associated with persistence pairs \citep{soler_ldav18}, properties of their associated regions \citep{regionAware} or, more generally, by allowing persistence-diagram points to live in general metric spaces \citep{bubenik22,che24}. Concept signatures follow a similar principle, but use auxiliary information describing the represented concepts rather than spatial or regional information.

%%%%%%%%%%%%%%%%%%%%%%%%%%%%%%%%%%%%%%%%%%%%%%%%%%%%%%%%%%%%%%%%%%%%%%%%%%%%%%%
%%%%%%%%%%%%%%%%%%%%%%%%%%%%%%%%%%%%%%%%%%%%%%%%%%%%%%%%%%%%%%%%%%%%%%%%%%%%%%%
%%%%%%%%%%%%%%%%%%%%%%%%%%%%%%%%%%%%%%%%%%%%%%%%%%%%%%%%%%%%%%%%%%%%%%%%%%%%%%%

\subsection{Signature-Augmented Ground Metric}

Recall that a \acronym{} feature associated with concept \mbox{$C_i$} is represented by the persistence point \mbox{$p_i=(s_i,m_i)\in\mathbb{R}^2$}.
We additionally associate it with a signature \mbox{$\boldsymbol{\sigma}_i\in\mathcal{S}$}, where \mbox{$(\mathcal{S},d_{\sigma})$} is a metric space describing the information carried by the signatures.
Given two signature-augmented persistence features \mbox{$(p_i,\boldsymbol{\sigma}_i)$} and \mbox{$(p_j,\boldsymbol{\sigma}_j)$}, we define their signature-augmented ground metric as:
\begin{equation}
\label{eq_signature_ground_metric}
\widetilde{\pointMetric}_{q,\lambda_{\sigma}}
\bigl(
(p_i,\boldsymbol{\sigma}_i),
(p_j,\boldsymbol{\sigma}_j)
\bigr)
=
\left(
|s_i-s_j|^q
+
|m_i-m_j|^q
+
\lambda_{\sigma}^q d_{\sigma}(\boldsymbol{\sigma}_i,\boldsymbol{\sigma}_j)^q
\right)^{1/q},
\end{equation}
where \mbox{$q\geq 1$} and \mbox{$\lambda_{\sigma}\geq 0$} controls the relative importance of the signature information.
Setting \mbox{$\lambda_{\sigma}=0$} recovers the standard ground metric used in \autoref{sec_preliminaries}.
When signatures are vectors in a common Euclidean space, a direct choice is \mbox{$\mathcal{S}=\mathbb{R}^r$} with \mbox{$d_{\sigma}(\boldsymbol{\sigma}_i,\boldsymbol{\sigma}_j)=\|\boldsymbol{\sigma}_i-\boldsymbol{\sigma}_j\|_q$}.
In this case, \autoref{eq_signature_ground_metric} is simply the \mbox{$\ell^q$} distance between the augmented vectors \mbox{$(s_i,m_i,\lambda_{\sigma}\boldsymbol{\sigma}_i)$} and \mbox{$(s_j,m_j,\lambda_{\sigma}\boldsymbol{\sigma}_j)$}.

\begin{proposition}
\label{prop_signature_ground_metric}
Let \mbox{$q\geq 1$}, \mbox{$\lambda_{\sigma}>0$} and let \mbox{$d_{\sigma}$} be a metric on \mbox{$\mathcal{S}$}.
Then \mbox{$\widetilde{\pointMetric}_{q,\lambda_{\sigma}}$} is a metric on \mbox{$\mathbb{R}^2\times\mathcal{S}$}.
\end{proposition}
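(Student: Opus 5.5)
We prove the claim by writing $\widetilde{\pointMetric}_{q,\lambda_{\sigma}}$ as the $\ell^q$ norm of a vector of three nonnegative component distances, and then checking the metric axioms one by one. For $x_i=(p_i,\boldsymbol{\sigma}_i)$ and $x_j=(p_j,\boldsymbol{\sigma}_j)$ in $\mathbb{R}^2\times\mathcal{S}$, set
\[
\delta(x_i,x_j)=\bigl(|s_i-s_j|,\;|m_i-m_j|,\;\lambda_{\sigma}d_{\sigma}(\boldsymbol{\sigma}_i,\boldsymbol{\sigma}_j)\bigr)\in\mathbb{R}_{\geq 0}^3 .
\]
Then \autoref{eq_signature_ground_metric} reads $\widetilde{\pointMetric}_{q,\lambda_{\sigma}}(x_i,x_j)=\|\delta(x_i,x_j)\|_q$. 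Each coordinate of $\delta$ is a pseudometric: the first two are absolute differences on $\mathbb{R}$, and the third is the metric $d_{\sigma}$ scaled by $\lambda_{\sigma}>0$.

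\emph{Non-negativity and symmetry.} These follow at once, since every coordinate of $\delta$ is nonnegative and symmetric in its two arguments.

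\emph{Identity of indiscernibles.} The $\ell^q$ norm vanishes only for the zero vector. So $\widetilde{\pointMetric}_{q,\lambda_{\sigma}}(x_i,x_j)=0$ holds exactly when $s_i=s_j$, $m_i=m_j$ and $\lambda_{\sigma}d_{\sigma}(\boldsymbol{\sigma}_i,\boldsymbol{\sigma}_j)=0$. Because $\lambda_{\sigma}>0$, the last condition forces $d_{\sigma}(\boldsymbol{\sigma}_i,\boldsymbol{\sigma}_j)=0$, hence $\boldsymbol{\sigma}_i=\boldsymbol{\sigma}_j$ since $d_{\sigma}$ is a metric. This is the only place where $\lambda_{\sigma}>0$ is needed. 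With $\lambda_{\sigma}=0$ we would only get a pseudometric on the product space.

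\emph{Triangle inequality.} This is the one step requiring an argument. Take a third point $x_k$. Applying the triangle inequality coordinatewise gives
\[
\delta(x_i,x_j)\leq\delta(x_i,x_k)+\delta(x_k,x_j)
\]
componentwise, with all entries nonnegative. The $\ell^q$ norm is monotone on the nonnegative orthant, meaning $0\leq u\leq v$ componentwise implies $\|u\|_q\leq\|v\|_q$. Combining this with Minkowski's inequality, valid for $q\geq1$, we get
\[
\|\delta(x_i,x_j)\|_q\leq\|\delta(x_i,x_k)+\delta(x_k,x_j)\|_q\leq\|\delta(x_i,x_k)\|_q+\|\delta(x_k,x_j)\|_q .
\]
This is exactly the triangle inequality for $\widetilde{\pointMetric}_{q,\lambda_{\sigma}}$.

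The argument uses no structure of $\mathcal{S}$ beyond $d_{\sigma}$ being a metric. It also covers the Euclidean case $d_{\sigma}=\|\cdot\|_q$ noted after \autoref{eq_signature_ground_metric}, where the result reduces to the $\ell^q$ distance between the augmented vectors.
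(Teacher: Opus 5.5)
Your proof is correct and follows the same route as the paper. The first three axioms come directly from the absolute value and $d_{\sigma}$, and the triangle inequality comes from coordinatewise triangle inequalities followed by Minkowski's inequality on the three nonnegative terms. Your explicit use of the monotonicity of the $\ell^q$ norm on the nonnegative orthant simply spells out the step the paper compresses into ``applying these inequalities component-wise.''
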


\begin{proof}
Non-negativity, symmetry and identity of indiscernibles follow directly from the corresponding properties of the absolute value and \mbox{$d_{\sigma}$}.
For the triangle inequality, let \mbox{$u,v,w\in\mathbb{R}^2\times\mathcal{S}$}.
The two persistence-coordinate differences and the signature distance each satisfy their corresponding triangle inequalities.
Applying these inequalities component-wise, followed by Minkowski's inequality to the resulting three non-negative terms, gives \mbox{$\widetilde{\pointMetric}_{q,\lambda_{\sigma}}(u,w)\leq \widetilde{\pointMetric}_{q,\lambda_{\sigma}}(u,v)+\widetilde{\pointMetric}_{q,\lambda_{\sigma}}(v,w)$}.
Hence \mbox{$\widetilde{\pointMetric}_{q,\lambda_{\sigma}}$} is a metric.
\end{proof}

The proposition allows signatures to be more general than vectors, although vector-valued signatures are sufficient for the constructions considered here.
The corresponding signature-aware Wasserstein distance is obtained by replacing the ground metric \mbox{$\pointMetric_q$} by \mbox{$\widetilde{\pointMetric}_{q,\lambda_{\sigma}}$} in \autoref{eq_wasserstein_mapping}.
Thus, signatures modify the cost of matching two off-diagonal features but leave the assignment formulation itself unchanged.

\myParagraph{Matching to the diagonal.}
A signature is intended to characterize a persistence feature, not to prevent that feature from being removed or inserted through a diagonal match.
We therefore let the diagonal projection of a signature-augmented point inherit its signature.
For \mbox{$p_i=(s_i,m_i)$}, let \mbox{$\projection(p_i)=((s_i+m_i)/2,(s_i+m_i)/2)$}; its signature-augmented projection is \mbox{$(\projection(p_i),\boldsymbol{\sigma}_i)$}.
Consequently, matching a feature to its diagonal projection has no signature contribution and retains the usual persistence deletion cost:
\begin{equation}
\label{eq_signature_diagonal_cost}
\widetilde{\pointMetric}_{q,\lambda_{\sigma}}
\bigl(
(p_i,\boldsymbol{\sigma}_i),
(\projection(p_i),\boldsymbol{\sigma}_i)
\bigr)
=
\pointMetric_q\bigl(p_i,\projection(p_i)\bigr).
\end{equation}
As in the standard augmented-diagram formulation, the cost between two diagonal dummy points is set to zero.
Hence signatures only affect correspondences between actual persistence features, while feature insertion and deletion retain their standard topological interpretation.

\section{Existing Concept Erasure Criteria as Special Cases of \acronym{}}
\label{sec_app_generalizing}

The editing formulation of \acronym{} separates the criterion used to characterize a concept from the procedure used to modify the representation. In the main manuscript, this criterion is represented by the concept score \mbox{$m_i=S(Z,C_i)$} and an intervention is specified by a target value \mbox{$m_i^\star$}, possibly together with targets on the interaction structure. Concept signatures from \autoref{sec_app_concept_signatures} further allow the target to constrain information associated with a concept beyond its persistence coordinates. Together, these ingredients allow several existing concept-erasure criteria to be recovered as particular \acronym{} editing objectives. Our goal is not to reproduce their optimization algorithms, which may rely on closed-form projections, adversarial optimization, sequential subspace construction or eigendecompositions, but rather the objectives or constraints that they seek to enforce.

We group these connections according to the quantity used to characterize concept information: covariance and predictive recoverability for linear erasure, distributional and kernel dependence for nonlinear erasure and rate-distortion criteria. Most of these connections arise in a feature-wise specialization of \acronym{}, in which interaction coordinates are fixed and only the concept maxima and, when needed, their signatures are edited. This is sufficient to recover the corresponding erasure criteria, while the saddle structure of \acronym{} provides additional information that is not required by these special cases.

\subsection{Objective-Level Specialization of \acronym{}}

\begin{proposition}[Feature-wise specialization]
\label{prop_ctr_feature_specialization}
Consider \mbox{$n$} semantically matched signature-augmented persistence features of the current and target \acronym{} respectively given by \mbox{$(p_i(Z),\boldsymbol{\sigma}_i(Z))$} and \mbox{$(p_i^\star,\boldsymbol{\sigma}_i^\star)$}, with \mbox{$p_i(Z)=(s_i,m_i(Z))$} and \mbox{$p_i^\star=(s_i,m_i^\star)$}. Assume that the interaction coordinate \mbox{$s_i$} is identical in both features and does not contribute to the intervention. Under the semantic matching introduced in \autoref{sec_ctr_editing} and the signature-augmented ground metric of \autoref{eq_signature_ground_metric}, the \acronym{} objective reduces to:
\begin{equation}
\label{eq_ctr_specialization_reduction}
\mathcal{L}_{\acronym{}}(Z)
=
\sum_{i=1}^{n}
\left(
\left|m_i(Z)-m_i^\star\right|^q
+
\lambda_{\sigma}^q
 d_{\sigma}\!\left(\boldsymbol{\sigma}_i(Z),\boldsymbol{\sigma}_i^\star\right)^q
\right).
\end{equation}
\end{proposition}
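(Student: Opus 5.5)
The plan is to unfold the definition of $\mathcal{L}_{\acronym{}}$ in \autoref{eq_ctr_loss}, which is $\wasserstein{q}\bigl(\acronym{}(Z),\acronym{}^\star\bigr)^q$ with the ground metric $\pointMetric_q$ replaced by $\widetilde{\pointMetric}_{q,\lambda_\sigma}$ of \autoref{eq_signature_ground_metric}. The $q$-th power removes the outer exponent $1/q$ in \autoref{eq_wasserstein_mapping}. This leaves a minimum, over admissible assignments $\phi$, of a sum of $q$-th powers of ground distances.

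The first step is to handle the minimization. The semantic matching of \autoref{sec_ctr_editing} constrains the assignment so that each current feature $(p_i(Z),\boldsymbol{\sigma}_i(Z))$ is matched to its semantic counterpart $(p_i^\star,\boldsymbol{\sigma}_i^\star)$. Since all $n$ features are semantically matched, the admissible set $\Phi$ collapses to the single bijection $\phi(p_i(Z))=p_i^\star$. The augmentation with diagonal projections then contributes only dummy-to-dummy pairs, which have cost zero by convention. The main point to state carefully is that every off-diagonal point on both sides is consumed by the prescribed matching. As a result, no diagonal projection is matched to an off-diagonal point, and the minimum equals the cost of this one assignment.

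The second step is to evaluate each matched cost. I will substitute $p_i(Z)=(s_i,m_i(Z))$ and $p_i^\star=(s_i,m_i^\star)$ into \autoref{eq_signature_ground_metric}. The interaction term $|s_i-s_i|^q$ vanishes because the saddle coordinate is identical in both diagrams. This gives
$\widetilde{\pointMetric}_{q,\lambda_\sigma}^q = |m_i(Z)-m_i^\star|^q + \lambda_\sigma^q d_\sigma(\boldsymbol{\sigma}_i(Z),\boldsymbol{\sigma}_i^\star)^q$.
Summing over $i=1,\ldots,n$ yields \autoref{eq_ctr_specialization_reduction}.

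Beyond the collapse of $\Phi$, which I expect to be the only delicate step, I will add a remark on the assumption that $s_i$ does not contribute to the intervention. I will read it as meaning the saddle coordinates are held fixed, for example through the stop-gradient described in \autoref{sec_app_impl}. Under that reading, any persistence features outside the $n$ matched ones are identical in the current and target diagrams and contribute zero.
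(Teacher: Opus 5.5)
Your proposal is correct and follows the same route as the paper. The semantic matching fixes the assignment, the term $|s_i-s_i|^q$ in the signature-augmented ground metric vanishes, and summing the remaining contributions $|m_i(Z)-m_i^\star|^q+\lambda_\sigma^q d_\sigma(\boldsymbol{\sigma}_i(Z),\boldsymbol{\sigma}_i^\star)^q$ over the matched features gives the result. Your explicit handling of the zero-cost dummy-to-dummy diagonal pairs is a detail the paper leaves implicit. Your closing remark, however, is unnecessary and slightly off: equality of $s_i$ is already a hypothesis of the statement, and the implementation's stop-gradient acts only on the $\min(m_i,m_j)$ factor rather than freezing saddle values.
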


\begin{proof}
Since \mbox{$(p_i(Z),\boldsymbol{\sigma}_i(Z))$} is constrained to match \mbox{$(p_i^\star,\boldsymbol{\sigma}_i^\star)$}, its contribution to the \mbox{$L^q$-Wasserstein} objective is the \mbox{$q$}-th power of their signature-augmented ground distance. The interaction coordinates being equal, the term \mbox{$|s_i-s_i|^q$} vanishes and the contribution reduces to \mbox{$|m_i(Z)-m_i^\star|^q+\lambda_{\sigma}^q d_{\sigma}(\boldsymbol{\sigma}_i(Z),\boldsymbol{\sigma}_i^\star)^q$}. Summing over the matched features gives \autoref{eq_ctr_specialization_reduction}.
\end{proof}

Setting \mbox{$\lambda_{\sigma}=0$} recovers the extrema-only specialization, for which \autoref{eq_ctr_specialization_reduction} reduces to \mbox{$\sum_i|m_i(Z)-m_i^\star|^q$}. Consequently, once a meaningful score and, when needed, a meaningful concept signature have been chosen, their target values directly specify the corresponding \acronym{} editing objective. In the following, we instantiate this construction with quantities having independent statistical interpretations, rather than defining features from arbitrary pre-existing losses. The resulting connections therefore arise from expressing covariance, predictive information, distributional discrepancy, statistical dependence and rate-distortion within the same editing interface.

The specializations considered below are summarized compactly as follows.
Each row specifies the \acronym{} scores and target values associated with an
existing erasure criterion:

\[
\resizebox{\linewidth}{!}{%
$\displaystyle
\renewcommand{\arraystretch}{1.5}
\left\{
\begin{array}{@{}l@{\qquad}l@{\qquad}l@{\qquad}c@{\;}l@{}}

m_c(P)=\|P\mathbf g_c\|_2,\;
\boldsymbol{\sigma}_c(P)=P\mathbf g_c
&
m_c^\star=0,
&
\boldsymbol{\sigma}_c^\star=\mathbf 0
&
\equiv
&
\text{LEACE}
\\

m_a(P)=\|P\mathbf g_a\|_2,\;
\boldsymbol{\sigma}_a(P)=P\mathbf g_a, \; a \in \{c, y\}
&
m_c^\star=0,
&
\boldsymbol{\sigma}_c^\star=\mathbf 0,
&
\equiv
&
\text{SPLINCE}
\\[-1mm]

&
m_y^\star=\|\mathbf g_y\|_2,
&
\boldsymbol{\sigma}_y^\star=\mathbf g_y
&
&
\\

m_c(P)=\mathcal L_c^0-
\min_{\psi}\mathcal L_{\mathrm{BCE}}\!\left(c,q_\psi(PH)\right)
&
m_c^\star=0
&
&
\equiv
&
\text{RLACE}
\\[-1mm]

&
&
&
\sim
&
\text{INLP}
\\

m_c(P)=
\left(
\sum_{a<b}
\widehat{\operatorname{MMD}}^{\,2}
\!\left(
\mathbb P_{z\mid c=a},
\mathbb P_{z\mid c=b}
\right)
\right)^{1/2}
&
m_c^\star=0
&
&
\equiv
&
\text{LEOPARD}
\\

m_S(Z)=\operatorname{HSIC}(Z,S),\;
m_A(Z)=\operatorname{HSIC}(Z,A)
&
m_S^\star=0,
&
&
\approx
&
\text{Obliviator}
\\[-1mm]

&
\multicolumn{2}{@{}l@{}}{
m_A^\star>m_A(H),\; A\in\{X,X_i,Y\}
}
&
&
\\

m_c(Z)=U-R(Z\mid K),\;
m_R(Z)=\lambda R(Z)
&
m_c^\star=0,
&
m_R^\star=\lambda R(H)
&
\equiv
&
\text{KRaM}

\end{array}
\right.
$%
}
\]

Here, $\equiv$ denotes an exact correspondence at the stated criterion or
objective level under the specified restrictions, $\approx$ denotes a correspondence
in the optimized quantities and direction of intervention without equality of the
complete objective, and $\sim$ denotes a structural or procedural correspondence.
These relations do not imply equivalence of the complete optimization algorithms
or of the transformations they select. The following subsections detail each case.

\subsection{Linear and Predictive Erasure}
\label{sec_app_generalizing_linear}

\myParagraph{Covariance-based erasure: LEACE and SPLINCE.}
LEACE \citep{leace} characterizes perfect linear erasure through the absence of cross-covariance between the edited representation and the concept, while selecting among valid erasers by minimizing representation distortion. SPLINCE \citep{splince} additionally requires the cross-covariance with a target task to remain unchanged. These constraints can be recovered from covariance-based concept features and signatures. Let \mbox{$h\in\mathbb{R}^{d}$} be a centered random representation, let \mbox{$c$} and \mbox{$y$} denote centered scalar concept and task variables and define \mbox{$\mathbf g_c=\operatorname{Cov}(h,c)$} and \mbox{$\mathbf g_y=\operatorname{Cov}(h,y)$}. For a linear edit \mbox{$z=Ph$}, their cross-covariances are:
\begin{equation}
\label{eq_ctr_leace_covariance_transform}
\operatorname{Cov}(z,c)=P\mathbf g_c,
\qquad
\operatorname{Cov}(z,y)=P\mathbf g_y.
\end{equation}
For this covariance-based specialization, we consider the \acronym{} concept score \(m_a(P)=\|P\mathbf g_a\|_2\) for both the concept \(a=c\) and the task \(a=y\). We associate each score with the covariance signature \(\boldsymbol{\sigma}_a(P)=P\mathbf g_a\) and use the Euclidean signature distance.

\begin{proposition}[LEACE and SPLINCE as covariance-based \acronym{} specializations]
\label{prop_ctr_leace_splince}
Consider the feature-wise specialization of \autoref{prop_ctr_feature_specialization} with the covariance scores and signatures defined above and \mbox{$\lambda_{\sigma}>0$}. For concept erasure, let the target concept feature satisfy \mbox{$m_c^\star=0$} and \mbox{$\boldsymbol{\sigma}_c^\star=\mathbf{0}$}. Then its \acronym{} contribution is zero if and only if the LEACE linear-guardedness constraint \mbox{$P\mathbf g_c=0$} holds. If a task feature is additionally assigned the targets \mbox{$m_y^\star=\|\mathbf g_y\|_2$} and \mbox{$\boldsymbol{\sigma}_y^\star=\mathbf g_y$}, the joint \acronym{} contribution is zero if and only if:
\begin{equation}
\label{eq_ctr_splince_constraints}
P\mathbf g_c=0,
\qquad
P\mathbf g_y=\mathbf g_y,
\end{equation}
which are the concept-erasure and task-covariance-preservation constraints of SPLINCE. The same result holds for vector-valued concepts or tasks by replacing the covariance vectors with the corresponding cross-covariance matrices and the Euclidean norm with the Frobenius norm.
\end{proposition}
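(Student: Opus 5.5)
The plan is to apply \autoref{prop_ctr_feature_specialization}, which reduces the \acronym{} objective on semantically matched features to a sum of non-negative terms. Each term then vanishes exactly when the corresponding constraint holds. With the covariance scores $m_a(P)=\|P\mathbf g_a\|_2$ and signatures $\boldsymbol{\sigma}_a(P)=P\mathbf g_a$, the contribution of the concept feature is
\begin{equation*}
\bigl|\|P\mathbf g_c\|_2 - 0\bigr|^q + \lambda_\sigma^q\,\|P\mathbf g_c - \mathbf 0\|_2^q ,
\end{equation*}
and the contribution of the task feature is
\begin{equation*}
\bigl|\|P\mathbf g_y\|_2-\|\mathbf g_y\|_2\bigr|^q + \lambda_\sigma^q\,\|P\mathbf g_y-\mathbf g_y\|_2^q .
\end{equation*}
Interaction coordinates are fixed and contribute nothing, by the assumption of the feature-wise specialization.

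For LEACE, I argue both directions directly. If $P\mathbf g_c=0$, both terms of the concept contribution are zero. Conversely, a zero contribution forces $\|P\mathbf g_c\|_2=0$, which already follows from the score term alone, so $P\mathbf g_c=0$. Using \eqref{eq_ctr_leace_covariance_transform}, this is precisely $\operatorname{Cov}(z,c)=0$, i.e.\ the LEACE linear-guardedness constraint.

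For SPLINCE, the joint contribution is a sum of non-negative terms, so it vanishes if and only if each term vanishes. The concept term gives $P\mathbf g_c=0$ as before. For the task term, the score term alone does not suffice, because $\|P\mathbf g_y\|_2=\|\mathbf g_y\|_2$ does not pin down the direction. This is the one point where $\lambda_\sigma>0$ is needed: the signature term $\lambda_\sigma^q\|P\mathbf g_y-\mathbf g_y\|_2^q=0$ forces $P\mathbf g_y=\mathbf g_y$. Conversely, if $P\mathbf g_y=\mathbf g_y$ then both task terms are zero, since equal vectors have equal norms. This yields \eqref{eq_ctr_splince_constraints}, which coincides with SPLINCE's erasure and covariance-preservation constraints.

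For vector-valued concepts or tasks, I replace $\mathbf g_a$ by the cross-covariance matrix $G_a=\operatorname{Cov}(h,a)$ and use $\operatorname{Cov}(Ph,a)=PG_a$. The scores become $\|PG_a\|_F$ and the signature distance becomes the Frobenius distance, which is a metric, so \autoref{prop_signature_ground_metric} still applies. The same argument then goes through verbatim, using the definiteness of the Frobenius norm.

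The only real subtlety, beyond routine checks, is making sure the semantic matching fixes the pairing, so that no cheaper diagonal or cross assignment can change the contribution. This is guaranteed by the hard semantic matching assumed in \autoref{prop_ctr_feature_specialization}.
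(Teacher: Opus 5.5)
Your proof is correct and takes essentially the same route as the paper. You reduce via the feature-wise specialization to a sum of non-negative score and signature terms, observe that the concept terms vanish exactly when $P\mathbf g_c=0$, and use the task signature term (with $\lambda_\sigma>0$) to force $P\mathbf g_y=\mathbf g_y$, which in turn makes the task score term vanish; the matrix case follows in the same way with the Frobenius norm. Your side remarks, that the concept score term alone already suffices and that $\lambda_\sigma>0$ matters only for the task feature, are accurate refinements and do not change the argument.
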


\begin{proof}
For the concept feature, \mbox{$m_c(P)=0$} and \mbox{$\boldsymbol{\sigma}_c(P)=\mathbf{0}$} are both equivalent to \mbox{$P\mathbf g_c=\mathbf{0}$}. Since every term in \autoref{eq_ctr_specialization_reduction} is non-negative, its concept contribution is therefore zero if and only if the LEACE linear-erasure constraint holds.

For the task feature, the signature contribution vanishes if and only if \mbox{$P\mathbf g_y=\mathbf g_y$}. This equality also implies \mbox{$m_y(P)=\|P\mathbf g_y\|_2=\|\mathbf g_y\|_2=m_y^\star$}, so the persistence-coordinate contribution vanishes as well. Hence, the complete \acronym{} contribution is zero if and only if both conditions in Eq.~\eqref{eq_ctr_splince_constraints} hold. The matrix-valued case follows identically using the Frobenius norm.
\end{proof}

Preserving only the scalar 
%MSE recoverability 
covariance norm
of the task is in general weaker than the SPLINCE requirement \mbox{$P\mathbf g_y=\mathbf g_y$}.
% since two different covariance vectors may have the same norm.
The covariance signature makes this distinction explicit: preserving \mbox{$\boldsymbol{\sigma}_y$} imposes the covariance-vector equality itself, while its associated covariance score is then preserved automatically. Thus, concept signatures allow the complete SPLINCE constraint to be expressed without replacing vector preservation by a weaker scalar 
%recoverability condition. 
constraint.
LEACE and SPLINCE additionally select a particular transformation among those satisfying their respective constraints through a minimum-distortion criterion. The propositions above establish the erasure and preservation constraints represented by the \acronym{} editing objective; reproducing the selected transformation also requires imposing the same distortion criterion and transformation family.

%%%%%%%%%%%%%%%%%%%%%%%%%%%%%%%%%%%%%%%%%%%%%%%%%%%%%%%%%%%%%%%%%%%%%%%%%%%%%%%
%%%%%%%%%%%%%%%%%%%%%%%%%%%%%%%%%%%%%%%%%%%%%%%%%%%%%%%%%%%%%%%%%%%%%%%%%%%%%%%
%%%%%%%%%%%%%%%%%%%%%%%%%%%%%%%%%%%%%%%%%%%%%%%%%%%%%%%%%%%%%%%%%%%%%%%%%%%%%%%

\myParagraph{Predictive erasure: RLACE and INLP.}
RLACE \citep{rlace} formulates linear concept erasure as an adversarial game in which a linear predictor seeks to recover the concept while a rank-constrained linear transformation seeks to prevent this prediction. This criterion is directly related to the BCE recoverability score used by \acronym{}. Let \mbox{$c\in\{0,1\}$} be a binary concept and, for a linear edit \mbox{$z=Ph$}, let:
\begin{equation}
\label{eq_ctr_rlace_score}
m_c^{\mathrm{BCE}}(P)
=
\mathcal{L}_c^0
-
\min_{\psi}
\mathcal{L}_{\mathrm{BCE}}(c,q_{\psi}(Ph)),
\end{equation}
where \mbox{$q_{\psi}$} is a linear logistic probe and \mbox{$\mathcal{L}_c^0$} is the BCE of the optimal constant predictor, as in \autoref{eq_ctr_concept_score}. Assuming that the probe family contains this constant predictor, \mbox{$m_c^{\mathrm{BCE}}(P)\geq0$}.

\begin{proposition}[RLACE as a predictive-recoverability \acronym{} specialization]
\label{prop_ctr_rlace}
Let \mbox{$\mathcal{A}$} be a common feasible family of linear edits for \acronym{} and the adversarial erasure problem. Consider a feature-wise \acronym{} specialization with the score of \autoref{eq_ctr_rlace_score}, target \mbox{$m_c^\star=0$} and no signature contribution. For any \mbox{$q>0$}, minimizing the corresponding \acronym{} contribution over \mbox{$P\in\mathcal{A}$} is equivalent to maximizing the loss of the optimal linear logistic predictor:
\begin{equation}
\label{eq_ctr_rlace_equivalence}
\argmin_{P\in\mathcal{A}}
\left|m_c^{\mathrm{BCE}}(P)\right|^q
=
\argmax_{P\in\mathcal{A}}
\min_{\psi}
\mathcal{L}_{\mathrm{BCE}}(c,q_{\psi}(Ph)).
\end{equation}
Consequently, choosing the same relaxed rank-constrained transformation family as RLACE recovers its concept-erasure criterion at the objective level.
\end{proposition}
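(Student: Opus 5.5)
The plan is to reduce the statement to the feature-wise specialization of \autoref{prop_ctr_feature_specialization} and then use monotonicity. The key steps, in order, are as follows.

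\textbf{Step 1: reduce to a single feature.} With no signature contribution ($\lambda_\sigma=0$) and the concept feature semantically matched to a target with identical interaction coordinate, \autoref{eq_ctr_specialization_reduction} leaves a single term. The \acronym{} contribution, viewed as a function of $P\in\mathcal{A}$, is
\[
J(P)=\bigl|m_c^{\mathrm{BCE}}(P)-0\bigr|^q=\bigl|m_c^{\mathrm{BCE}}(P)\bigr|^q .
\]

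\textbf{Step 2: remove the absolute value.} The probe family contains the optimal constant predictor. Therefore $\min_\psi\mathcal{L}_{\mathrm{BCE}}(c,q_\psi(Ph))\le\mathcal{L}_c^0$ for every $P$, which gives $m_c^{\mathrm{BCE}}(P)\ge 0$. Hence $J(P)=m_c^{\mathrm{BCE}}(P)^q$. Since $t\mapsto t^q$ is strictly increasing on $[0,\infty)$ for $q>0$, the minimizers of $J$ over $\mathcal{A}$ coincide with the minimizers of $m_c^{\mathrm{BCE}}$ over $\mathcal{A}$.

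\textbf{Step 3: pass to the inner loss.} The baseline $\mathcal{L}_c^0$ depends only on the labels $c$ and not on $P$, so it is a constant over $\mathcal{A}$. Thus minimizing $m_c^{\mathrm{BCE}}(P)=\mathcal{L}_c^0-\min_\psi\mathcal{L}_{\mathrm{BCE}}(c,q_\psi(Ph))$ is the same as maximizing $\min_\psi\mathcal{L}_{\mathrm{BCE}}(c,q_\psi(Ph))$. This yields the set equality \autoref{eq_ctr_rlace_equivalence}.

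\textbf{Step 4: interpret the argmin.} The equality should be read as equality of (possibly empty) solution sets. Strict monotone transformations preserve attainment as well, so no existence assumption is needed: if one side is empty, so is the other.

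\textbf{Step 5: recover RLACE.} Take $\mathcal{A}$ to be the relaxed rank-constrained family used by RLACE. The right-hand side of \autoref{eq_ctr_rlace_equivalence} is then exactly RLACE's max--min objective with a linear logistic adversary, which gives the final claim at the objective level.

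\textbf{Main obstacle.} The only delicate point is Step 2, namely justifying $m_c^{\mathrm{BCE}}\ge 0$ so that the absolute value and the power can be dropped. This requires the probe class to contain constants; a logistic probe with a bias term and zero weights realizes the optimal constant, which is the same value used to define $\mathcal{L}_c^0$. If the inner minimum were only an infimum, I would state the result for infima rather than minima; the monotonicity argument is unchanged.
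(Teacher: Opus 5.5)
Your proposal is correct and follows the paper's proof. The paper makes the same two moves: since $m_c^{\mathrm{BCE}}\ge 0$ (because the probe family contains the optimal constant predictor) and $t\mapsto t^q$ is increasing, the objective reduces to minimizing $m_c^{\mathrm{BCE}}$, and since $\mathcal{L}_c^0$ does not depend on $P$, this becomes maximizing the optimal probe loss. Your remarks on reading the argmin as an equality of possibly empty solution sets, and on the infimum case, are harmless refinements that the paper leaves implicit.
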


\begin{proof}
Since \mbox{$m_c^{\mathrm{BCE}}(P)\geq0$} and \mbox{$q>0$}, minimizing \mbox{$|m_c^{\mathrm{BCE}}(P)|^q$} is equivalent to minimizing \mbox{$m_c^{\mathrm{BCE}}(P)$}. The null loss \mbox{$\mathcal{L}_c^0$} does not depend on \mbox{$P$}, hence from \autoref{eq_ctr_rlace_score} this is equivalent to maximizing \mbox{$\min_{\psi}\mathcal{L}_{\mathrm{BCE}}(c,q_{\psi}(Ph))$}, which gives \autoref{eq_ctr_rlace_equivalence}.
\end{proof}

The connection also appears directly in the optimization procedure. At an \acronym{} iteration, the probe is fitted on the current representation, approximately minimizing \mbox{$\mathcal{L}_{\mathrm{BCE}}(c,q_{\psi}(Ph))$}. The subsequent gradient update of \mbox{$P$} toward \mbox{$m_c^\star=0$} reduces the recoverability in \autoref{eq_ctr_rlace_score}, and therefore increases the loss attained by this fitted predictor. After updating the representation, rebuilding the \acronym{} refits the probe and provides a new best response to the current transformation. Thus, the repeated probe-fitting and representation-update steps can be viewed as the two sides of the same predictor--eraser adversarial game as RLACE. The two methods differ in how this game is optimized and in the constraints imposed on the transformation.
This correspondence concerns a restricted specialization of \acronym{}. RLACE restricts the intervention to a fixed-rank orthogonal projection and considers a single adversarial predictive-erasure objective. In contrast, \acronym{} can use different concept scores and transformation families, jointly represent several concept and task features and additionally optimize their interactions through the target topology. The RLACE objective is therefore recovered when restricting \acronym{} to a single BCE-based concept feature with a zero target, no interaction constraints and the same projection family; these restrictions are not inherent to the general \acronym{} formulation.

INLP \citep{inlp} targets the same general notion of linear guardedness through a different sequential construction. At each iteration, it fits a linear concept classifier on the current representation, projects the representation onto the classifier nullspace and refits a new classifier on the projected representation. Each projection therefore exactly removes the direction used by the current classifier, while the repetition searches for additional predictive directions that remain after the previous ones have been removed. In contrast, BCE-based \acronym{} uses the recoverability score of \autoref{eq_ctr_rlace_score} as its editing criterion and replaces each hard nullspace projection by a gradient update that reduces the current concept recoverability before refitting the probe. If guardedness is evaluated with the same linear probe family and BCE criterion, both procedures target the condition \mbox{$m_c^{\mathrm{BCE}}=0$}. INLP can therefore be viewed as a hard sequential counterpart of this predictive-erasure specialization of \acronym{}, rather than as a distinct scalar objective reproduced by \acronym{}. The original INLP formulation uses classification accuracy to operationalize guardedness, so this correspondence concerns the shared linear-predictive constraint rather than 
%equality of 
their optimization procedures.

\subsection{Distributional and Nonlinear Dependence}
\label{sec_app_generalizing_nonlinear}

The previous specializations characterize concept information through covariance or linear predictability. Nonlinear erasure methods instead consider stronger distributional or statistical dependencies between the edited representation and the concept. We next show that such criteria can similarly be expressed through \acronym{} concept scores.

\myParagraph{Distribution matching: LEOPARD.}
LEOPARD \citep{leopard} seeks a rank-constrained orthogonal projection for which the class-conditional distributions of a discrete concept become indistinguishable. Let \mbox{$c\in\{1,\ldots,K\}$} and \mbox{$z=Ph$}. We define the following distributional \acronym{} concept score:
\begin{equation}
\label{eq_ctr_leopard_score}
m_c^{\mathrm{MMD}}(P)
=
\left(
\sum_{a<b}
\widehat{\operatorname{MMD}}^2
\left(
\mathbb{P}_{z\mid c=a},
\mathbb{P}_{z\mid c=b}
\right)
\right)^{1/2}.
\end{equation}
For a binary concept, this is the empirical MMD between its two class-conditional distributions.

\begin{proposition}[LEOPARD as a distribution-matching \acronym{} specialization]
\label{prop_ctr_leopard}
Consider the feature-wise specialization of \autoref{prop_ctr_feature_specialization} with the score of \autoref{eq_ctr_leopard_score}, target \mbox{$m_c^\star=0$}, no signature contribution and \mbox{$q=2$}. Then the corresponding \acronym{} contribution is exactly the LEOPARD erasure loss:
\begin{equation}
\label{eq_ctr_leopard_equivalence}
\left|m_c^{\mathrm{MMD}}(P)\right|^2
=
\sum_{a<b}
\widehat{\operatorname{MMD}}^2
\left(
\mathbb{P}_{z\mid c=a},
\mathbb{P}_{z\mid c=b}
\right).
\end{equation}
Restricting \mbox{$P=UU^\top$} to the same rank-\mbox{$r$} parameterization and adding the LEOPARD projection penalty \mbox{$\gamma\|U^\top U-I_r\|_F^2$} therefore recovers its complete training objective.
\end{proposition}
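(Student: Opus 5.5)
The proof reduces to a direct identity once the feature-wise specialization of \autoref{prop_ctr_feature_specialization} is invoked. That result expresses the \acronym{} objective restricted to the single concept feature as $|m_c(P)-m_c^\star|^q+\lambda_\sigma^q d_\sigma(\cdot,\cdot)^q$. With no signature contribution ($\lambda_\sigma=0$ or a trivial signature), $m_c^\star=0$ and $q=2$, this contribution becomes $|m_c^{\mathrm{MMD}}(P)|^2$.

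The only point to check is that squaring undoes the outer square root in \autoref{eq_ctr_leopard_score}. This needs the radicand to be nonnegative. I will argue that each $\widehat{\operatorname{MMD}}^2$ term is nonnegative: for the biased (V-statistic) estimator, it is the squared RKHS norm of the difference of empirical mean embeddings, hence $\geq 0$. So $m_c^{\mathrm{MMD}}(P)$ is a well-defined nonnegative real, and $|m_c^{\mathrm{MMD}}(P)|^2$ equals the sum of squared MMDs. This gives \autoref{eq_ctr_leopard_equivalence}.

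This nonnegativity is the one genuine obstacle. If LEOPARD uses the unbiased U-statistic, individual terms can be negative. In that case I would either restrict the statement to the biased estimator, or define the score through the clipped radicand $\max(\cdot,0)$, noting that equality with the loss then holds wherever the radicand is nonnegative. For the binary case there is a single pair $a<b$, and the score reduces to the MMD itself.

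For the final claim, I take the transformation family to be $P=UU^\top$ with $U\in\mathbb{R}^{d\times r}$, as in LEOPARD. The \acronym{} objective then equals the LEOPARD MMD term as a function of $U$. Adding the penalty $\gamma\|U^\top U-I_r\|_F^2$, which is the same extra term LEOPARD adds, yields exactly its training objective. As in the remarks after \autoref{prop_ctr_leace_splince}, I will state that the equivalence is at the objective level only: the kernel, bandwidth and estimator must match LEOPARD's choices, and the claim says nothing about identical optimization trajectories.
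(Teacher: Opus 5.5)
Your proposal is correct and follows the paper's own argument: apply \autoref{prop_ctr_feature_specialization} with $m_c^\star=0$, no signature term and $q=2$ to obtain $|m_c^{\mathrm{MMD}}(P)|^2$, substitute the score, and recover the full LEOPARD objective by restricting to $P=UU^\top$ and adding the orthogonality penalty. Your extra check that the radicand $\sum_{a<b}\widehat{\operatorname{MMD}}^2$ is nonnegative is valid and is something the paper's proof leaves implicit. This holds automatically for the biased V-statistic, while the unbiased U-statistic requires either restricting the statement or clipping the radicand.
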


\begin{proof}
With \mbox{$m_c^\star=0$}, no signature contribution and \mbox{$q=2$}, \autoref{eq_ctr_specialization_reduction} gives the contribution \mbox{$|m_c^{\mathrm{MMD}}(P)|^2$}. Substituting \autoref{eq_ctr_leopard_score} yields \autoref{eq_ctr_leopard_equivalence}. The remaining LEOPARD term depends only on the transformation parameterization and is recovered by imposing its orthogonality penalty.
\end{proof}

Thus, LEOPARD corresponds to a distributional choice of the \acronym{} concept score: instead of measuring whether the concept is linearly predictable, it measures whether its class-conditional representation distributions can still be distinguished. With a characteristic kernel, a zero population MMD implies equality of these distributions. The adjustable projection rank and the orthogonalization used by LEOPARD concern how this criterion is optimized, not the criterion itself.

\myParagraph{Kernel dependence: Obliviator.}
Obliviator \citep{obliviator} extends the erasure criterion from linear predictability to nonlinear statistical dependence. Let \(S\) denote the unwanted concept, \(Y\) the task, \(X\) the original representation and \(X_i\) the representation provided as input to the encoder at iteration \(i\). For its output \(Z_i\), Obliviator minimizes an empirical RKHS objective that, up to the common HSIC normalization, can be written as:
\begin{equation}
\label{eq_ctr_obliviator_objective}
\operatorname{HSIC}(Z_i,S)
-\tau_x\operatorname{HSIC}(Z_i,X)
-\tau_{x_i}\operatorname{HSIC}(Z_i,X_i)
-\tau_y\operatorname{HSIC}(Z_i,Y),
\end{equation}
where the task term is omitted in the unsupervised setting. The first term directly defines a nonlinear \acronym{} concept score \(m_S(Z)=\operatorname{HSIC}(Z,S)\). Targeting \(m_S^\star=0\) therefore recovers Obliviator's erasure criterion: with characteristic kernels, a vanishing population HSIC corresponds to statistical independence between the edited representation and the unwanted concept.

The remaining terms follow the same dependence measure but encourage information about \(X\), \(X_i\) and \(Y\) to remain observable in the edited representation. They can consequently be represented in \acronym{} as additional features with scores \(m_A(Z)=\operatorname{HSIC}(Z,A)\), for \(A\in\{X,X_i,Y\}\), whose target values specify the desired level of preservation. This recovers the same erasure--utility quantities used by Obliviator, although its encoder objective maximizes the utility HSIC terms directly rather than matching them to fixed target values.

The correspondence concerns the criterion encoded by these dependence quantities rather than the complete Obliviator procedure. Obliviator alternates encoder optimization with an additional RKHS disentanglement step obtained from a constrained eigenvalue problem, which exposes utility-related information while remaining insensitive to \(S\) before training the next encoder. In contrast, \acronym{} expresses the desired decrease and preservation of these concept scores through its target representation and optimizes the chosen transformation accordingly.

\subsection{Rate-Distortion Erasure}
\label{sec_app_generalizing_rate}

\myParagraph{Kernelized rate-distortion: KRaM.}
KRaM \citep{kram} formulates concept erasure through a kernelized rate-distortion criterion.
Let \mbox{$Z=T_\theta(H)\in\mathbb{R}^{N\times d}$} denote the edited representation and let \mbox{$K\in\mathbb{R}^{N\times N}$} encode similarities between the concept labels of the samples.
KRaM considers the standard rate-distortion \mbox{$R(Z)$} and its kernelized counterpart:
\begin{equation}
\label{eq_ctr_kram_rate}
R(Z\mid K)
=
\frac{1}{2}\log_2\det\left(
I+\frac{d}{N\epsilon^2}(ZZ^\top\odot K)
\right).
\end{equation}
Maximizing \mbox{$R(Z\mid K)$} encourages samples that are similar according to the concept kernel to become dissimilar in the edited representation.
Since this can simultaneously expand the overall representation space, KRaM additionally constrains its standard coding rate around the original value \mbox{$b=R(H)$}, leading to the relaxed objective:
\begin{equation}
\label{eq_ctr_kram_objective}
\max_{\theta}\;
R(Z\mid K)
-
\lambda\left|R(Z)-b\right|.
\end{equation}

Both terms can be expressed through \acronym{} features.
Under the normalization assumed by KRaM, let \mbox{$U$} denote the constant upper bound of \mbox{$R(Z\mid K)$}.
We define a concept feature and a rate-preservation feature with scores and targets:
\begin{equation}
\label{eq_ctr_kram_scores}
m_c(Z)=U-R(Z\mid K),
\qquad
m_c^\star=0,
\qquad
m_R(Z)=\lambda R(Z),
\qquad
m_R^\star=\lambda b.
\end{equation}
%
\iffalse
The first feature measures the remaining gap to maximal kernelized rate-distortion and therefore vanishes as the concept-erasure criterion of KRaM is optimized.
The second preserves the overall coding rate of the original representation.
With \mbox{$q=1$} and no signature contribution, their joint \acronym{} contribution is:
\fi
The first feature measures the gap to maximal kernelized rate-distortion and therefore vanishes as the concept-erasure criterion of KRaM is optimized. The second preserves the coding rate of the original representation. With \mbox{$q=1$} and no signature contribution, their joint \acronym{} contribution is:
\begin{align}
\mathcal{L}_{\acronym{}}(Z)
&=
\left|m_c(Z)-m_c^\star\right|
+
\left|m_R(Z)-m_R^\star\right| \nonumber\\
&=
U-R(Z\mid K)
+
\lambda\left|R(Z)-b\right|.
\label{eq_ctr_kram_equivalence}
\end{align}
Since \mbox{$U$} is independent of the transformation, minimizing \autoref{eq_ctr_kram_equivalence} is equivalent to maximizing the KRaM objective of \autoref{eq_ctr_kram_objective}.
KRaM can therefore be expressed as a two-feature rate-distortion specialization of \acronym{}, with one feature driven toward concept erasure and the other preserving the global representation rate.
The correspondence concerns the optimized criterion: KRaM additionally normalizes the transformed representations because the rate-distortion quantities are scale-sensitive and uses a nonlinear erasure function, whereas \acronym{} is not restricted to this transformation or optimization procedure.

\section{Affine Reparameterization Invariance of \acronym{}}
\label{sec_app_invariance}

We show that the analytical MSE--overlap instantiation of \acronym{} is invariant to any invertible affine change of coordinates of the representation. This includes, for example, permutations, rotations and nonsingular rescalings of the representation dimensions.

\begin{proof}[Proof of \autoref{prop_ctr_invariance}]
Let \mbox{$Z_c$} and \mbox{$\widetilde Z_c$} denote the centered representations. Centering removes the translation \mbox{$a$}, so \mbox{$\widetilde Z_c=Z_cB^\top$}. Since \mbox{$B$} is invertible, right multiplication by \mbox{$B^\top$} does not change the column space and therefore \mbox{$\operatorname{Col}(\widetilde Z_c)=\operatorname{Col}(Z_c)$}. Let \mbox{$\Pi_Z$} denote the orthogonal projector onto this common column space.
For concept \mbox{$C_i$}, let \mbox{$\bar{\mathbf c}_i$} be its centered annotation vector. Since the annotations are centered, the optimal constant predictor is zero, with mean-squared error:

$$
    \mathrm{MSE}_{\mathrm{null}}
    =
    \frac{1}{N}
    \left\|\bar{\mathbf c}_i\right\|_2^2.
$$

The predictions of a linear probe belong to \mbox{$\operatorname{Col}(Z_c)$}. Hence, the fitted values of the optimal least-squares probe are given by the orthogonal projection \mbox{$\widehat{\mathbf c}_i=\Pi_Z\bar{\mathbf c}_i$}, and its mean-squared error is:

$$
    \mathrm{MSE}_{\mathrm{probe}}
    =
    \frac{1}{N}
    \left\|
        \bar{\mathbf c}_i-\Pi_Z\bar{\mathbf c}_i
    \right\|_2^2.
$$

By definition of an orthogonal projection, \mbox{$\Pi_Z\bar{\mathbf c}_i$} and the residual \mbox{$\bar{\mathbf c}_i-\Pi_Z\bar{\mathbf c}_i$} are orthogonal. Let \mbox{$p:=\Pi_Z\bar{\mathbf c}_i$} and \mbox{$r:=\bar{\mathbf c}_i-\Pi_Z\bar{\mathbf c}_i$}, so that \mbox{$\bar{\mathbf c}_i=p+r$}. Since \mbox{$p$} and \mbox{$r$} are orthogonal, \mbox{$p^\top r=0$}. Therefore, \mbox{$\|\bar{\mathbf c}_i\|_2^2=\|p+r\|_2^2=(p+r)^\top(p+r)=p^\top p+2p^\top r+r^\top r=\|p\|_2^2+\|r\|_2^2$}. Hence:

$$
    \left\|\bar{\mathbf c}_i\right\|_2^2
    =
    \left\|\Pi_Z\bar{\mathbf c}_i\right\|_2^2
    +
    \left\|
        \bar{\mathbf c}_i-\Pi_Z\bar{\mathbf c}_i
    \right\|_2^2.
$$

The MSE concept score is the gain of the best linear probe over the best constant predictor. Thus:

$$
    m_i
    =
    \mathrm{MSE}_{\mathrm{null}}
    -
    \mathrm{MSE}_{\mathrm{probe}} 
    =
    \frac{1}{N}
    \left\|\Pi_Z\bar{\mathbf c}_i\right\|_2^2.
$$

Since \mbox{$\Pi_Z$} is the same for \mbox{$Z$} and \mbox{$\widetilde Z$}, 
%the concept score 
\mbox{$m_i$} is identical for both representations \mbox{$Z$} and \mbox{$\widetilde Z$} for every concept.
Likewise, the predictive Gram matrix can be expressed using the same projected annotations. Using:

$$
    \Sigma_Z
    =
    \frac{1}{N}Z_c^\top Z_c,
    \qquad
    g_i
    =
    \frac{1}{N}Z_c^\top\bar{\mathbf c}_i,
$$

and the original definition \mbox{$A_{ij}=g_i^\top\Sigma_Z^+g_j$}, we obtain:

$$
\begin{aligned}
    A_{ij}
    &=
    \left(
        \frac{1}{N}Z_c^\top\bar{\mathbf c}_i
    \right)^\top
    \left(
        \frac{1}{N}Z_c^\top Z_c
    \right)^+
    \left(
        \frac{1}{N}Z_c^\top\bar{\mathbf c}_j
    \right) \\
    &=
    \frac{1}{N}
    \bar{\mathbf c}_i^\top
    Z_c
    \left(Z_c^\top Z_c\right)^+
    Z_c^\top
    \bar{\mathbf c}_j.
\end{aligned}
$$

The matrix \mbox{$\Pi_Z=Z_c(Z_c^\top Z_c)^+Z_c^\top$} is the orthogonal projector onto \mbox{$\operatorname{Col}(Z_c)$}. Hence:

$$
    A_{ij}
    =
    \frac{1}{N}
    \bar{\mathbf c}_i^\top
    \Pi_Z
    \bar{\mathbf c}_j.
$$

We have \mbox{$\bar{\mathbf c}_i^\top\Pi_Z\bar{\mathbf c}_j=\bar{\mathbf c}_i^\top\Pi_Z^\top\Pi_Z\bar{\mathbf c}_j=(\Pi_Z\bar{\mathbf c}_i)^\top(\Pi_Z\bar{\mathbf c}_j)=\left\langle\Pi_Z\bar{\mathbf c}_i,\Pi_Z\bar{\mathbf c}_j\right\rangle$} since \mbox{$\Pi_Z$} is an orthogonal projector making \mbox{$\Pi_Z^\top=\Pi_Z$} and \mbox{$\Pi_Z^2=\Pi_Z$}. Hence:

$$
    A_{ij}
    =
    \frac{1}{N}
    \left\langle
        \Pi_Z\bar{\mathbf c}_i,
        \Pi_Z\bar{\mathbf c}_j
    \right\rangle.
$$

Thus every \mbox{$A_{ij}$} is unchanged. Consequently, whenever the overlap interaction is defined:
\[
    \rho_{ij}^{\mathrm{overlap}}
    =
    \frac{A_{ij}^2}{A_{ii}A_{jj}},
\]
it is also unchanged. Since the saddle value is \mbox{$s_{ij}=\min(m_i,m_j)\rho_{ij}^{\mathrm{overlap}}$}, all concept and saddle values of the scalar field are identical for \mbox{$Z$} and \mbox{$\widetilde Z$}.

The two representations therefore induce the same scalar field on the same concept mesh and hence the same filtration, persistence pairs and persistence diagram. This proves \mbox{$\acronym{}(\widetilde Z)=\acronym{}(Z)$}. If the scores are normalized by \mbox{$\operatorname{Var}(\mathbf c_i)$}, the conclusion is unchanged because this quantity depends only on the concept annotations.
\end{proof}

\begin{remark}[Scope of the result]
The proposition concerns the MSE--overlap instantiation of \acronym{}, for which the construction is determined by least-squares predictions. It does not require \mbox{$Z$} or its covariance matrix to have full rank. It establishes invariance to invertible affine changes of representation coordinates, not to arbitrary non-invertible or nonlinear transformations.
\end{remark}

\section{Stability of \acronym{}}
\label{app_ctr_stability}

We show that, away from degenerate cases, small perturbations of a representation
induce small perturbations of its \acronym{}. We consider the normalized MSE
formulation with predictive-overlap interactions from
\autoref{sec_ctr_construction}. The result is local: this is sufficient to state
that \acronym{} varies continuously and at most linearly under sufficiently small
representation perturbations.

\subsection{Representation-to-\acronym{} Stability}

Let \mbox{$\representation\in\mathbb{R}^{\datasetSize\times d}$} be a centered
representation of \mbox{$\datasetSize$} samples, and let
\mbox{$\mathbf c_1,\ldots,\mathbf c_{n_C}\in\mathbb{R}^{\datasetSize}$} be fixed,
centered, non-constant concept annotation vectors. Set \mbox{$n:=\datasetSize-1$}.
For any centered representation \mbox{$Z\in\mathbb{R}^{\datasetSize\times d}$},
define the empirical covariance and concept cross-covariances as:
\[
\Sigma(Z)
=
\frac{1}{n}Z^\top Z,
\qquad
\mathbf g_i(Z)
=
\frac{1}{n}Z^\top\mathbf c_i.
\]
Whenever \mbox{$\Sigma(Z)$} is invertible, define the predictive Gram entries as:
\[
A_{ij}(Z)
=
\mathbf g_i(Z)^\top\Sigma(Z)^{-1}\mathbf g_j(Z).
\]
For each concept, define its variance as:
\[
\sigma_i^2
=
\frac{1}{n}\|\mathbf c_i\|_2^2
>0.
\]
The normalized MSE concept scores, predictive-overlap interactions and
corresponding saddle values are respectively defined as:
\[
m_i(Z)
=
\frac{A_{ii}(Z)}{\sigma_i^2},
\qquad
\rho_{ij}(Z)
=
\frac{A_{ij}(Z)^2}{A_{ii}(Z)A_{jj}(Z)},
\qquad
s_{ij}(Z)
=
\min(m_i(Z),m_j(Z))\rho_{ij}(Z).
\]
We denote by \mbox{$f_Z$} the resulting scalar field on the fixed concept mesh
\mbox{$\mathcal M$}, by \mbox{$\acronym{}(Z)$} its persistence diagram and by \mbox{$d_B$} the Bottleneck distance, being the \mbox{$q\to\infty$} limit of the \mbox{$L^q$-Wasserstein} distance, measuring the largest cost of a matched pair instead of accumulating all matching costs.

\begin{proof}[Proof of \autoref{thm_ctr_representation_stability}]
% The proof follows directly from the fact that the scalar values defining the concept mesh are locally Lipschitz functions of the representation, followed by stability of persistence diagrams.
The proof follows from the fact that the scalar values defining the concept mesh are locally Lipschitz functions of the representation, followed by stability of persistence diagrams.

First, the covariance and cross-covariance maps defined above are polynomial and
linear in \mbox{$Z$}, respectively, and are therefore smooth. Since
\mbox{$\Sigma(\representation)$} is positive definite, continuity implies that there
exists \mbox{$\varepsilon_1>0$} such that \mbox{$\Sigma(Z)$} remains positive definite whenever
\mbox{$\|Z-\representation\|_F<\varepsilon_1$}. The matrix inverse is smooth on the set
of invertible matrices. Hence every \mbox{$A_{ij}(Z)$} is smooth, and therefore locally
Lipschitz, around \mbox{$\representation$}.

Because each \mbox{$\sigma_i^2$} is a fixed positive constant, every maximum \mbox{$m_i(Z)$}
is also locally Lipschitz. Moreover, \mbox{$m_i(\representation)>0$} implies
\mbox{$A_{ii}(\representation)>0$}. Since there are finitely many concepts and every
\mbox{$A_{ii}(Z)$} is continuous, there exists \mbox{$\varepsilon_2>0$} such that all
\mbox{$A_{ii}(Z)$} remain strictly positive whenever
\mbox{$\|Z-\representation\|_F<\varepsilon_2$}. We may therefore choose
\mbox{$\varepsilon\le\min(\varepsilon_1,\varepsilon_2)$}, so that both the covariance
inverse and every denominator appearing in \mbox{$\rho_{ij}$} are well-defined
throughout the ball. Consequently, every \mbox{$\rho_{ij}(Z)$} is locally Lipschitz as
well.

Finally, the map \mbox{$(a,b)\mapsto\min(a,b)$} is Lipschitz, so every \mbox{$s_{ij}(Z)$} is
locally Lipschitz. Since the concept mesh contains finitely many maxima and
saddles, their local Lipschitz constants can be bounded by a single constant
\mbox{$C>0$} on a possibly smaller ball of radius \mbox{$\varepsilon$}. Hence:
\begin{equation}
\label{eq_ctr_scalar_field_stability}
\|f_{\representation}-f_{\representation'}\|_\infty
\le
C\,\|\representation-\representation'\|_F
\qquad
\text{whenever }\|\representation-\representation'\|_F<\varepsilon.
\end{equation}

The two scalar fields are defined on the same concept mesh. Although
\acronym{} uses an upper-star filtration, applying the usual persistence
stability theorem to the negated scalar fields \citep{edelsbrunner09} gives:
\[
d_B\!\left(
\acronym{}(\representation),
\acronym{}(\representation')
\right)
\le
\|f_{\representation}-f_{\representation'}\|_\infty.
\]
Combining this inequality with \autoref{eq_ctr_scalar_field_stability} proves
\autoref{thm_ctr_representation_stability}.
\end{proof}

\begin{remark}[Scope of the result]
% The radius \mbox{$\varepsilon$} is not an additional model parameter. It only formalizes ``sufficiently small perturbations'': it is chosen small enough that the covariance remains invertible and every concept recoverability remains positive throughout the ball. The assumptions deliberately exclude degenerate points at which the covariance becomes singular or a concept recoverability vanishes. 
The result is local and deliberately excludes degenerate representations. The radius $\varepsilon$ is chosen small enough such that the covariance remains invertible and all concept recoverabilities remain strictly positive whenever $\mbox{$|\representation-\representation'|_F<\varepsilon$}$. Within this neighborhood, all quantities defining the \acronym{} scalar field are smooth or Lipschitz functions of the representation.
\iffalse
A more quantitative version could replace this neighborhood argument by explicit lower bounds on the smallest eigenvalue of the covariance and on concept recoverability, together with an explicit upper bound on $\mbox{$|\representation-\representation'|_F$}$. %, and then propagate perturbation bounds through the covariance, predictive Gram matrix, maxima, and saddles. 
% A more quantitative version could replace this neighborhood argument by explicit assumptions on the representation norm, a lower bound on the smallest eigenvalue of the covariance, a lower bound on concept recoverability, and a sufficiently small bound on $\mbox{$|\representation-\representation'|_F$}$. These quantities would yield explicit constants by successively bounding the perturbations of the covariance, its inverse, the predictive Gram matrix, the maxima, and the saddles. 
Extending the result to singular covariances would additionally require controlling the Moore--Penrose pseudoinverse, for instance under fixed-rank and nonzero singular value lower bound assumptions.
% Extending the result to singular covariances would additionally require controlling the Moore--Penrose pseudoinverse, typically through a fixed-rank assumption and a positive lower bound on the smallest nonzero singular value; without such spectral control, the pseudoinverse can become unstable when the rank changes. 
The persistence pairing itself need not remain fixed, since persistence stability directly controls such changes.
\fi
Handling rank changes requires controlling the Moore--Penrose pseudoinverse and is not needed for the local stability statement above. 
A more quantitative version could replace this neighborhood by explicit lower bounds on the smallest eigenvalue and on concept recoverability, together with an explicit upper bound on $\mbox{$|\representation-\representation'|_F$}$.
Importantly, the proof does not assume that the persistence pairing remains fixed, since the persistence stability theorem directly controls such changes. % of the diagram even when the pairing changes.
\end{remark}

\subsection{Network-to-\acronym{} Stability}

The representation-level result immediately yields a network-level statement
whenever the selected representation is locally Lipschitz with respect to the
network parameters. Let \mbox{$\representation_\theta$} denote the matrix collecting
the selected representations of the same \mbox{$\datasetSize$} samples for a network
with parameters \mbox{$\theta$}.

\begin{corollary}[Network-to-\acronym{} stability]
\label{cor_ctr_network_stability}
Assume the hypotheses of \autoref{thm_ctr_representation_stability} hold at
\mbox{$\representation_\theta$}, with stability radius \mbox{$\varepsilon$}, and assume that
the representation map is locally Lipschitz in the parameters: there exist
\mbox{$L_\theta>0$} and \mbox{$\eta>0$} such that
\mbox{$\|\representation_\theta-\representation_{\theta'}\|_F
\le L_\theta\|\theta-\theta'\|$} whenever \mbox{$\|\theta-\theta'\|<\eta$}.
Then there exists \mbox{$C>0$} such that, for every \mbox{$\theta'$} satisfying:
\[
\|\theta-\theta'\|
<
\min\!\left(\eta,\frac{\varepsilon}{L_\theta}\right),
\]
we have:
\[
d_B\!\left(
\acronym{}(\representation_\theta),
\acronym{}(\representation_{\theta'})
\right)
\le
C L_\theta\|\theta-\theta'\|.
\]
\end{corollary}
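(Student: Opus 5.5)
The corollary follows by composing the local Lipschitz property of the representation map in the parameters with the local representation-to-\acronym{} stability of \autoref{thm_ctr_representation_stability}. Let $C_0>0$ and $\varepsilon>0$ be the constant and radius provided by \autoref{thm_ctr_representation_stability} at $\representation_\theta$. We will show that the statement holds with $C:=C_0$.

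Fix $\theta'$ with $\|\theta-\theta'\|<\min(\eta,\varepsilon/L_\theta)$. Since $\|\theta-\theta'\|<\eta$, the parameter-Lipschitz assumption applies and gives
\[
\|\representation_\theta-\representation_{\theta'}\|_F\le L_\theta\|\theta-\theta'\|.
\]
Since also $\|\theta-\theta'\|<\varepsilon/L_\theta$, the right-hand side is strictly smaller than $\varepsilon$. Hence $\representation_{\theta'}$ lies in the stability ball of \autoref{thm_ctr_representation_stability} around $\representation_\theta$.

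The theorem also requires $\representation_{\theta'}$ to be centered. Two details need to be made explicit here. First, we take $\representation_\theta$ to denote the centered selected representation, which is the convention of \autoref{app_ctr_stability}. Second, we must check that centering preserves the Lipschitz bound. Centering is the orthogonal projection $I-\frac{1}{N}\mathbf 1\mathbf 1^\top$ applied on the left, so it is $1$-Lipschitz in Frobenius norm. Therefore, if the Lipschitz hypothesis is stated for the raw representations, it transfers to the centered ones with the same constant $L_\theta$, and the previous step is unaffected. The positive-definiteness and positive-recoverability hypotheses are only needed at the center point $\representation_\theta$, and they hold there by assumption.

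Applying \autoref{thm_ctr_representation_stability} then yields
\[
d_B\big(\acronym{}(\representation_\theta),\acronym{}(\representation_{\theta'})\big)\le C_0\|\representation_\theta-\representation_{\theta'}\|_F\le C_0L_\theta\|\theta-\theta'\|,
\]
which is the claimed bound with $C=C_0$.

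The composition itself is routine. The only real obstacle is the centering convention in the second step, which is resolved by the $1$-Lipschitz property of the centering projection.
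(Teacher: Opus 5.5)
Your proof is correct and takes the same route as the paper. The parameter restriction places $\representation_{\theta'}$ inside the stability ball of \autoref{thm_ctr_representation_stability}, and composing that bound with the parameter-Lipschitz estimate gives the claim with $C$ equal to the theorem's constant; your observation that centering is the $1$-Lipschitz projection $I-\frac{1}{N}\mathbf{1}\mathbf{1}^\top$ makes explicit a requirement the paper's short proof leaves implicit, namely that $\representation_{\theta'}$ must be centered to invoke the theorem.
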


\begin{proof}
The parameter restriction ensures that
\mbox{$\|\representation_\theta-\representation_{\theta'}\|_F<\varepsilon$}, so
\mbox{$\representation_{\theta'}$} lies inside the representation-level stability
ball. Applying \autoref{thm_ctr_representation_stability} and then the assumed
local Lipschitz bound for the representation map proves the result.
\end{proof}

\section{Detailed Experimental Setup}
\label{sec_app_exp_setup}

%%%%%%%%%%%%%%%%%%%%%%%%%%%%%%%%%%%%%%%%%%%%%%%%%%%%%%%%%%%%%%%%%%%%%%%%%%%%%%%
%%%%%%%%%%%%%%%%%%%%%%%%%%%%%%%%%%%%%%%%%%%%%%%%%%%%%%%%%%%%%%%%%%%%%%%%%%%%%%%
%%%%%%%%%%%%%%%%%%%%%%%%%%%%%%%%%%%%%%%%%%%%%%%%%%%%%%%%%%%%%%%%%%%%%%%%%%%%%%%

\subsection{Setup for Concept Disentanglement Before Erasure}
\label{sec_app_exp_setup_disent}

\myParagraph{Datasets and representations.}
We evaluate four binary classification datasets spanning image and text modalities. \autoref{tab:disentanglement_datasets} summarizes their task and concept labels, split sizes, fixed representations, and dimensions before and after PCA for the erasure experiments. \autoref{tab:disentanglement_groups} summarizes their group size, task-concept agreement $P(y = c)$ and label imbalance $P(y = 0)$.

\begin{table}[b]
    \caption{Datasets and fixed representations used for concept disentanglement before erasure.}
    \label{tab:disentanglement_datasets}
    \centering
    \setlength{\tabcolsep}{2pt}
    \resizebox{\textwidth}{!}{%
    \begin{tabular}{@{}lllrrr lcc@{}}
        \toprule
        \textbf{Dataset} & \textbf{Task $y$} & \textbf{Concept $c$} & \textbf{Train} & \textbf{Val} & \textbf{Test} & \textbf{Representation} & \shortstack{\textbf{Original} \\ \textbf{dim.}} & \shortstack{\textbf{PCA} \\ \textbf{dim.}} \\
        \midrule
        Waterbirds       & Bird type                  & Background & 4,795  & 1,199 & 5,794 & ResNet-50 avgpool    & 2,048 & 128 \\
        Bias in Bios SN  & Profession (surgeon/nurse) & Gender     & 21,145 & 3,256 & 8,136 & BERT \texttt{[CLS]} & 768   & 128 \\
        Dial Mention     & Mention detection           & Dialect    & 56,140 & 8,000 & 8,000 & BERT \texttt{[CLS]} & 768   & 128 \\
        Jigsaw           & Toxicity classification     & Gender     & 24,508 & 2,724 & 6,808 & BERT \texttt{[CLS]} & 768   & 128 \\
        \bottomrule
    \end{tabular}
    }
\end{table}

\begin{table}[b!]
    \caption{Group composition of each dataset split. The four group sizes are indexed by the task and concept labels $(y,c)$, with their percentage of the split shown in parentheses. Agreement is $P(y=c)$ and label imbalance is reported as $P(y=0)$.}
    \label{tab:disentanglement_groups}
    \centering
    \setlength{\tabcolsep}{5pt}
    \resizebox{\textwidth}{!}{%
    \begin{tabular}{ll *{4}{r@{\,}l} rr}
        \toprule
        \textbf{Dataset} & \textbf{Split} & \multicolumn{8}{c}{\textbf{Group size $n_{yc}$}} & \shortstack{\textbf{Agreement} \\ $P(y=c)$} & \shortstack{\textbf{Label} \\ \textbf{imbalance} \\ $P(y=0)$} \\
        \cmidrule(lr){3-10}
        & & \multicolumn{2}{c}{\textbf{$n_{00}$}} & \multicolumn{2}{c}{\textbf{$n_{01}$}} & \multicolumn{2}{c}{\textbf{$n_{10}$}} & \multicolumn{2}{c}{\textbf{$n_{11}$}} & & \\
        \midrule
        \multirow{3}{*}{Waterbirds}
            & Train & 3,498  & (72.95\%) & 184   & (3.84\%)  & 56    & (1.17\%)  & 1,057  & (22.04\%) & 94.99\% & 76.79\% \\
            & Val   & 467    & (38.95\%) & 466   & (38.87\%) & 133   & (11.09\%) & 133    & (11.09\%) & 50.04\% & 77.81\% \\
            & Test  & 2,255  & (38.92\%) & 2,255 & (38.92\%) & 642   & (11.08\%) & 642    & (11.08\%) & 50.00\% & 77.84\% \\
        \midrule
        \multirow{3}{*}{Bias in Bios SN}
            & Train & 7,521  & (35.57\%) & 1,308 & (6.19\%) & 1,127 & (5.33\%) & 11,189 & (52.92\%) & 88.48\% & 41.75\% \\
            & Val   & 1,158  & (35.57\%) & 202   & (6.20\%) & 174   & (5.34\%) & 1,722  & (52.89\%) & 88.45\% & 41.77\% \\
            & Test  & 2,894  & (35.57\%) & 504   & (6.19\%) & 434   & (5.33\%) & 4,304  & (52.90\%) & 88.47\% & 41.76\% \\
        \midrule
        \multirow{3}{*}{Dial Mention}
            & Train & 25,263 & (45.00\%) & 2,807 & (5.00\%)  & 2,807 & (5.00\%)  & 25,263 & (45.00\%) & 90.00\% & 50.00\% \\
            & Val   & 2,000  & (25.00\%) & 2,000 & (25.00\%) & 2,000 & (25.00\%) & 2,000  & (25.00\%) & 50.00\% & 50.00\% \\
            & Test  & 2,000  & (25.00\%) & 2,000 & (25.00\%) & 2,000 & (25.00\%) & 2,000  & (25.00\%) & 50.00\% & 50.00\% \\
        \midrule
        \multirow{3}{*}{Jigsaw}
            & Train & 11,029 & (45.00\%) & 1,225 & (5.00\%)  & 1,225 & (5.00\%)  & 11,029 & (45.00\%) & 90.00\% & 50.00\% \\
            & Val   & 681    & (25.00\%) & 681   & (25.00\%) & 681   & (25.00\%) & 681    & (25.00\%) & 50.00\% & 50.00\% \\
            & Test  & 1,702  & (25.00\%) & 1,702 & (25.00\%) & 1,702 & (25.00\%) & 1,702  & (25.00\%) & 50.00\% & 50.00\% \\
        \bottomrule
    \end{tabular}
    }
\end{table}

Waterbirds \citep{waterbirds} is constructed by segmenting bird foregrounds from the CUB dataset and compositing them onto land or water scenes from Places. The task label distinguishes landbirds from waterbirds, while the concept records the background type. In the training set, 95\% of each bird type appears on its associated background; validation and test instead allocate each bird type equally across both backgrounds. The resulting shift makes background strongly predictive during training while retaining rare landbird-on-water and waterbird-on-land groups, which expose reliance on the spurious background cue during evaluation.

Bias in Bios SN \citep{bias_in_bios} is obtained by retaining the surgeon and nurse examples from Bias in Bios, a corpus of 397,340 English online biographies collected from Common Crawl and labeled with one of 28 occupations and a binary gender inferred from gendered pronouns. We preserve the original train, validation, and test splits after filtering. The task is to predict which of the two professions is associated with a biography, while the concept is gender. This subset preserves the pronounced profession--gender imbalance in the source corpus and therefore tests whether gender information can be erased without removing profession-relevant content.

Dial Mention \citep{dial_blodgett,dial_elazar} is built from Twitter messages in the DIAL corpus. Its task is to determine whether a tweet originally contained an \texttt{@mention}, thereby distinguishing conversational from non-conversational tweets; all mention tokens are removed before the tweet is encoded, so they cannot directly reveal the label. The concept distinguishes the African-American English (AAE) and Standard American English (SAE) dialect categories, which the original benchmark uses as proxies for non-Hispanic Black and non-Hispanic White authors, respectively. The source construction retains only dialect assignments whose estimated confidence exceeds 80\%, selecting tweets for which the dialect proxy is comparatively unambiguous while preserving dialect-linked lexical variation in the encoded text.

Jigsaw \citep{jigsaw} contains public online comments from the Civil Comments platform, crowd-annotated for both toxicity and mentioned identities. We assign the positive task label when the aggregated toxicity score is greater than $0.5$. For the concept, we retain comments for which at least one selected identity annotation is available and aggregate eight source attributes: female, male, transgender, other gender, heterosexual, homosexual/gay/lesbian, bisexual, and other sexual orientation. The concept is positive when any of these attributes has a nonzero annotation and negative otherwise; we use the compact name ``Gender'' in tables and results.

We extract one fixed representation per example: the average-pooling output of an ImageNet-pretrained ResNet-50 for Waterbirds, and the final-layer \texttt{[CLS]} representation of BERT-base for the text datasets. Before applying either \acronym{} or an erasure baseline, we fit PCA on the training representations and use the same transformation for validation and test data. Each dataset is therefore represented in its own 128-dimensional PCA space. The feature extractors remain frozen throughout, so methods differ only in how they transform these representations post hoc.

\myParagraph{Evaluation protocol.}
After each representation intervention, we independently fit linear logistic-regression probes
for the task $y$ and concept $c$ on the transformed training representations. Inputs are
standardized using training statistics and the $\ell_2$ regularization parameter is selected
from $C\in\{0.01,0.1,1,10,100\}$ by validation accuracy. The selected probes are kept fixed
when evaluating the test set. Task utility is measured by accuracy and worst-group accuracy
(WGA), where the four groups correspond to the joint task--concept labels $(y,c)$ and WGA
is the minimum task accuracy over these groups.
Our primary concept-leakage metric is the adjusted balanced accuracy of the linear concept
probe, defined as $\mathrm{Leak}=2\,\mathrm{BalAcc}-1$, where
$\mathrm{BalAcc}$ averages the concept-class recalls. Thus, $0$ corresponds to chance-level
balanced prediction and $1$ to perfect concept prediction, independently of concept-class
imbalance. The leakage probe itself is unweighted; ``balanced'' refers to the evaluation
metric. Unless stated otherwise, the strong-erasure setting requires
$\mathrm{Leak}<0.1$. Additional oracle-conditioned and non-linear leakage diagnostics are
described in \subappref{sec_app_exp_disent_leak}.

\myParagraph{Model selection.}
For methods with tunable erasure hyperparameters, candidates under a leakage constraint
$t$ are considered feasible only when both their training and validation adjusted balanced
linear leakage are at most $t$. Among feasible candidates, we maximize the mean of validation
task accuracy and validation WGA. Test metrics are used only after model selection.
% For multi-seed parameter searches, the five best candidates are first evaluated over five seeds; the best mean configuration is then evaluated over five additional seeds, yielding the ten-seed averages reported in the main experiments.

\myParagraph{\acronym{} configuration.}
For both \acronym{}-DIS and \acronym{}-ERA, we use BCE recoverability with
differentiable linear logistic probes, with $C=0.01$, Newton updates and
convergence-based stopping. The representation transformation is a two-layer ReLU MLP
mapping the 128-dimensional PCA representation through a hidden dimension of 256 back to
128 dimensions. \acronym{}-DIS additionally uses interference saddles between the concept
and task nodes and is optimized for 200 iterations. Its learning rate is selected on validation
after applying LEACE to the transformed representation, after which the same learned
disentangling transformation is evaluated with all considered erasure methods.
In contrast, \acronym{}-ERA directly suppresses the concept score while preserving the task
score and does not use a saddle interaction. Its learning rate, task-preservation weight and
number of optimization iterations are selected jointly using the same leakage-constrained
validation criterion used for model selection, considering both SGD and Adam candidates.

%%%%%%%%%%%%%%%%%%%%%%%%%%%%%%%%%%%%%%%%%%%%%%%%%%%%%%%%%%%%%%%%%%%%%%%%%%%%%%%
%%%%%%%%%%%%%%%%%%%%%%%%%%%%%%%%%%%%%%%%%%%%%%%%%%%%%%%%%%%%%%%%%%%%%%%%%%%%%%%
%%%%%%%%%%%%%%%%%%%%%%%%%%%%%%%%%%%%%%%%%%%%%%%%%%%%%%%%%%%%%%%%%%%%%%%%%%%%%%%

\subsection{Setup for Concept Transfer}
\label{sec_app_exp_setup_disti}

\myParagraph{Dataset and architectures.}
We use the standard CIFAR-100 \citep{cifar100} split, containing 50,000 training and 10,000
test images from 100 classes. For hyperparameter selection, we hold out a deterministic
class-stratified 10\% of the training set, resulting in 45,000 training and 5,000 validation
images. After selecting the hyperparameters, we discard this validation split, retrain on all
50,000 training images, and report performance on the 10,000 test images. Following the
protocol of CKD \citet{contrastiveKD}, we use their pretrained ResNet32$\times$4 teacher
and consider ResNet8$\times$4 and ShuffleNetV2 students. The teacher is frozen throughout
distillation, while the students are trained from scratch. Their architectures are summarized in \autoref{tab:distillation_architectures}.

\begin{table}[b!]
\caption{Teacher and student architectures used for concept transfer.}
\label{tab:distillation_architectures}
\centering
\setlength{\tabcolsep}{8pt}
\begin{tabular}{@{}llcrl@{}}
\toprule
\textbf{Role} & \textbf{Architecture} &
\shortstack{\textbf{Penultimate}\\ \textbf{dim.}} & \textbf{Parameters} &
\textbf{Training} \\
\midrule
Teacher & ResNet32$\times$4 & 256   & 7,433,860 & CKD pretrained checkpoint, frozen \\
Student & ResNet8$\times$4  & 256   & 1,233,540 & From scratch \\
Student & ShuffleNetV2       & 1,024 & 1,355,528 & From scratch \\
\bottomrule
\end{tabular}
\end{table}

\myParagraph{Distillation training protocol and losses.}
We follow the CIFAR-100 training protocol of CKD \citep{contrastiveKD}: all students
are trained for 240 epochs with a batch size of 64 using SGD with momentum $0.9$ and weight
decay $5 \times 10^{-4}$. The initial learning rate is $0.05$ for ResNet8$\times$4 and $0.01$
for ShuffleNetV2 and is multiplied by $0.1$ after epochs 150, 180 and 210. We use random
cropping and horizontal flipping during training and the standard CIFAR-100 normalization.
For logit distillation, the temperature is fixed to $T=4$. The weights used by each method are
summarized in \autoref{tab:distillation_losses}, where the total objective is written as
$\gamma\mathcal{L}_{\mathrm{CE}}+\alpha\mathcal{L}_{\mathrm{KD}}
+\beta\mathcal{L}_{\mathrm{aux}}$. FitNet uses the CKD hint layer 2 with a learned
convolutional regressor and a mean-squared hint loss, while SPKD matches pairwise similarities
of the last convolutional feature maps.

\begin{table}[t!]
\begin{minipage}[t]{0.56\textwidth}
\caption{Weights used in the concept transfer experiments. A dash indicates that no
auxiliary loss is used.}
\label{tab:distillation_losses}
\centering
\setlength{\tabcolsep}{7pt}
\resizebox{\linewidth}{!}{%
\begin{tabular}{@{}lcccl@{}}
\toprule
\textbf{Method} & $\boldsymbol{\gamma}$ & $\boldsymbol{\alpha}$ &
$\boldsymbol{\beta}$ & \textbf{Auxiliary loss} \\
\midrule
Student & 1.0 & 0   & --   & -- \\
KD      & 0.1 & 0.9 & --   & -- \\
FitNet  & 1.0 & 0   & 100  & Hint regression \\
SPKD    & 1.0 & 0   & 3000 & Similarity preservation \\
TopKD   & 0.1 & 0.9 & 4.0  & Topology distillation \\
\acronym{} $k_{10}$ & 0.1 & 0.9 & 1.0 & \acronym{} matching \\
\acronym{} $k_{20}$ & 0.1 & 0.9 & 0.8 & \acronym{} matching \\
\bottomrule
\end{tabular}
}
\end{minipage}%
\hfill
\begin{minipage}[t]{0.41\textwidth}

\caption{Validation-selected auxiliary-loss weights $\beta$ for \acronym{}.}
\label{table_disti_aux}
\centering
\resizebox{\linewidth}{!}{%
\begin{tabular}{lcccc}
\toprule
& $k_2$ & $k_5$ & $k_{10}$ & $k_{20}$ \\
\midrule
Full image         & 30.0 & 20.0 & 10.0 & 1.0 \\
4 crops   & 5.0  & 2.0  & 0.2  & 0.9 \\
9 crops   & 6.0  & 1.3  & 1.0  & 0.8 \\
16 crops   & 0.9  & 1.0  & 2.0  & 2.0 \\
\bottomrule
\end{tabular}
}
\end{minipage}
\end{table}

For \acronym{}, the auxiliary-loss weight is selected using the held-out CIFAR-100 validation
split described above; the selected configuration is then retrained from scratch using the full
training set before test evaluation. We apply the same procedure to TopKD \citep{topkd}.
TopKD is not part of the CKD benchmark and its loss coefficients are tuned in the
original work for a different teacher--student configuration. We therefore retain the same
$\gamma=0.1$ and $\alpha=0.9$ CE/KD weighting as our KD baseline and select only the weight
of its topology loss on validation data. This ensures that improvements are compared under
the same underlying CE/KD objective rather than being confounded by different loss weighting.
%For TopKD, the resulting auxiliary weight is $\beta=4.0$. 
Final results are averaged over five
random seeds.

Concept Distillation \citep{conceptDistillation} leverages teacher-defined concepts to
(de)sensitize a student to selected concepts rather than transferring their organization between
representations and is therefore not directly applicable to our setting.

\myParagraph{\acronym{} construction.}
For concept transfer, we use the analytical normalized-MSE formulation of
\autoref{sec_app_mse} with least-squares probes and the continuous Dictionary Learning
coefficients as concept targets. 
%The reported transfer experiments match concept scores without pairwise saddle interactions. 
During training, the student \acronym{} is estimated
from a rolling bank of 2,048 augmented training representations, whose least-squares
sufficient statistics are updated incrementally. The first epoch initializes this bank before
\acronym{} matching is applied. Concept identities and coefficients are obtained once from
the deterministic frozen-teacher concept discovery, and the resulting teacher \acronym{}
remains a fixed reference throughout student training.

\myParagraph{Concept discovery and recoverability.}
We discover concepts from the frozen teacher penultimate representation using Dictionary
Learning with $K\in\{2,5,10,20\}$ components, non-negative sparse codes and a fixed discovery
seed. Each value of $K$ defines an independently fitted dictionary: the $K=2,5,10$ and $20$
concept banks are therefore not nested subsets of a common dictionary. This distinction is
important when evaluating generalization across concept banks in \autoref{sec_exp_transfer}.
The same frozen teacher-defined dictionaries are used when evaluating all distillation methods,
so concept recoverability always refers to an identical target concept representation.

In addition to concepts discovered from full images, we consider concepts discovered at
progressively finer spatial scales. Following the crop-based concept discovery strategy of
CRAFT \citep{craft}, neighboring crops use a stride approximately equal to $80\%$ of their
width. We construct deterministic $2\times2$, $3\times3$ and $4\times4$ grids, corresponding
respectively to 4, 9 and 16 crops per image. For the $32\times32$ CIFAR-100 images, the
resulting crop sizes are $18\times18$, $12\times12$ and $9\times9$ pixels, with strides 14, 9
and 7 pixels, respectively. Each crop is resized to $32\times32$ before extracting its teacher
representation. The dictionary is fitted on these crop representations and subsequently used
to encode the full-image teacher representations, providing concept coefficients at increasingly
local spatial scales. Together with the full-image setting, this yields four concept-discovery
scales. % for each $K\in\{2,5,10,20\}$.
The main transfer comparison in \autoref{table_distillation} uses the $3\times3$
(9-crop) concept discovery, while the other scales are used to evaluate the effect
of concept locality and generalization across independently discovered concept banks.

\iffalse
We quantify concept recoverability (CR) by measuring how accurately the continuous teacher
concept coefficients can be linearly recovered from a representation. For every concept $j$,
we fit a least-squares linear probe on the CIFAR-100 training representations and corresponding
teacher coefficients, without using test observations during fitting. Its held-out score is:

$$
    R_j^2
    =
    1-
    \frac{\sum_{i\in\mathrm{test}}(c_{ij}-\widehat{c}_{ij})^2}
         {\sum_{i\in\mathrm{test}}(c_{ij}-\overline{c}_{j,\mathrm{train}})^2},
$$

where $c_{ij}$ is the teacher coefficient and $\overline{c}_{j,\mathrm{train}}$ is its training-set
mean. We report CR $k_K$ as the mean held-out $R^2$ across the $K$ concepts, expressed
as a percentage. Thus, CR measures how much of the teacher-defined concept representation
remains linearly accessible from the student representation, independently of its task accuracy.
\fi

We quantify concept recoverability (CR) by measuring how accurately the teacher-derived concept coefficients can be linearly decoded from a model representation on unseen images. For a dictionary containing $K$ concepts, let $C\in\mathbb{R}^{N\times K}$ denote the corresponding teacher concept coefficients and $H\in\mathbb{R}^{N\times d}$ the representation whose recoverability we evaluate. Using the CIFAR-100 training set, we fit a linear least-squares decoder from $H_{\mathrm{train}}$ to $C_{\mathrm{train}}$: $\widehat C_{\mathrm{train}} = H_{\mathrm{train}} W + b$, where $W$ and $b$ are estimated exclusively from the training samples. The same frozen decoder is then applied to the test representation, $\widehat C_{\mathrm{test}} = H_{\mathrm{test}} W + b$.

For a test image $i$ and concept $j$, $\widehat c_{ij}$ therefore denotes the coefficient predicted by this linear decoder, while $c_{ij}$ denotes the corresponding coefficient obtained by encoding the teacher representation with the fixed concept dictionary.
For each concept $j$, we compute the test coefficient of determination:

$$
    R_j^2
    =
    1-
    \frac{\sum_{i\in\mathrm{test}}(c_{ij}-\widehat c_{ij})^2}
         {\sum_{i\in\mathrm{test}}(c_{ij}-\overline c_{j,\mathrm{train}})^2},
$$

where $\overline c_{j,\mathrm{train}}$ is the mean teacher coefficient of concept $j$ over the training set and therefore defines a constant predictor that uses no test information. An $R_j^2$ of $1$ indicates perfect linear recovery of the teacher coefficient, while $0$ corresponds to the performance of this training-mean predictor; negative values indicate worse generalization than the constant predictor. We finally define CR $k_K$ as the mean test $R^2$ over the $K$ concepts and report it as a percentage. 
Thus, CR measures how much of the teacher-defined concept representation remains linearly accessible from the evaluated representation on unseen data: a high CR indicates that a linear decoder fitted to predict the teacher's concept coefficients on the training set continues to recover those same concepts accurately on previously unseen images, independently of the model's task accuracy.

%%%%%%%%%%%%%%%%%%%%%%%%%%%%%%%%%%%%%%%%%%%%%%%%%%%%%%%%%%%%%%%%%%%%%%%%%%%%%%%
%%%%%%%%%%%%%%%%%%%%%%%%%%%%%%%%%%%%%%%%%%%%%%%%%%%%%%%%%%%%%%%%%%%%%%%%%%%%%%%
%%%%%%%%%%%%%%%%%%%%%%%%%%%%%%%%%%%%%%%%%%%%%%%%%%%%%%%%%%%%%%%%%%%%%%%%%%%%%%%
%%%%%%%%%%%%%%%%%%%%%%%%%%%%%%%%%%%%%%%%%%%%%%%%%%%%%%%%%%%%%%%%%%%%%%%%%%%%%%%
%%%%%%%%%%%%%%%%%%%%%%%%%%%%%%%%%%%%%%%%%%%%%%%%%%%%%%%%%%%%%%%%%%%%%%%%%%%%%%%
%%%%%%%%%%%%%%%%%%%%%%%%%%%%%%%%%%%%%%%%%%%%%%%%%%%%%%%%%%%%%%%%%%%%%%%%%%%%%%%

\section{Additional Experiments \& Results}
\label{sec_app_exp}

\subsection{Concept Disentanglement Before Erasure}
\label{sec_app_exp_disent}

% Generated with: /home/mathieu/my-venv/bin/python3 /media/mathieu/Data/Etude/Postdoc/ourPapers/conceptTopologicalRepresentation/figures/table_wga/make_table.py --siunitx --no-math --scriptsize --underline-second-best --select-by-validation-wga --aspect-ratio 1.3 --text-scale 2.6 --style-scale 2.9 --label-spacing 1.08 1.18 --label-point-force 1.35 1.70 --label-pull 0.028 0.040 --label-iterations 2500
% Requires \usepackage{siunitx}
% Optional resizing requires \usepackage{graphicx}
\newcommand{\tableTimingTabular}{%
\begin{tabular}{l *{4}{S[table-format=4.2{ {\scriptsize \textbf{\textpm 00.00}}}, detect-weight=true, table-align-text-after=true, mode=text]}}
\toprule
\textbf{Method}             & {\textbf{Waterbirds (s)}}                                                                     & {\textbf{Bias in Bios SN (s)}}                                                                & {\textbf{Dial Mention (s)}}                                                                   & {\textbf{Jigsaw (s)}}                                                                         \\
\midrule
INLP                        & 1.35 {\scriptsize \textpm 0.41}                                                               & 5.57 {\scriptsize \textpm 1.01}                                                               & 28.05 {\scriptsize \textpm 1.90}                                                              & 3.07 {\scriptsize \textpm 0.65}                                                               \\
LEACE                       & 0.03 {\scriptsize \textpm 0.06}{\llap{\underline{\phantom{0.03 {\scriptsize \textpm 0.06}}}}} & \bfseries 0.03 {\scriptsize \textbf{\textpm 0.06}}                                            & \bfseries 0.04 {\scriptsize \textbf{\textpm 0.07}}                                            & \bfseries 0.02 {\scriptsize \textbf{\textpm 0.05}}                                            \\
RLACE                       & 38.91 {\scriptsize \textpm 1.13}                                                              & 65.28 {\scriptsize \textpm 2.56}                                                              & 208.60 {\scriptsize \textpm 3.70}                                                             & 96.77 {\scriptsize \textpm 6.67}                                                              \\
SPLINCE                     & \bfseries 0.03 {\scriptsize \textbf{\textpm 0.02}}                                            & 0.15 {\scriptsize \textpm 0.20}{\llap{\underline{\phantom{0.15 {\scriptsize \textpm 0.20}}}}} & 0.30 {\scriptsize \textpm 0.33}{\llap{\underline{\phantom{0.30 {\scriptsize \textpm 0.33}}}}} & 0.10 {\scriptsize \textpm 0.03}{\llap{\underline{\phantom{0.10 {\scriptsize \textpm 0.03}}}}} \\
LEOPARD                     & 10.65 {\scriptsize \textpm 0.61}                                                              & 47.28 {\scriptsize \textpm 0.36}                                                              & 132.32 {\scriptsize \textpm 0.56}                                                             & 53.18 {\scriptsize \textpm 0.21}                                                              \\
KRaM                        & 5.39 {\scriptsize \textpm 1.30}                                                               & 36.58 {\scriptsize \textpm 4.44}                                                              & 59.58 {\scriptsize \textpm 28.59}                                                             & 45.26 {\scriptsize \textpm 5.67}                                                              \\
Obliviator                  & 1.67 {\scriptsize \textpm 0.14}                                                               & 93.14 {\scriptsize \textpm 0.29}                                                              & 8.62 {\scriptsize \textpm 0.27}                                                               & 11.95 {\scriptsize \textpm 0.32}                                                              \\
\midrule
\acronym{}-DIS + INLP       & 48.18 {\scriptsize \textpm 4.60}                                                              & 42.36 {\scriptsize \textpm 1.95}                                                              & 277.02 {\scriptsize \textpm 34.47}                                                            & 35.52 {\scriptsize \textpm 1.55}                                                              \\
\acronym{}-DIS + LEACE      & 47.33 {\scriptsize \textpm 4.36}                                                              & 41.06 {\scriptsize \textpm 2.05}                                                              & 88.46 {\scriptsize \textpm 16.05}                                                             & 32.74 {\scriptsize \textpm 1.54}                                                              \\
\acronym{}-DIS + RLACE      & 84.51 {\scriptsize \textpm 5.65}                                                              & 188.65 {\scriptsize \textpm 7.93}                                                             & 1001.34 {\scriptsize \textpm 679.06}                                                          & 113.47 {\scriptsize \textpm 2.43}                                                             \\
\acronym{}-DIS + SPLINCE    & 47.47 {\scriptsize \textpm 4.56}                                                              & 41.12 {\scriptsize \textpm 2.05}                                                              & 88.82 {\scriptsize \textpm 16.22}                                                             & 32.81 {\scriptsize \textpm 1.53}                                                              \\
\acronym{}-DIS + LEOPARD    & 57.39 {\scriptsize \textpm 4.37}                                                              & 88.70 {\scriptsize \textpm 2.02}                                                              & 226.28 {\scriptsize \textpm 20.29}                                                            & 86.53 {\scriptsize \textpm 1.52}                                                              \\
\acronym{}-DIS + KRaM       & 51.92 {\scriptsize \textpm 4.24}                                                              & 56.00 {\scriptsize \textpm 1.99}                                                              & 127.32 {\scriptsize \textpm 16.13}                                                            & 52.25 {\scriptsize \textpm 4.41}                                                              \\
\acronym{}-DIS + Obliviator & 49.01 {\scriptsize \textpm 4.49}                                                              & 53.01 {\scriptsize \textpm 1.95}                                                              & 118.78 {\scriptsize \textpm 15.91}                                                            & 54.06 {\scriptsize \textpm 1.43}                                                              \\
\acronym{}-ERA              & 4.05 {\scriptsize \textpm 0.47}                                                               & 10.42 {\scriptsize \textpm 1.14}                                                              & 142.63 {\scriptsize \textpm 1.55}                                                             & 8.24 {\scriptsize \textpm 0.48}                                                               \\
\bottomrule
\end{tabular}%
}
\NewDocumentCommand{\tableTiming}{m m m o}{%
\begin{table}[#3]
\caption{#1}
\label{#2}
\centering
\IfNoValueTF{#4}{%
\tableTimingTabular
}{%
\resizebox{#4\textwidth}{!}{\tableTimingTabular}%
}%
\end{table}%
}

\tableTiming{Execution time (seconds) for standalone erasure methods, \acronym{}-DIS followed by erasure, and direct \acronym{}-ERA.}{tab_timing}{b!}[1.0]

\myParagraph{Runtime.}
We additionally report the execution time of each intervention in \autoref{tab_timing}. As expected, using \acronym{}-DIS as a preprocessing step introduces an additional optimization cost before applying the corresponding erasure method. For lightweight methods such as LEACE and SPLINCE, the resulting runtime nevertheless remains below $90$ seconds on all datasets, while combining \acronym{}-DIS with more computationally intensive methods such as RLACE can be substantially more expensive, reaching $1001.34$ seconds on DIAL-Mention. Direct \acronym{}-ERA avoids this additional erasure stage and requires between $4.05$ and $142.63$ seconds across the four datasets. In particular, it is faster than RLACE on every dataset and faster than LEOPARD on three out of four datasets, showing that direct erasure with \acronym{} can remain computationally competitive with iterative erasure baselines.

\myParagraph{Test accuracy.} In addition of the test WGA shown in \autoref{table_wga},
% of \autoref{sec_exp_dis}, 
test accuracy is shown in \autoref{table_acc}. 
The test task accuracy provides a complementary view to WGA. On Waterbirds and Bias in Bios, the gains from \acronym{}-DIS also translate into substantially higher overall accuracy, whereas on DIAL-Mention and Jigsaw the accuracy remains comparable or can slightly decrease despite consistent improvements in WGA. This difference highlights that aggregate accuracy can mask group-specific degradation, while WGA directly measures utility on the most affected group. Notably, \acronym{}-ERA achieves higher task accuracy than every standalone erasure baseline on all four datasets and obtains the highest accuracy among all erasure methods on Waterbirds, DIAL-Mention and Jigsaw.

%%%%%%%%%%%%%%%%%%%%%%%%%%%%%%%%%%%%%%%
%%%%%%%%%%%%%%%%%%%%%%%%%%%%%%%%%%%%%%%

\subsubsection{Beyond-linear Concept Leakage}
\label{sec_app_exp_disent_leak}

\begin{samepage}
%\afterpage{
% Generated with: /home/mathieu/my-venv/bin/python3 /media/mathieu/Data/Etude/Postdoc/ourPapers/conceptTopologicalRepresentation/figures/table_wga/make_table.py --siunitx --no-math --scriptsize --underline-second-best --select-by-validation-wga --aspect-ratio 1.3 --text-scale 2.6 --style-scale 2.9 --label-spacing 1.08 1.18 --label-point-force 1.35 1.70 --label-pull 0.028 0.040 --label-iterations 2500
% Requires \usepackage{siunitx}
% Optional resizing requires \usepackage{graphicx}
\newcommand{\tableACCTabular}{%
\begin{tabular}{l *{4}{S[table-format=2.2{ {\scriptsize \textbf{\textpm 00.00}}}, detect-weight=true, table-align-text-after=true, mode=text]}}
\toprule
\textbf{Method}             & {\textbf{Waterbirds}}                                                                           & {\textbf{Bias in Bios SN}}                                                                      & {\textbf{Dial Mention}}                                                                         & {\textbf{Jigsaw}}                                                                               \\
\midrule
\textit{Raw}                & {\textit{88.95}}                                                                                & {\textit{96.76}}                                                                                & {\textit{64.91}}                                                                                & {\textit{71.56}}                                                                                \\
\midrule
INLP                        & 85.17 {\scriptsize \textpm 0.00}                                                                & 71.66 {\scriptsize \textpm 0.00}                                                                & 63.49 {\scriptsize \textpm 0.00}                                                                & 65.82 {\scriptsize \textpm 0.00}                                                                \\
LEACE                       & 85.67 {\scriptsize \textpm 0.00}                                                                & 76.55 {\scriptsize \textpm 0.00}                                                                & 65.06 {\scriptsize \textpm 0.00}{\llap{\underline{\phantom{65.06 {\scriptsize \textpm 0.00}}}}} & 67.58 {\scriptsize \textpm 0.00}{\llap{\underline{\phantom{67.58 {\scriptsize \textpm 0.00}}}}} \\
RLACE                       & 85.64 {\scriptsize \textpm 0.25}                                                                & 76.62 {\scriptsize \textpm 0.33}                                                                & 64.78 {\scriptsize \textpm 0.21}                                                                & 67.30 {\scriptsize \textpm 0.17}                                                                \\
SPLINCE                     & 85.66 {\scriptsize \textpm 0.00}                                                                & 76.50 {\scriptsize \textpm 0.00}                                                                & 65.06 {\scriptsize \textpm 0.00}{\llap{\underline{\phantom{65.06 {\scriptsize \textpm 0.00}}}}} & 67.51 {\scriptsize \textpm 0.00}                                                                \\
LEOPARD                     & 85.66 {\scriptsize \textpm 0.00}                                                                & 76.55 {\scriptsize \textpm 0.00}                                                                & 64.85 {\scriptsize \textpm 0.02}                                                                & 67.58 {\scriptsize \textpm 0.00}{\llap{\underline{\phantom{67.58 {\scriptsize \textpm 0.00}}}}} \\
KRaM                        & 78.44 {\scriptsize \textpm 0.77}                                                                & 68.23 {\scriptsize \textpm 0.93}                                                                & 59.00 {\scriptsize \textpm 0.40}                                                                & 62.20 {\scriptsize \textpm 0.79}                                                                \\
Obliviator                  & 84.70 {\scriptsize \textpm 0.46}                                                                & 75.08 {\scriptsize \textpm 0.63}                                                                & 62.87 {\scriptsize \textpm 0.61}                                                                & 64.39 {\scriptsize \textpm 0.63}                                                                \\
\midrule
\acronym{}-DIS + INLP       & 88.57 {\scriptsize \textpm 1.32}                                                                & 87.18 {\scriptsize \textpm 2.07}                                                                & 62.72 {\scriptsize \textpm 0.81}                                                                & 59.31 {\scriptsize \textpm 1.92}                                                                \\
\acronym{}-DIS + LEACE      & 88.47 {\scriptsize \textpm 0.55}                                                                & \bfseries 94.68 {\scriptsize \textbf{\textpm 0.21}}                                             & 64.99 {\scriptsize \textpm 0.28}                                                                & 64.98 {\scriptsize \textpm 0.85}                                                                \\
\acronym{}-DIS + RLACE      & 88.98 {\scriptsize \textpm 0.56}{\llap{\underline{\phantom{88.98 {\scriptsize \textpm 0.56}}}}} & 94.59 {\scriptsize \textpm 0.29}                                                                & 63.88 {\scriptsize \textpm 0.61}                                                                & 65.19 {\scriptsize \textpm 0.69}                                                                \\
\acronym{}-DIS + SPLINCE    & 88.64 {\scriptsize \textpm 0.98}                                                                & 94.60 {\scriptsize \textpm 0.27}{\llap{\underline{\phantom{94.60 {\scriptsize \textpm 0.27}}}}} & 64.99 {\scriptsize \textpm 0.31}                                                                & 64.89 {\scriptsize \textpm 0.84}                                                                \\
\acronym{}-DIS + LEOPARD    & 88.94 {\scriptsize \textpm 0.53}                                                                & 94.60 {\scriptsize \textpm 0.31}                                                                & 64.98 {\scriptsize \textpm 0.36}                                                                & 65.01 {\scriptsize \textpm 0.79}                                                                \\
\acronym{}-DIS + KRaM       & 85.14 {\scriptsize \textpm 1.82}                                                                & 74.41 {\scriptsize \textpm 5.91}                                                                & 61.19 {\scriptsize \textpm 0.85}                                                                & 62.42 {\scriptsize \textpm 1.21}                                                                \\
\acronym{}-DIS + Obliviator & 88.13 {\scriptsize \textpm 1.50}                                                                & 89.06 {\scriptsize \textpm 1.39}                                                                & 64.90 {\scriptsize \textpm 0.56}                                                                & 64.78 {\scriptsize \textpm 0.66}                                                                \\
\acronym{}-ERA              & \bfseries 89.69 {\scriptsize \textbf{\textpm 0.38}}                                             & 92.69 {\scriptsize \textpm 0.30}                                                                & \bfseries 66.44 {\scriptsize \textbf{\textpm 0.31}}                                             & \bfseries 67.66 {\scriptsize \textbf{\textpm 0.35}}                                             \\
\bottomrule
\end{tabular}%
}
\NewDocumentCommand{\tableACC}{m m m o}{%
\begin{table}[#3]
\caption{#1}
\label{#2}
\centering
\IfNoValueTF{#4}{%
\tableACCTabular
}{%
\resizebox{#4\textwidth}{!}{\tableACCTabular}%
}%
\end{table}%
}

\tableACC{Test task accuracy with an adjusted balanced linear concept leakage $< 0.1$.}{table_acc}{p}[0.85]

% Generated with: /home/mathieu/my-venv/bin/python3 /media/mathieu/Data/Etude/Postdoc/ourPapers/conceptTopologicalRepresentation/figures/table_wga/make_table.py --siunitx --no-math --scriptsize --underline-second-best --select-by-validation-wga --aspect-ratio 1.3 --text-scale 2.6 --style-scale 2.9 --label-spacing 1.08 1.18 --label-point-force 1.35 1.70 --label-pull 0.028 0.040 --label-iterations 2500
% Requires \usepackage{siunitx}
% Optional resizing requires \usepackage{graphicx}
\newcommand{\tableOCLTabular}{%
\begin{tabular}{l *{4}{S[table-format=2.2{ {\scriptsize \textbf{\textpm 00.00}}}, detect-weight=true, table-align-text-after=true, mode=text]}}
\toprule
\textbf{Method}             & {\textbf{Waterbirds}}                                                                           & {\textbf{Bias in Bios SN}}                                                                      & {\textbf{Dial Mention}}                                                                         & {\textbf{Jigsaw}}                                                                             \\
\midrule
\textit{Raw}                & {\textit{85.71}}                                                                                & {\textit{98.62}}                                                                                & {\textit{61.78}}                                                                                & {\textit{51.88}}                                                                              \\
\midrule
INLP                        & 60.58 {\scriptsize \textpm 0.00}                                                                & 77.59 {\scriptsize \textpm 0.00}                                                                & 34.47 {\scriptsize \textpm 0.00}                                                                & 29.99 {\scriptsize \textpm 0.00}                                                              \\
LEACE                       & 72.35 {\scriptsize \textpm 0.00}                                                                & 95.09 {\scriptsize \textpm 0.00}                                                                & 38.13 {\scriptsize \textpm 0.00}                                                                & 33.37 {\scriptsize \textpm 0.00}                                                              \\
RLACE                       & 72.37 {\scriptsize \textpm 0.26}                                                                & 94.99 {\scriptsize \textpm 0.10}                                                                & 38.30 {\scriptsize \textpm 0.20}                                                                & 33.55 {\scriptsize \textpm 0.49}                                                              \\
SPLINCE                     & 72.14 {\scriptsize \textpm 0.00}                                                                & 94.56 {\scriptsize \textpm 0.00}                                                                & 38.28 {\scriptsize \textpm 0.00}                                                                & 33.31 {\scriptsize \textpm 0.00}                                                              \\
LEOPARD                     & 72.59 {\scriptsize \textpm 0.00}                                                                & 94.99 {\scriptsize \textpm 0.00}                                                                & 35.84 {\scriptsize \textpm 0.26}                                                                & 33.34 {\scriptsize \textpm 0.00}                                                              \\
KRaM                        & \bfseries 44.58 {\scriptsize \textbf{\textpm 2.70}}                                             & \bfseries 58.73 {\scriptsize \textbf{\textpm 2.74}}                                             & 23.60 {\scriptsize \textpm 1.27}{\llap{\underline{\phantom{23.60 {\scriptsize \textpm 1.27}}}}} & 22.85 {\scriptsize \textpm 1.56}                                                              \\
Obliviator                  & 63.26 {\scriptsize \textpm 0.96}                                                                & 86.43 {\scriptsize \textpm 1.78}                                                                & 30.83 {\scriptsize \textpm 1.10}                                                                & 24.29 {\scriptsize \textpm 1.54}                                                              \\
\midrule
\acronym{}-DIS + INLP       & 56.41 {\scriptsize \textpm 1.84}                                                                & 89.23 {\scriptsize \textpm 1.32}                                                                & 28.31 {\scriptsize \textpm 1.58}                                                                & \bfseries 5.49 {\scriptsize \textbf{\textpm 2.74}}                                            \\
\acronym{}-DIS + LEACE      & 60.76 {\scriptsize \textpm 3.15}                                                                & 95.77 {\scriptsize \textpm 0.65}                                                                & 30.44 {\scriptsize \textpm 1.17}                                                                & 10.24 {\scriptsize \textpm 1.25}                                                              \\
\acronym{}-DIS + RLACE      & 59.56 {\scriptsize \textpm 1.39}                                                                & 95.79 {\scriptsize \textpm 0.67}                                                                & 28.41 {\scriptsize \textpm 1.80}                                                                & 10.34 {\scriptsize \textpm 1.37}                                                              \\
\acronym{}-DIS + SPLINCE    & 61.01 {\scriptsize \textpm 1.81}                                                                & 95.97 {\scriptsize \textpm 0.47}                                                                & 30.18 {\scriptsize \textpm 1.41}                                                                & 9.96 {\scriptsize \textpm 1.23}{\llap{\underline{\phantom{9.96 {\scriptsize \textpm 1.23}}}}} \\
\acronym{}-DIS + LEOPARD    & 59.03 {\scriptsize \textpm 1.36}                                                                & 95.74 {\scriptsize \textpm 0.65}                                                                & 29.40 {\scriptsize \textpm 1.24}                                                                & 10.41 {\scriptsize \textpm 1.37}                                                              \\
\acronym{}-DIS + KRaM       & 53.00 {\scriptsize \textpm 3.82}                                                                & 65.52 {\scriptsize \textpm 5.20}{\llap{\underline{\phantom{65.52 {\scriptsize \textpm 5.20}}}}} & \bfseries 21.98 {\scriptsize \textbf{\textpm 1.76}}                                             & 11.35 {\scriptsize \textpm 2.34}                                                              \\
\acronym{}-DIS + Obliviator & 59.62 {\scriptsize \textpm 1.26}                                                                & 96.53 {\scriptsize \textpm 0.37}                                                                & 28.85 {\scriptsize \textpm 1.23}                                                                & 12.87 {\scriptsize \textpm 1.78}                                                              \\
\acronym{}-ERA              & 51.93 {\scriptsize \textpm 1.78}{\llap{\underline{\phantom{51.93 {\scriptsize \textpm 1.78}}}}} & 91.82 {\scriptsize \textpm 0.89}                                                                & 34.10 {\scriptsize \textpm 1.65}                                                                & 24.82 {\scriptsize \textpm 1.20}                                                              \\
\bottomrule
\end{tabular}%
}
\NewDocumentCommand{\tableOCL}{m m m o}{%
\begin{table}[#3]
\caption{#1}
\label{#2}
\centering
\IfNoValueTF{#4}{%
\tableOCLTabular
}{%
\resizebox{#4\textwidth}{!}{\tableOCLTabular}%
}%
\end{table}%
}

\tableOCL{Adjusted balanced oracle-conditioned test concept leakage.}{table_ocl}{p}[0.85]

% Generated with: /home/mathieu/my-venv/bin/python3 /media/mathieu/Data/Etude/Postdoc/ourPapers/conceptTopologicalRepresentation/figures/table_wga/make_table.py --siunitx --no-math --scriptsize --underline-second-best --select-by-validation-wga --aspect-ratio 1.3 --text-scale 2.6 --style-scale 2.9 --label-spacing 1.08 1.18 --label-point-force 1.35 1.70 --label-pull 0.028 0.040 --label-iterations 2500
% Requires \usepackage{siunitx}
% Optional resizing requires \usepackage{graphicx}
\newcommand{\tableNLLTabular}{%
\begin{tabular}{l *{4}{S[table-format=2.2{ {\scriptsize \textbf{\textpm 00.00}}}, detect-weight=true, table-align-text-after=true, mode=text]}}
\toprule
\textbf{Method}             & {\textbf{Waterbirds}}                                                                           & {\textbf{Bias in Bios SN}}                                                                      & {\textbf{Dial Mention}}                                                                         & {\textbf{Jigsaw}}                                                                               \\
\midrule
\textit{Raw}                & {\textit{79.83}}                                                                                & {\textit{98.89}}                                                                                & {\textit{55.99}}                                                                                & {\textit{42.77}}                                                                                \\
\midrule
INLP                        & 71.01 {\scriptsize \textpm 1.60}                                                                & 93.64 {\scriptsize \textpm 0.21}                                                                & 52.99 {\scriptsize \textpm 0.65}                                                                & 31.37 {\scriptsize \textpm 0.95}                                                                \\
LEACE                       & 74.60 {\scriptsize \textpm 0.92}                                                                & 98.05 {\scriptsize \textpm 0.21}                                                                & 55.14 {\scriptsize \textpm 0.48}                                                                & 35.95 {\scriptsize \textpm 0.91}                                                                \\
RLACE                       & 74.54 {\scriptsize \textpm 1.79}                                                                & 98.01 {\scriptsize \textpm 0.27}                                                                & 54.96 {\scriptsize \textpm 0.58}                                                                & 36.44 {\scriptsize \textpm 1.11}                                                                \\
SPLINCE                     & 75.14 {\scriptsize \textpm 1.42}                                                                & 97.45 {\scriptsize \textpm 0.26}                                                                & 56.18 {\scriptsize \textpm 0.45}                                                                & 36.91 {\scriptsize \textpm 1.03}                                                                \\
LEOPARD                     & 74.07 {\scriptsize \textpm 1.06}                                                                & 98.02 {\scriptsize \textpm 0.18}                                                                & 53.45 {\scriptsize \textpm 0.47}                                                                & 35.44 {\scriptsize \textpm 1.24}                                                                \\
KRaM                        & \bfseries 40.71 {\scriptsize \textbf{\textpm 4.02}}                                             & 76.64 {\scriptsize \textpm 2.24}{\llap{\underline{\phantom{76.64 {\scriptsize \textpm 2.24}}}}} & \bfseries 36.37 {\scriptsize \textbf{\textpm 1.30}}                                             & \bfseries 11.37 {\scriptsize \textbf{\textpm 2.96}}                                             \\
Obliviator                  & 68.36 {\scriptsize \textpm 0.96}                                                                & 93.38 {\scriptsize \textpm 0.52}                                                                & 50.00 {\scriptsize \textpm 1.29}                                                                & 21.47 {\scriptsize \textpm 1.35}                                                                \\
\midrule
\acronym{}-DIS + INLP       & 76.84 {\scriptsize \textpm 1.18}                                                                & 98.30 {\scriptsize \textpm 0.14}                                                                & 55.07 {\scriptsize \textpm 0.74}                                                                & 32.62 {\scriptsize \textpm 1.67}                                                                \\
\acronym{}-DIS + LEACE      & 78.63 {\scriptsize \textpm 1.86}                                                                & 98.59 {\scriptsize \textpm 0.12}                                                                & 56.49 {\scriptsize \textpm 0.61}                                                                & 37.83 {\scriptsize \textpm 1.90}                                                                \\
\acronym{}-DIS + RLACE      & 78.35 {\scriptsize \textpm 1.46}                                                                & 98.55 {\scriptsize \textpm 0.16}                                                                & 55.64 {\scriptsize \textpm 0.73}                                                                & 38.20 {\scriptsize \textpm 1.11}                                                                \\
\acronym{}-DIS + SPLINCE    & 74.28 {\scriptsize \textpm 3.48}                                                                & 98.60 {\scriptsize \textpm 0.17}                                                                & 57.29 {\scriptsize \textpm 0.63}                                                                & 37.45 {\scriptsize \textpm 1.53}                                                                \\
\acronym{}-DIS + LEOPARD    & 78.64 {\scriptsize \textpm 1.46}                                                                & 98.62 {\scriptsize \textpm 0.16}                                                                & 56.15 {\scriptsize \textpm 0.80}                                                                & 38.15 {\scriptsize \textpm 1.23}                                                                \\
\acronym{}-DIS + KRaM       & 63.46 {\scriptsize \textpm 2.05}{\llap{\underline{\phantom{63.46 {\scriptsize \textpm 2.05}}}}} & \bfseries 76.44 {\scriptsize \textbf{\textpm 3.92}}                                             & 37.14 {\scriptsize \textpm 1.22}{\llap{\underline{\phantom{37.14 {\scriptsize \textpm 1.22}}}}} & 15.71 {\scriptsize \textpm 2.20}{\llap{\underline{\phantom{15.71 {\scriptsize \textpm 2.20}}}}} \\
\acronym{}-DIS + Obliviator & 75.02 {\scriptsize \textpm 2.08}                                                                & 98.48 {\scriptsize \textpm 0.24}                                                                & 55.06 {\scriptsize \textpm 0.53}                                                                & 33.59 {\scriptsize \textpm 1.66}                                                                \\
\acronym{}-ERA              & 65.75 {\scriptsize \textpm 2.14}                                                                & 97.75 {\scriptsize \textpm 0.22}                                                                & 53.02 {\scriptsize \textpm 0.78}                                                                & 28.73 {\scriptsize \textpm 1.24}                                                                \\
\bottomrule
\end{tabular}%
}
\NewDocumentCommand{\tableNLL}{m m m o}{%
\begin{table}[#3]
\caption{#1}
\label{#2}
\centering
\IfNoValueTF{#4}{%
\tableNLLTabular
}{%
\resizebox{#4\textwidth}{!}{\tableNLLTabular}%
}%
\end{table}%
}

\tableNLL{Adjusted balanced non-linear test concept leakage.}{table_nll}{p}[0.85]
%}
\end{samepage}

The experiments of \autoref{sec_exp_dis} use an adjusted balanced linear leakage threshold measured through a linear probe trained with a convex loss.
% This criterion follows the notion of linear guardedness used by prior concept-erasure methods. 
% In particular, LEACE establishes that equality of class centroids, equivalently zero feature--label covariance, is equivalent to the optimality of the constant predictor for linear predictors under losses convex in the prediction \citep{leace}. 
This follows the linear-guardedness perspective of prior concept-erasure work, for instance, LEACE guarantees that, after erasure, no linear predictor can improve over the optimal constant predictor under convex losses \citep{leace}. 
This criterion is therefore broader than a particular choice of logistic regression, but does not imply that the edited representation is statistically independent of the concept or guarded against arbitrary non-linear predictors.
Indeed, direct linear-guardedness need not imply that a downstream model cannot infer the concept indirectly through other preserved information \citep{loglinear_guardedness}.
This distinction is particularly relevant when task and concept labels are correlated, since preserving task-predictive information may necessarily retain some predictability of the concept. 

To assess concept leakage beyond the criterion used for model selection, we therefore report two stronger diagnostics: \emph{oracle-conditioned leakage}, which fits separate balanced linear concept probes within each
task class, and \emph{non-linear leakage}, measured with a non-linear concept probe. 

For the \emph{oracle-conditioned leakage}, we fit two separate linear concept probes, one on samples with \(y=0\) and one on samples with \(y=1\). Each probe is an \(\ell_2\)-regularized logistic regression trained with balanced concept-class weights within the corresponding task train set; its regularization strength is selected on the matching validation set by balanced accuracy. At evaluation time, each sample is routed to the probe corresponding to its ground-truth task label \(y\), and leakage is reported as adjusted balanced accuracy, \(2\,\mathrm{BalAcc}-1\). This asks whether \(c\) remains linearly predictable once \(y\) is held fixed, rather than allowing predictability of \(y\) itself to explain concept leakage.

For \emph{non-linear leakage}, we fit an unweighted ReLU MLP concept probe on the edited representation. Inputs are standardized using training statistics, and the probe is optimized with cross-entropy and Adam. We select the architecture among one, two, or three hidden layers of width \(128\), together with the learning rate and training epoch, according to validation accuracy,
%; the learning-rate search starts from \(10^{-3}\) and explores neighboring values on our standard positive-float lattice. 
the learning rate is searched around \(10^{-3}\) on the \(\{1,2,5\}\times 10^k\) lattice.
We train for at most \(200\) epochs with batch size \(256\). The selected probe is then evaluated on the test set using the same adjusted balanced leakage metric.

The oracle-conditioned results (\autoref{table_ocl}) show that low global linear leakage does not generally imply low leakage after conditioning on the task. 
Nevertheless, \acronym{}-DIS often reduces this stronger conditional leakage relative to the corresponding erasure baseline, with particularly large reductions on Jigsaw and consistent improvements on Waterbirds and Dial Mention. Bias in Bios remains difficult, most methods retain high conditional leakage despite satisfying the linear leakage constraint.

Non-linear leakage (\autoref{table_nll}) remains substantial for many methods and is often not reduced by \mbox{\acronym{}-DIS}. 
This confirms that our main results should be interpreted as improvements in task preservation at matched \emph{linear} concept leakage, rather than as guarantees of complete concept removal. 
KRaM achieve lower non-linear leakage on several datasets, but at a different utility-erasure trade-off, indeed it often has lower wga \autoref{table_wga} and test accuracy \autoref{table_acc} than the other methods.

%%%%%%%%%%%%%%%%%%%%%%%%%%%%%%%%%%%%%%%
%%%%%%%%%%%%%%%%%%%%%%%%%%%%%%%%%%%%%%%

\subsubsection{Utility Preservation Across Erasure Levels and Data Regimes}
\label{sec_app_exp_disent_tradeoff}

\begin{figure}[b!]
    \centering
    \setlength{\tabcolsep}{3pt}
    \begin{tabular}{@{}ccc@{}}

        % Row 1
        \stackinset{l}{16pt}{t}{0pt}{\scriptsize (a)}{%
            \includegraphics[width=0.39\textwidth]{figures/erasure_utility_tradeoff/erasure_utility_tradeoff_ranges_waterbirds.png}%
        }
        &
        \stackinset{l}{16pt}{t}{0pt}{\scriptsize (b)}{%
            \includegraphics[width=0.39\textwidth]{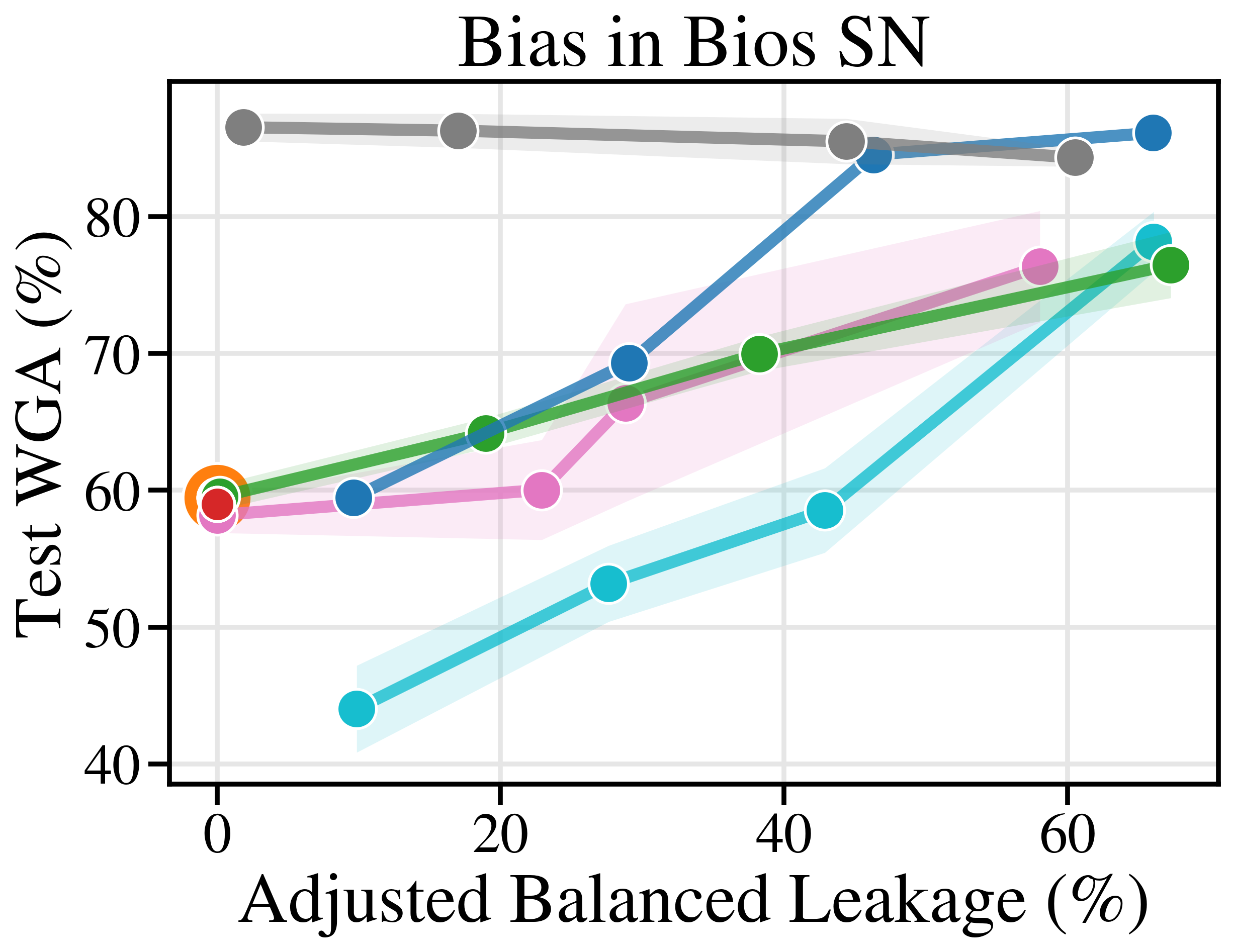}%
        }
        &
        \makebox[0.12\textwidth]{} \\[2pt]

        % Row 2
        \stackinset{l}{16pt}{t}{0pt}{\scriptsize (c)}{%
            \includegraphics[width=0.39\textwidth]{figures/spurious_and_imbalance/spurious_and_imbalance_dial_mention_li.png}%
        }
        &
        \stackinset{l}{16pt}{t}{0pt}{\scriptsize (d)}{%
            \includegraphics[width=0.39\textwidth]{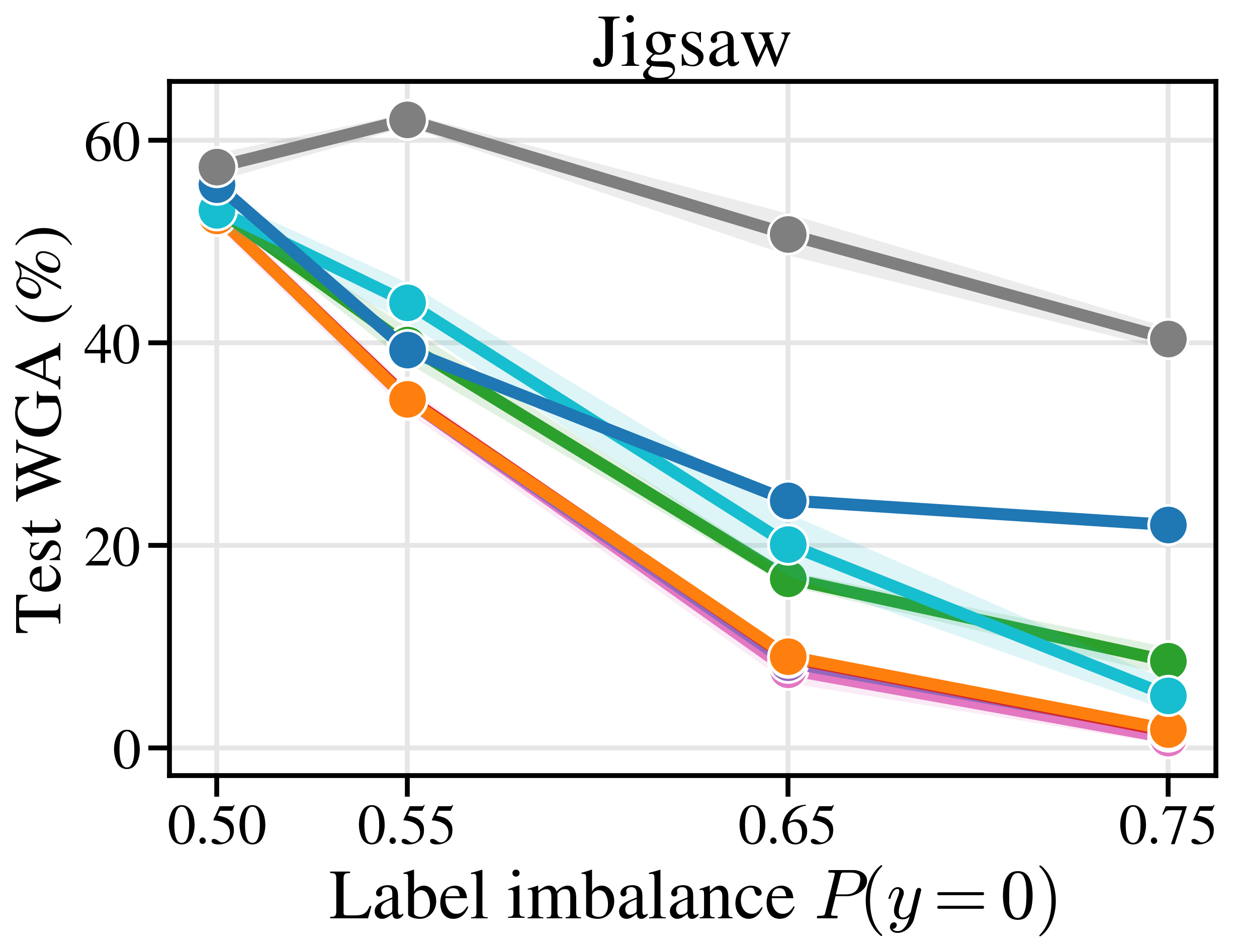}%
        }
        &
        \raisebox{25pt}[0pt][0pt]{%
            \includegraphics[width=0.12\textwidth]{figures/spurious_and_imbalance/spurious_and_imbalance_legend.png}%
        } \\[2pt]

        % Row 3
        \stackinset{l}{16pt}{t}{0pt}{\scriptsize (e)}{%
            \includegraphics[width=0.39\textwidth]{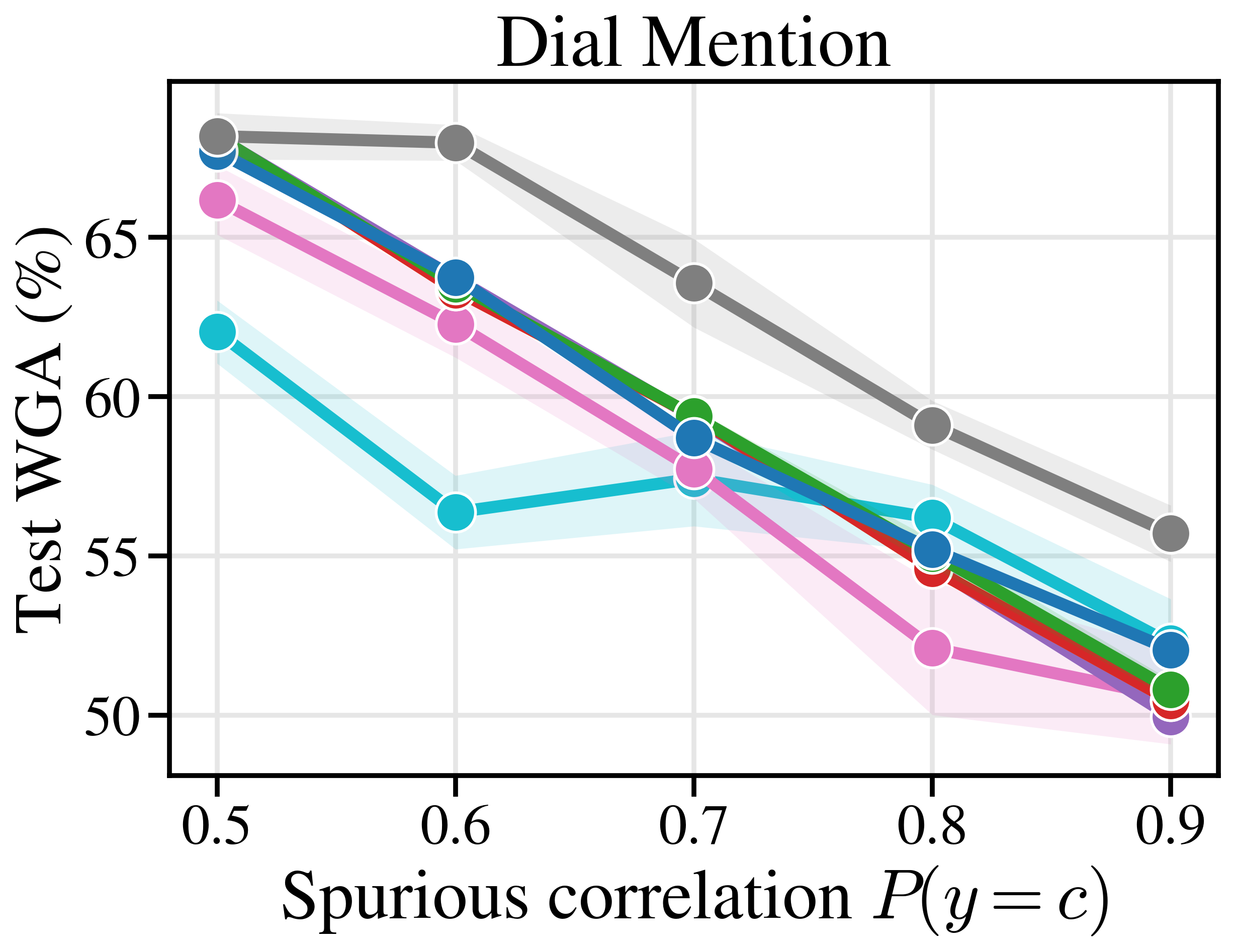}%
        }
        &
        \stackinset{l}{16pt}{t}{0pt}{\scriptsize (f)}{%
            \includegraphics[width=0.39\textwidth]{figures/spurious_and_imbalance/spurious_and_imbalance_jigsaw_sc.png}%
        }
        &
        \makebox[0.12\textwidth]{}

    \end{tabular}

    \caption{
Extended erasure-utility and data-regime analysis.
Erasure-utility trade-off, showing test WGA against adjusted balanced linear leakage,
on Waterbirds (a) and Bias in Bios (b); impact of label imbalance on DIAL-Mention (c)
and Jigsaw (d); and of spurious correlation on DIAL-Mention (e) and Jigsaw (f).
Panels (a), (c) and (f) correspond to the main-text results for direct comparison.
}
    \label{fig_app_erasure_utility_tradeoff}
\end{figure}

\autoref{fig_app_erasure_utility_tradeoff} complements the analyses of
\autoref{fig_tradeoff} with the corresponding experiments on other
dataset and data-regime combinations. Panels (a), (c) and (f) reproduce the
main-text results for direct comparison; we discuss the complementary results below.

On Bias in Bios (\autoref{fig_app_erasure_utility_tradeoff}b), \acronym{} exhibits a
particularly favorable trade-off under strong erasure, reaching $86.52$ WGA at only
$1.88\%$ adjusted balanced linear leakage, compared with $59.53$ for the best baseline
under the strong-erasure constraint. Its WGA then remains nearly constant as the
constraint is relaxed, reaching $84.31$ at $60.52\%$ leakage. In contrast, several
baselines recover task utility primarily by retaining substantially more concept
information; for instance, INLP approaches \acronym{} only at approximately
$46\%$ leakage ($84.52$ versus $85.50$ WGA).

The Jigsaw label-imbalance experiment (\autoref{fig_app_erasure_utility_tradeoff}d)
shows a similar robustness advantage. \acronym{} achieves the highest WGA at every
imbalance level, with its margin over the best baseline increasing from $1.73$ points
at $P(y{=}0)=0.50$ to $18.04$ at $0.55$ and $26.29$ at $0.65$. Even at
$P(y{=}0)=0.75$, \acronym{} retains $40.42$ WGA compared with $22.03$ for the
strongest baseline, indicating substantially better utility preservation as the
task-label distribution becomes increasingly imbalanced.

Finally, on DIAL-Mention (\autoref{fig_app_erasure_utility_tradeoff}e), \acronym{}
is essentially tied with the strongest baseline when task and concept are independent
at $P(y{=}c)=0.50$ ($68.18$ versus $68.17$ WGA), but obtains the highest WGA at
every stronger spurious correlation. Its advantage ranges from $2.92$ to $4.23$
points for $P(y{=}c)\in[0.6,0.9]$, complementing the analogous trend observed on
Jigsaw in the main text.

Together, these additional experiments support the same behavior observed in the
main analysis: \acronym{} remains consistently competitive with the strongest
baselines across the explored settings, while its utility-preservation advantage
becomes most pronounced when strong erasure, label imbalance or spurious dependence
make task and concept information harder to separate.

%%%%%%%%%%%%%%%%%%%%%%%%%%%%%%%%%%%%%%%
%%%%%%%%%%%%%%%%%%%%%%%%%%%%%%%%%%%%%%%

\subsubsection{Representation Similarity After Intervention}
\label{sec_app_exp_disent_cka}

\begin{figure}[b!]
    \centering
    \setlength{\tabcolsep}{3pt}
    \begin{tabular}{@{}cc@{}}

        % Row 1
        \stackinset{l}{16pt}{t}{0pt}{\scriptsize (a)}{%
            \includegraphics[width=0.49\textwidth]{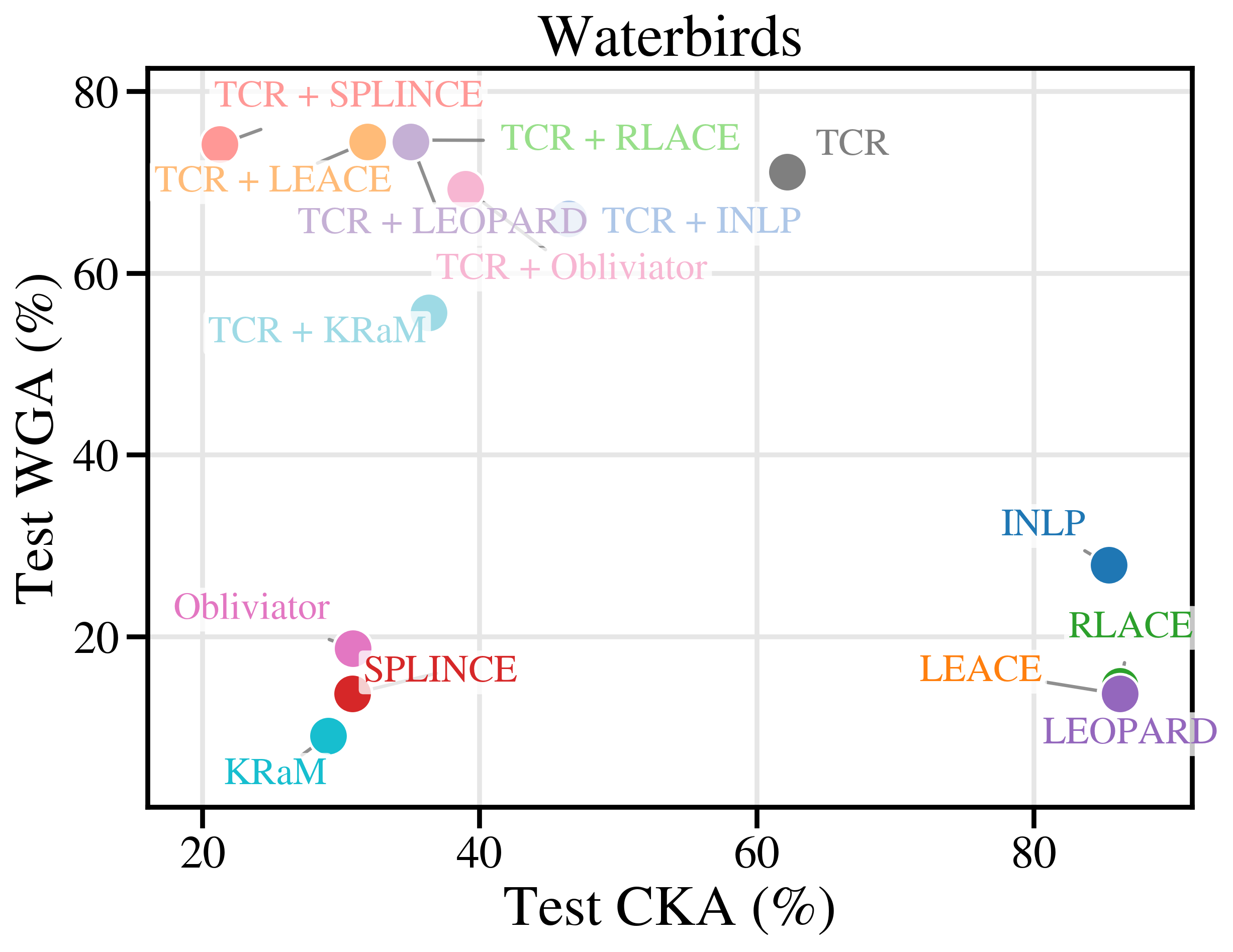}%
        }
        &
        \stackinset{l}{16pt}{t}{0pt}{\scriptsize (b)}{%
            \includegraphics[width=0.49\textwidth]{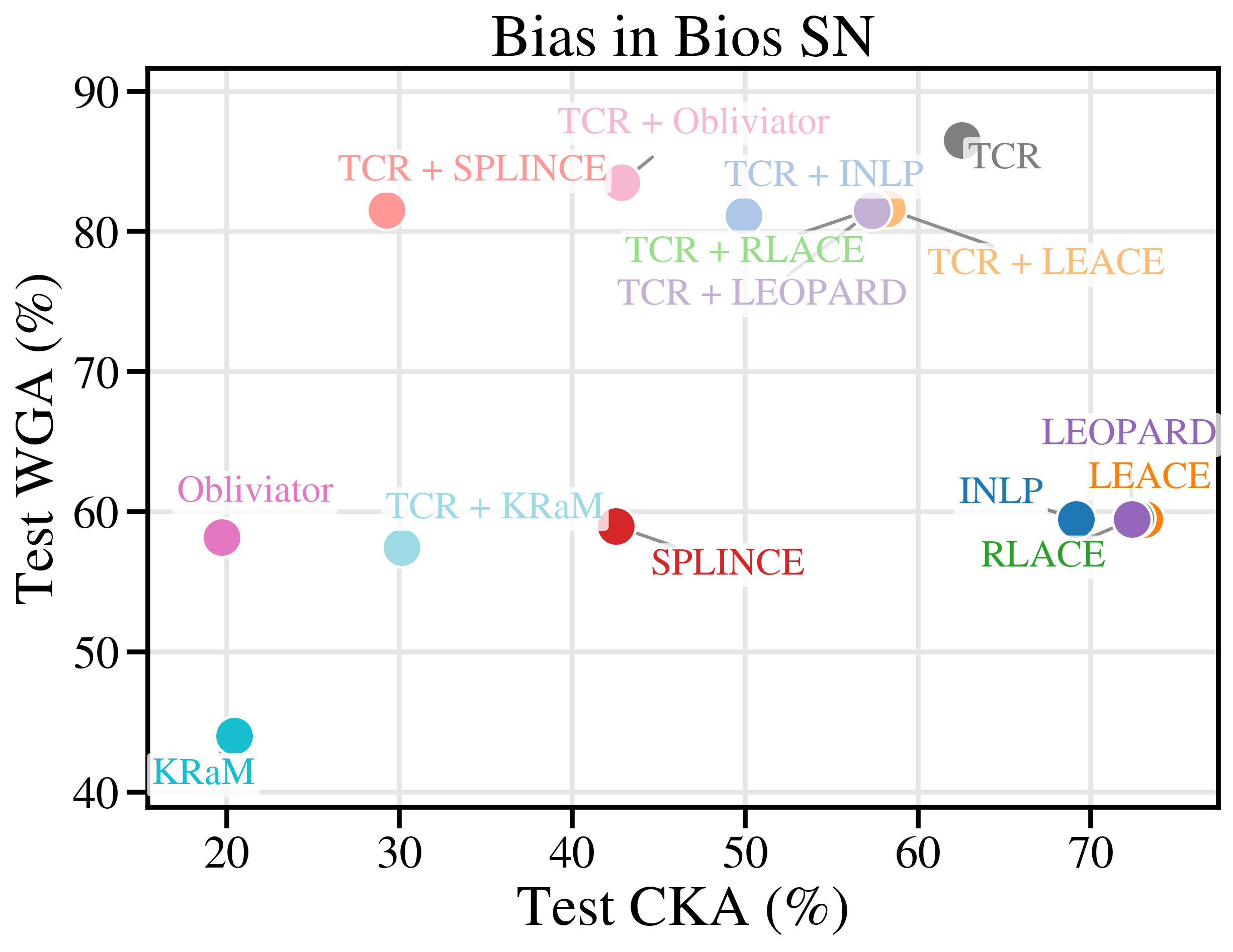}%
        } \\[2pt]

        % Row 2
        \stackinset{l}{16pt}{t}{0pt}{\scriptsize (c)}{%
            \includegraphics[width=0.49\textwidth]{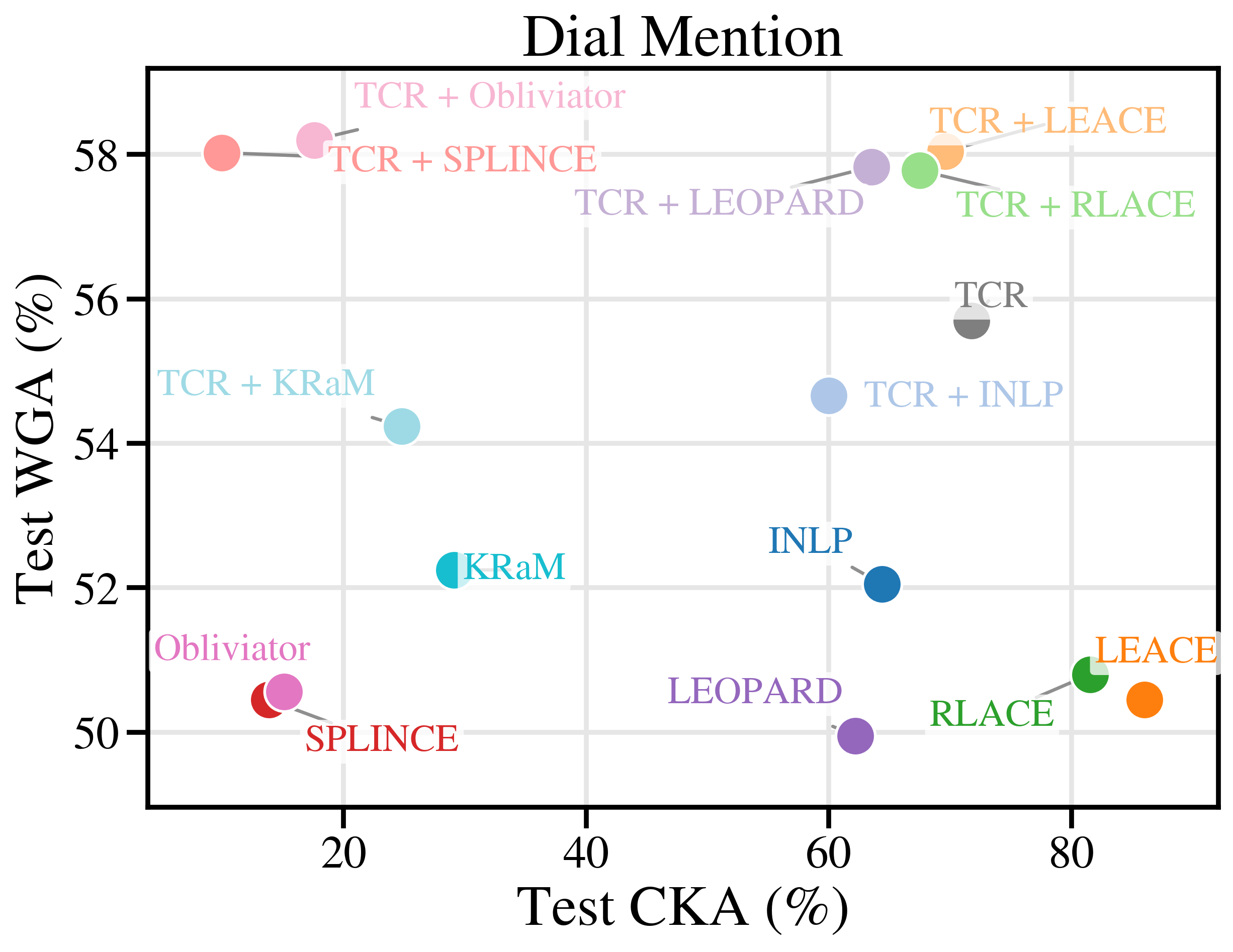}%
        }
        &
        \stackinset{l}{16pt}{t}{0pt}{\scriptsize (d)}{%
            \includegraphics[width=0.49\textwidth]{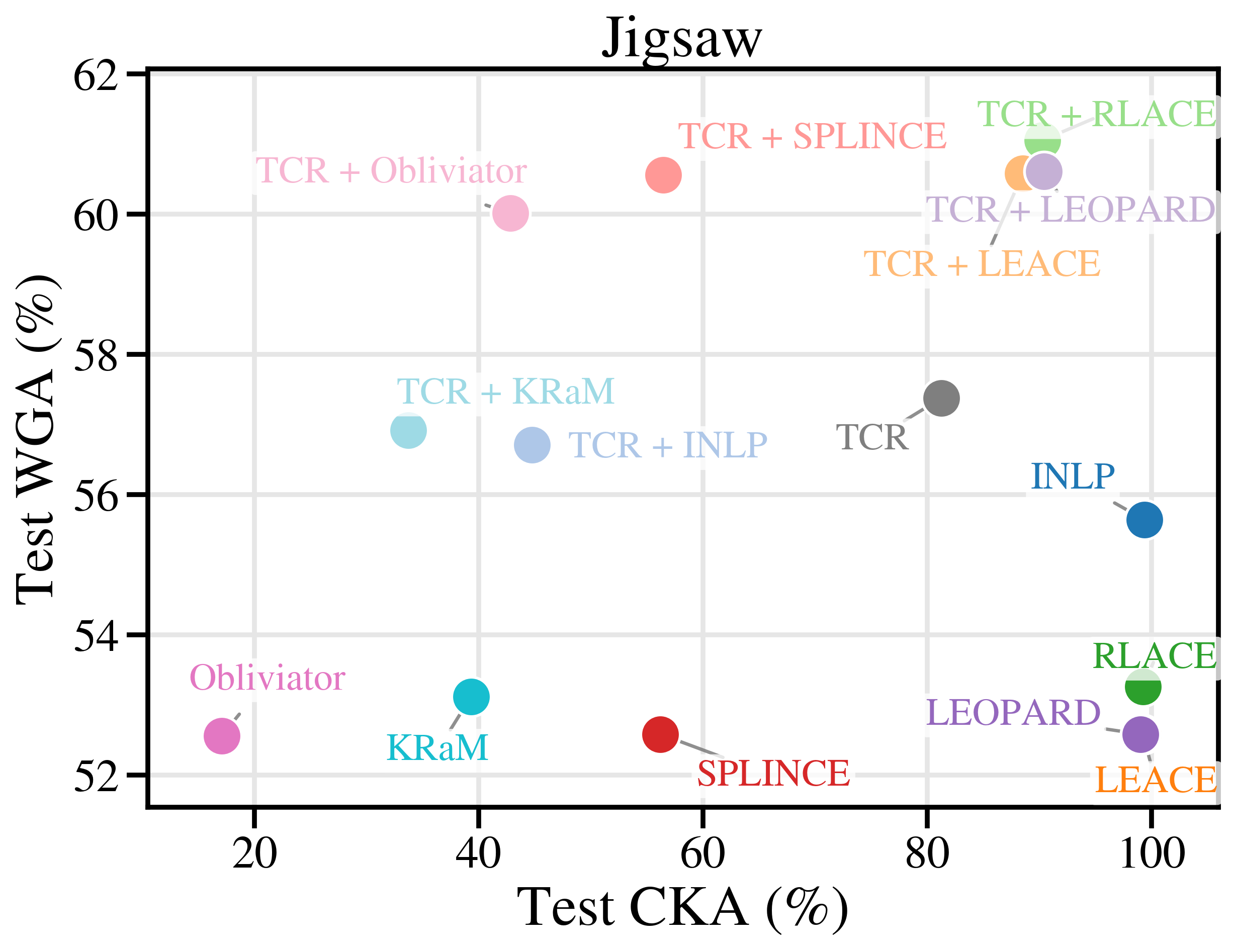}%
        }

    \end{tabular}

    \caption{
Task utility against representational similarity after concept erasure, showing test WGA
against linear CKA between the original and intervened test representations on Waterbirds
(a), Bias in Bios (b), DIAL-Mention (c) and Jigsaw (d). Higher CKA indicates greater
similarity. % to the original representation.
}
    \label{fig_cka}
\end{figure}

Beyond task utility and concept leakage, we evaluate how strongly each intervention modifies
the representation globally. This is important because improved task preservation after erasure
could otherwise result from extensively reorganizing the representation around the specific
task--concept pair. We quantify this effect using linear centered kernel alignment (CKA)
\citep{cka}, a standard measure for comparing neural representations that assesses their
similarity. % of their centered Gram structure.

For a test representation $H\in\mathbb{R}^{N\times d}$ before intervention and the
corresponding representation $Z\in\mathbb{R}^{N\times d'}$ after intervention, let
$H'$ and $Z'$ denote the matrices obtained by centering each feature over the same test
samples. We compute linear CKA as:
\[
    \operatorname{CKA}(H,Z)
    =
    \frac{\|Z'^{\top}H'\|_F^2}
    {\|H'^{\top}H'\|_F\,\|Z'^{\top}Z'\|_F}.
\]
Linear CKA is invariant to isotropic scaling and orthogonal transformations of either
representation, and therefore does not penalize such changes of coordinates, while remaining
sensitive to more general changes in representational geometry \citep{cka}. Values closer to
$1$ indicate greater global similarity between the original and edited representations. Unlike
a learned reconstruction or alignment score, CKA requires no fitted mapping; we therefore
compute it directly between the original and final test representations. For composed
\acronym{}-DIS methods, $Z$ denotes the representation after both disentanglement and the
subsequent erasure method.
% while $\representation$ denotes the representation before disentanglement. 
We compute CKA independently for each run and report its mean
across seeds as a percentage.

\autoref{fig_cka} reports test WGA against the linear CKA between the original and
intervened representations. The results first show that preserving global representation
geometry and preserving task utility are distinct objectives: high CKA does not necessarily
translate into high WGA. On Waterbirds, INLP, LEACE, RLACE and LEOPARD retain between
$85.40\%$ and $86.22\%$ CKA while reaching only $13.71$--$27.88$ WGA. Similarly, on
Jigsaw, these methods retain approximately $99\%$ CKA but obtain only $52.59$--$55.64$
WGA. Thus, concept erasure can leave most of the global representational structure unchanged
while substantially degrading the task-relevant information required by the worst-performing
groups.

Conversely, \acronym{}-DIS often induces a larger representational reorganization while
preserving substantially more task utility. This is particularly pronounced on Waterbirds,
where \acronym{}-DIS variants reach $66.01$--$74.66$ WGA despite lower CKA, and on Bias
in Bios, where the variants based on INLP, LEACE, RLACE, SPLINCE and LEOPARD reach
approximately $81$--$82$ WGA. The same pattern is milder on Jigsaw: combining
\acronym{}-DIS with LEACE, RLACE or LEOPARD reduces CKA from approximately $99\%$ to
$89$--$90\%$, while increasing WGA from $52.59$--$53.27$ to $60.58$--$61.05$.
Hence, the utility gains of disentanglement do not arise from keeping the edited
representation as close as possible to its original geometry; modifying this geometry can
instead facilitate the separation of task-relevant information from the concept being erased.

The amount of representational change also depends strongly on the form of the intervention.
Across datasets, the non-linear baselines KRaM and Obliviator tend to exhibit among the
lowest CKA values, whereas the linear projection-based INLP, LEACE and RLACE generally
remain substantially closer to the original representation. On Waterbirds and Jigsaw, for
instance, the latter retain approximately $85$--$99\%$ CKA despite substantially lower WGA.
At the same time, \acronym{} does not systematically decrease CKA: \acronym{}-DIS
increases both CKA and WGA over Obliviator on Bias in Bios ($19.71\%$ to $42.83\%$ CKA
and $58.18$ to $83.50$ WGA) and Jigsaw ($17.06\%$ to $42.79\%$ CKA and $52.56$ to
$60.01$ WGA), with similar behavior for KRaM on Waterbirds. Direct \acronym{}-ERA also
retains substantial similarity to the original representation, with CKA ranging from
$62.18\%$ on Waterbirds to $81.22\%$ on Jigsaw, while achieving strong WGA across the four
datasets. Overall, the degree of geometric reorganization appears to depend on both the
intervention and the subsequent erasure operator rather than being systematically larger for
\acronym{}.

Finally, CKA measures global representational similarity rather than preservation of all
information in the original representation. A lower CKA therefore indicates a
change in representation geometry, but does not by itself imply that unrelated information
has been discarded. We consequently interpret these results as showing that improvements
from \acronym{} can involve meaningful representational reorganization, while task utility
can remain preserved despite lower global similarity.

%%%%%%%%%%%%%%%%%%%%%%%%%%%%%%%%%%%%%%%%%%%%%%%%%%%%%%%%%%%%%%%%%%%%%%%%%%%%%%%
%%%%%%%%%%%%%%%%%%%%%%%%%%%%%%%%%%%%%%%%%%%%%%%%%%%%%%%%%%%%%%%%%%%%%%%%%%%%%%%
%%%%%%%%%%%%%%%%%%%%%%%%%%%%%%%%%%%%%%%%%%%%%%%%%%%%%%%%%%%%%%%%%%%%%%%%%%%%%%%

%\subsection{Concept Transfer}
%\label{sec_app_exp_transfer}

\iffalse
\section{Limitations}
\acronym{} depends on the chosen concepts, probes and interaction criterion, so different
choices may induce different concept organizations. In particular, our interference score
is operational: it measures the effect of an erasure intervention and should not be
interpreted as causal dependence. The persistence reduction is intentionally lossy and
retains the multiscale connectivity induced by pairwise interactions rather than the full
concept mesh; other abstractions may preserve complementary structure. Our construction
also models pairwise interactions and $0$-dimensional topology, leaving genuinely
higher-order relations for future work. Although persistence retains only $n_C-1$
hierarchy-defining saddles, constructing the complete concept mesh still requires
quadratically many pairwise interaction evaluations, which may limit scalability to very
large concept sets. Finally, our experiments focus primarily on post-hoc edits of selected
representations and mostly linear concept probes; nonlinear information may remain after
linear editing, and extending \acronym{} across layers and larger models remains future work.
\fi

% \iffalse
\section{Limitations}
\label{sec_limitations}
\acronym{} depends on the concepts used to construct it and on the probes and interaction
criterion used to characterize them. Different choices may therefore induce different
concept organizations, and concepts absent from the chosen set are not represented.
In particular, our interference criterion is operational: it measures how erasing one
concept affects the recoverability of another and should not be interpreted as establishing
a causal dependence between them.
The persistence reduction is intentionally lossy. It retains the multiscale connectivity
induced by pairwise interactions while discarding variations of the concept mesh that do
not alter this hierarchy. We do not claim that topology is the unique useful abstraction
of concept organization; alternative reductions or the full concept mesh may preserve
complementary information. Moreover, our current construction relies on pairwise
interactions and $0$-dimensional topology and therefore does not explicitly represent
%genuinely 
higher-order relationships between concepts.
Although the persistence diagram contains only $n_C-1$ hierarchy-defining saddles,
constructing the complete concept mesh requires quadratically many pairwise interaction
evaluations, which may become costly for very large concept sets. Finally, our experiments
focus primarily on 
%selected intermediate 
penultimate representations and mostly linear concept probes.
Nonlinear concept information may remain after linear interventions, and the estimated
concept organization may also change under distribution shift. Extending \acronym{} to
larger concept collections, multiple network layers and richer interaction models provides
%several 
directions for future work.
%\fi

\end{document}